\definecolor{DarkGreen}{rgb}{0.1,0.5,0.1}
\definecolor{DarkRed}{rgb}{0.5,0.1,0.1}
\definecolor{DarkBlue}{rgb}{0.1,0.1,0.5}
\newtheorem{theorem}{Theorem}[section]
\newtheorem*{namedtheorem}{\theoremname}
\newcommand{\theoremname}{testing}
\newtheorem{lemma}[theorem]{Lemma}
\newtheorem{claim}[theorem]{Claim}
\newtheorem{observation}{Observation}
\newtheorem*{question*}{Question}
\theoremstyle{definition}
\newtheorem{definition}[theorem]{Definition}
\newtheorem{defn}[theorem]{Definition}
\newtheorem{remark}[theorem]{Remark}
\theoremstyle{plain}
\newtheorem{Alg}{Algorithm}
\definecolor{DarkGreen}{rgb}{0.1,0.5,0.1}
\definecolor{DarkRed}{rgb}{0.5,0.1,0.1}
\definecolor{DarkBlue}{rgb}{0.1,0.1,0.5}
\newcommand{\ignore}[1]{}
\renewcommand{\Pr}{\mathop{\bf Pr\/}}                    
\newcommand{\E}{\mathop{\bf E\/}}
\newcommand{\Var}{\mathop{\bf Var\/}}
\newcommand{\Cov}{\mathop{\bf Cov\/}}
\newcommand{\empirical}{\widehat}
\renewcommand{\floatc@ruled}[2]{\vspace{2pt}{\@fs@cfont \#1.\:} \#2 \par
 \vspace{1pt}}
\title{Information Theoretic Properties of Markov Random Fields, \\ and their Algorithmic Applications}
\author{Linus Hamilton \thanks{Massachusetts Institute of Technology. Department of Mathematics. Email: {\tt luh@mit.edu}.}
\and
Frederic Koehler \thanks{Massachusetts Institute of Technology. Department of Mathematics. Email: {\tt fkoehler@mit.edu}.}
\and
Ankur Moitra \thanks{
Massachusetts Institute of Technology. Department of Mathematics and the Computer Science and Artificial Intelligence Lab. Email: {\tt moitra@mit.edu}.
his work was supported in part by NSF CAREER Award CCF-1453261, NSF Large CCF-1565235, a David and Lucile Packard Fellowship and an Alfred P. Sloan Fellowship.}}
\begin{document}
\maketitle

\begin{abstract}
Markov random fields area popular model for high-dimensional probability distributions. Over the years, many mathematical, statistical and algorithmic problems on them have been studied. Until recently, the only known algorithms for provably learning them relied on exhaustive search, correlation decay or various incoherence assumptions. Bresler \cite{Bresler} gave an algorithm for learning general Ising models on bounded degree graphs. His approach was based on a structural result about mutual information in Ising models. 

Here we take a more conceptual approach to proving lower bounds on the mutual information through setting up an appropriate zero-sum game. Our proof generalizes well beyond Ising models, to arbitrary Markov random fields with higher order interactions. As an application, we obtain algorithms for learning Markov random fields on bounded degree graphs on $n$ nodes with $r$-order interactions in $n^r$ time and $\log n$ sample complexity. The sample complexity is information theoretically optimal up to the dependence on the maximum degree. The running time is nearly optimal under standard conjectures about the hardness of learning parity with noise. 
\end{abstract}

\thispagestyle{empty}

\newpage

\setcounter{page}{1}

\section{Introduction}

\subsection{Background}

{\em Markov random fields} are a popular model for defining high-dimensional distributions by using a graph to encode conditional dependencies among a collection of random variables. More precisely, the distribution is described by an undirected graph $G = (V, E)$ where to each of the $n$ nodes $u \in V$ we associate a random variable $X_u$ which takes on one of $k_u$ different states. The crucial property is that  the conditional distribution of $X_u$ should only depend on the states of $u$'s neighbors. It turns out that as long as every configuration has positive probability, the distribution can be written as
\begin{equation*}
\Pr(a_1, \cdots a_n) = \exp\left(\sum_{\ell = 1}^r \sum_{i_1 < i_2 < \cdots < i_{\ell} } \theta^{i_1 \cdots i_{\ell}}(a_1, \cdots a_n) - C\right)
\end{equation*}
Here $\theta^{i_1 \cdots i_{\ell}}: [k_{i_1}] \times \ldots \times [k_{i_\ell}] \rightarrow \mathbb{R}$ is a function that takes as input the configuration of states on the nodes $i_1, i_2, \cdots i_\ell$ and is assumed to be zero on non-cliques. These functions are referred to as {\em clique potentials}. In the equation above, $C$ is a constant  that ensures the distribution is normalized and is called the log-partition function. Such distributions are also called {\em Gibbs measures} and arise frequently in statistical physics and have numerous applications in computer vision, computational biology, social networks and signal processing. The {\em Ising model} corresponds to the special case where every node has two possible states and the only non-zero clique potentials correspond to single nodes or to pairs of nodes.

Over the years, many sorts of mathematical, statistical and algorithmic problems have been studied on Markov random fields. Such models first arose in the context of statistical physics where they were used to model systems of interacting particles and predict temperatures at which phase transitions occur \cite{history}. A rich body of work in mathematical physics aims to rigorously understand such phenomena. It is also natural to seek algorithms for sampling from the Gibbs distribution when given its clique potentials. There is a natural Markov chain to do so, and a number of works have identified a critical temperature (in our model this is a part of the clique potentials) above which the Markov chain mixes rapidly and below which it mixes slowly \cite{Martin, MosselWW}. Remarkably in some cases these critical temperatures also demarcate where approximate sampling goes from being easy to being computationally hard \cite{Sly, SlySun}. Finally, various inference problems on Markov random fields lead to graph partitioning problems such as the metric labelling problem \cite{KleinbergT}. 

In this paper, we will be primarily concerned with the {\em structure learning problem}. Given samples from a Markov random field, our goal is to learn the underlying graph $G$ with high probability. The problem of structure learning was initiated by Chow and Liu \cite{ChowL} who gave an algorithm for learning Markov random fields whose underlying graph is a tree by computing the maximum-weight spanning tree where the weight of each edge is equal to the mutual information of the variables at its endpoints. The running time and sample complexity are on the order of $n^2$ and $\log n$ respectively. Since then, a number of works have sought algorithms for more general families of Markov random fields. There have been generalizations to polytrees \cite{Dasgupta}, hypertrees \cite{Srebro} and tree mixtures \cite{treemixtures}. Others works construct the neighborhood by exhaustive search \cite{Ab, Csi, BreslerMosselSly}, impose certain incoherence conditions \cite{Lee, RavikumarWL, Jal} or require that there are no long range correlations (e.g. between nodes at large distance in the underlying graph) \cite{Anandkumar, BreslerMosselSly}. 

In a breakthrough work, Bresler \cite{Bresler} gave a simple greedy algorithm that provably works for any bounded degree Ising model, even if it has long-range correlations. This work used mutual information as its underlying progress measure and for each node it constructed its neighborhood. For a set $S$ of nodes, let $X_S$ denote the random variable representing their joint state. Then the key fact is the following:

\begin{quote}
For every node $u$, for any set $S \subseteq V \setminus \{u\}$  that does not contain all of $u$'s neighbors, there is a node $v \neq u$ which has non-negligible conditional mutual information (conditioned on $X_S$) with $u$.
\end{quote}

\noindent This fact is simultaneously surprising and not surprising. When $S$ contains all the neighbors of $u$, then $X_u$ has zero conditional mutual information (again conditioned on $X_S$) with any other node because $X_u$ only depends on $X_S$. Conversely shouldn't we expect that if $S$ does not contain the entire neighborhood of $u$, that there is some neighbor that has nonzero conditional mutual information with $u$? The difficulty is that the influence of a neighbor on $u$ can be cancelled out indirectly by the other neighbors of $u$. The key fact above tells us that it is impossible for the influences to all cancel out. But is this fact only true for Ising models or is it an instance of a more general phenomenon that holds over any Markov random field? 

\subsection{Our Techniques}

In this work, we give a vast generalization of Bresler's \cite{Bresler} lower bound on the conditional mutual information. We prove that it holds in general Markov random fields with higher order interactions provided that we look at sets of nodes. More precisely we prove, in a Markov random field with non-binary states and order up to $r$ interactions, the following fundamental fact:

\begin{quote}
For every node $u$, for any set $S \subseteq V \setminus \{u\}$that does not contain all of $u$'s neighbors, there is a set $I$ of at most $r-1$ nodes which does not contain $u$ where $X_u$ and $X_I$ have non-negligible conditional mutual information (conditioned on $X_S$).
\end{quote}

\begin{remark} It is necessary to allow $I$ to be a set of size $r-1$. For any integer $r$, there are Markov random fields where for every node $u$ and every set of nodes $I$ of size at most $r-2$, the mutual information between $X_u$ and $X_I$ is zero. 
\end{remark} 

The starting point of our proof is a more conceptual approach to lower bounding the mutual information. Let $\Gamma(u)$ denote the neighbors of $u$. What makes proving such a lower bound challenging is that even though $I(X_u ; X_{\Gamma(u)}) > 0$, this alone is not enough to conclude that $I(X_u ; X_j) > 0$ for some $j \in \Gamma(u)$. Indeed for general distributions we can have that the mutual information between a variable and a set of variables is positive, but every pair of variables has zero mutual information. The distribution produced by a general Markov random field can be quite unwieldy (e.g. it is computationally hard to sample from) so what we need is a technique to tame it to make sure these types of pathologies cannot arise. 

Our approach goes through a two-player game that we call the {\sc GuessingGame} between Alice and Bob. Alice samples a configuration $X_1, X_2, \ldots X_n$ and reveals $I$ and $X_I$ for a randomly chosen set of $u$'s neighbors with $|I| \leq r-1$. Bob's goal is to guess $X_u$ with non-trivial advantage over its marginal distribution. We give an explicit strategy for Bob that achieves positive expected value. Our approach is quite general because we base Bob's guess on the contribution of $X_I$ to the overall clique potentials that $X_u$ participates in, in a way that the expectation over $I$ yields an unbiased estimator of the total clique potential. The fact that the strategy has positive expected value is then immediate, and all that remains is to prove a quantitative lower bound on it using the law of total variance. From here, the intuition is that if the mutual information $I(X_u ; X_{I})$ were zero for all sets $I$ then Bob could not have positive expected value in the {\sc GuessingGame}. This can be made precise and yields a lower bound on the mutual information. We can extend the argument to work with conditional mutual information by exploiting the fact that there are many clique potentials that do not get cancelled out when conditioning.

\subsection{Our Results}

Recall that $\Gamma(u)$ denotes the neighbors of $u$. We require certain conditions (Definition~\ref{def:degenerate}) on the clique potentials to hold, which we call $\alpha, \beta$-non-degeneracy, to ensure that the presence or absence of each hyperedge can be information-theoretically determined from few samples (essentially that no clique potential is too large and no non-zero clique potential is too small). Under this condition, we prove:

\begin{theorem}\label{mic-boundinf}
Fix any node $u$ in an $\alpha, \beta$-non-degenerate Markov random field of bounded degree
  and a subset of the vertices $S$ which does not contain the entire
  neighborhood of $u$. Then taking $I$ uniformly at random from the subsets of the neighbors of $u$
not contained in $S$ of size $s = \min(r-1,|\Gamma(u) \setminus S|)$, 
we have $\E_I[I(X_u;X_I | X_S)]  \ge C$.
\end{theorem}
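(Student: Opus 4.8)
The plan is to instantiate the \textsc{GuessingGame} from the introduction and reduce the theorem to lower bounding a single scalar ``value'' of that game. Fix the realization of $X_S$ and write the conditional law of $X_u$ given all the remaining variables as proportional to $\exp(f_a)$, where $f_a = \sum_{T \ni u}\theta^T(a, X_{T\setminus\{u\}})$ collects every clique potential through $u$ evaluated at $X_u = a$. Once $X_S$ is frozen, the cliques $T$ with $T\setminus\{u\}\subseteq S$ contribute only a constant shift; all variation in the field comes from the \emph{active} cliques $T=\{u\}\cup J$ with $J\cap(\Gamma(u)\setminus S)\neq\emptyset$. Because $S$ omits at least one neighbor of $u$, the $\beta$ side of non-degeneracy guarantees that at least one active clique carries a potential bounded away from $0$, and the $\alpha$ side guarantees that $f$ itself stays bounded.

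Next I would specify Bob's strategy and the value. Given $(I, X_I)$ together with $X_S$, Bob can evaluate $\theta^T(a, X_J)$ exactly when $J\setminus S\subseteq I$; a short count shows $|J\setminus S|\le s$ for every active clique, so each is captured with a positive probability $p_J$ that, by bounded degree and $s\le r-1$, is a constant bounded away from $0$. Bob forms the inverse-probability-weighted estimate $\hat f_a=\sum_{T:\,J\setminus S\subseteq I}p_J^{-1}\,\theta^T(a,X_J)$, which is unbiased for $f_a$ over the random choice of $I$. I define the value to be the covariance, conditional on $X_S$ and then averaged over $X_S$ and $I$, between the state $X_u$ and Bob's kernel $\hat f$; concretely it is $\E[\hat f_{X_u}]$ minus the same expectation taken with an independent copy $X_u'$ drawn from the conditional marginal of $X_u$ given $X_S$. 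Averaging over $I$ first and using unbiasedness replaces $\hat f$ by $f$, turning the value into $\tfrac12\,\E[\langle \mathrm{softmax}(f)-\mathrm{softmax}(f'),\,f-f'\rangle]$ for an independent copy $f'$ of the field vector. Since $\mathrm{softmax}$ is the gradient of the (convex) log-partition function it is a monotone map, so this inner product is pointwise nonnegative: the value is nonnegative, which is the ``immediately positive'' step promised in the introduction.

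The quantitative heart, and what I expect to be the main obstacle, is to bound the value below by a constant depending only on $\alpha,\beta,r$ and the maximum degree. On the bounded range of $f$ the log-partition function is strongly convex transverse to the all-ones direction, so the monotonicity bound upgrades to $\langle\mathrm{softmax}(f)-\mathrm{softmax}(f'),f-f'\rangle\ge\kappa\|f-f'\|_\perp^2$ with $\kappa=\kappa(\alpha)>0$, and the value is therefore at least $\kappa$ times a variance of the field. I would then apply the law of total variance, conditioning on the variables outside a single maximal active clique, to reduce this to the variance contributed by that one fully captured clique potential. This is exactly where allowing $|I|=r-1$ rather than a single neighbor is essential: the cancellation phenomenon flagged in the introduction means a neighbor's influence can be annihilated indirectly, so one must avoid reasoning about projections of $\theta^T$. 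Because Bob captures the \emph{entire} clique, its non-degenerate variability (the $\beta$ bound) cannot be cancelled by the other potentials, and the residual variance stays bounded below. Isolating such a top-order term whose variability survives both the conditioning and the averaging over $I$ is the delicate point.

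Finally I would convert the value bound into the stated inequality. Boundedness of the potentials, bounded degree, and the constancy of the weights $p_J^{-1}$ make $\hat f$ a uniformly bounded kernel in $(X_u, I, X_I)$, so for each fixed $I$ the value is the integral of a bounded kernel against the gap between the conditional joint law of $(X_u, X_I)$ given $X_S$ and the product of the conditional marginals; hence it is at most $\|\hat f\|_\infty$ times the total-variation distance between these laws. Averaging over $I$ and $X_S$, Pinsker's inequality together with Jensen's inequality yields $\mathrm{value}\le\|\hat f\|_\infty\sqrt{\tfrac12\,\E_I[I(X_u;X_I\mid X_S)]}$. Combining this with the constant lower bound on the value from the previous step gives $\E_I[I(X_u;X_I\mid X_S)]\ge C$ for a constant $C=C(\alpha,\beta,r,\text{degree})>0$, which is the theorem. (Equivalently, bundling $(I,X_I)$ into one revealed variable $Y$ and using independence of $I$ gives $I(X_u;Y\mid X_S)=\E_I[I(X_u;X_I\mid X_S)]$, so the game value controls exactly the target quantity.)
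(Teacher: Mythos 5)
Your overall architecture is the same as the paper's: the \textsc{GuessingGame}, Bob's inverse-probability-weighted unbiased estimator of the local field, positivity of the value via monotonicity of $\exp$ (the paper's $(a-b)(e^a-e^b)\ge e^{-2\gamma}(a-b)^2$ step is your ``strong convexity of the log-partition function''), a law-of-total-variance reduction to the variance contributed by one maximal hyperedge, and the conversion to conditional mutual information via a bounded kernel, total variation, Pinsker and Jensen (the paper's Lemmas~\ref{lemma:gametoinf1}, \ref{lemma:gametoinf2} and \ref{nu-lemma}). So the route is right; the problem is that the step you yourself call ``the delicate point'' is left unproven, and it is precisely the mathematical content of the theorem rather than a deferrable detail.

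Two concrete gaps. First, even with $S=\emptyset$, after conditioning on the variables outside a maximal $\alpha$-nonvanishing hyperedge $J\ni u$, the residual field on $J$ is not $\theta^{uJ}$ alone: it is $\theta^{uJ}$ plus all lower-order potentials supported on subsets of $J$, partially evaluated. Your assertion that ``because Bob captures the entire clique, its non-degenerate variability cannot be cancelled by the other potentials'' is exactly what must be proved; the paper proves it as Lemmas~\ref{lemma:noncancellation-support} and \ref{lemma:noncancellation} (a centered tensor with an entry of magnitude $\ge\alpha$, summed with arbitrary centered lower-order tensors, retains an entry of magnitude $\ge\alpha/s^s$), via an induction on the order of the fixed indices, and the loss $r^r$ propagates into the constant $C$. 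Second, and more seriously, your claim that for each fixed realization of $X_S$ ``at least one active clique carries a potential bounded away from $0$'' is false as stated: once $X_S$ is frozen, the effective potential on a surviving hyperedge is the \emph{sum of partial evaluations of all its preimages} (original hyperedges differing only inside $S$), and for a given $x_S$ these can cancel. The paper's Lemma~\ref{mic-helper} shows only that there \emph{exists} a setting of $X_S$, occurring with probability at least $\delta^{d}$, under which some maximal surviving hyperedge through $u$ is $\alpha/r^r$-nonvanishing --- this requires choosing the right preimage (one maximal in the original hypergraph, hence $\alpha$-nonvanishing) and again invoking the non-cancellation lemma, and it is the source of the extra $\delta^{d}$ factor in $C'(\gamma,K,\alpha)$ relative to the unconditional bound. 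Without these two ingredients your lower bound on the game value, and hence on $\E_I[I(X_u;X_I\mid X_S)]$, does not go through. (A minor slip: you swap $\alpha$ and $\beta$ relative to Definition~\ref{def:degenerate}; $\alpha$ is the lower bound on entries of maximal hyperedges and $\beta$ the upper bound on all entries.)
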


\noindent See Theorem~\ref{mic-bound} which gives the precise dependence of $C$ on all of the constants, including $\alpha$, $\beta$, the maximum degree $D$, the order of the interactions $r$ and the upper bound $K$ on the number of states of each node. We remark that $C$ is exponentially small in $D$, $r$ and $\beta$ and there are examples where this dependence is necessary \cite{SanthanamW}. 

Next we apply our structural result within Bresler's \cite{Bresler} greedy framework for structure learning to obtain our main algorithmic result.  We obtain an algorithm for learning Markov random fields on bounded degree graphs with a logarithmic number of samples, which is information-theoretically optimal \cite{SanthanamW}. More precisely we prove:

\begin{theorem}\label{main-alginf}
Fix any $\alpha, \beta$-non-degenerate Markov random field on $n$ nodes with $r$-order interactions and bounded degree. There is an algorithm for learning $G$ that succeeds with high probability given $C' \log n$ samples and runs in time polynomial in $n^r$. 
\end{theorem}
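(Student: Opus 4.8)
The plan is to plug the structural guarantee of Theorem~\ref{mic-boundinf} into Bresler's~\cite{Bresler} greedy neighborhood-recovery framework and run it separately for every node $u$. The algorithm learns $\Gamma(u)$ in two phases. In the \emph{growth phase} we maintain a set $S$, initialized to $\emptyset$, and repeatedly search over all candidate sets $I$ with $u \notin I$ and $|I| \le r-1$ for one whose empirical conditional mutual information $\widehat{I}(X_u ; X_I \mid X_S)$ exceeds a threshold $\tau$; we add such an $I$ to $S$ and repeat, stopping when no candidate clears the threshold. In the \emph{pruning phase} we remove from $S$ every vertex $v$ for which $\widehat{I}(X_u ; X_v \mid X_{S \setminus \{v\}})$ falls below a second threshold $\tau'$. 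We will take $\tau, \tau' \approx C/2$, where $C$ is the constant of Theorem~\ref{mic-boundinf}.

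Correctness of the growth phase rests on two facts. First, as long as $\Gamma(u) \not\subseteq S$, Theorem~\ref{mic-boundinf} gives $\E_I[I(X_u; X_I \mid X_S)] \ge C$ over a uniformly random admissible $I$, so some admissible set has population conditional mutual information at least $C$; once the empirical estimates are accurate to within $\pm\,C/4$ (achieved below) this set clears the threshold and the loop makes progress, so at termination $S \supseteq \Gamma(u)$. Second, each accepted set strictly decreases the conditional entropy of $X_u$ via the identity $H(X_u \mid X_S) - H(X_u \mid X_{S \cup I}) = I(X_u; X_I \mid X_S) \ge \tau - C/4$, and since $0 \le H(X_u \mid X_S) \le \log K$ throughout, the number of iterations is at most $O(\log K / C)$. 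As each iteration adds at most $r-1$ vertices, the growth phase outputs a superset of $\Gamma(u)$ whose size is bounded in terms of $r, K, D$ and the non-degeneracy constants alone.

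The pruning phase uses only singletons, and this suffices precisely because $S$ already contains $\Gamma(u)$. If $v \notin \Gamma(u)$ then $S \setminus \{v\} \supseteq \Gamma(u)$, so conditioned on $X_{S \setminus \{v\}}$ the variable $X_u$ depends only on its neighbors and is independent of $X_v$; hence $I(X_u; X_v \mid X_{S\setminus\{v\}}) = 0$ and $v$ is correctly discarded. If instead $v \in \Gamma(u)$, then $S \setminus \{v\}$ omits exactly the one neighbor $v$, so $\Gamma(u) \setminus (S \setminus \{v\}) = \{v\}$, and applying Theorem~\ref{mic-boundinf} with the conditioning set $S \setminus \{v\}$ forces $s = 1$ and the random $I$ to be the singleton $\{v\}$, giving $I(X_u; X_v \mid X_{S \setminus \{v\}}) \ge C$; hence $v$ is retained. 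With accurate estimates the surviving set is exactly $\Gamma(u)$, and repeating over all $u$ recovers $G$.

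It remains to control the statistics and the running time, which is where the real work lies. Every quantity above is an empirical conditional mutual information of $X_u$ and a set $X_I$ given $X_S$ with $|I|$ and $|S|$ bounded, so each plug-in estimate concerns a joint distribution on a bounded number $K^{O(1)}$ of configurations. The main obstacle is to show that all of these estimates are simultaneously accurate to $\pm\,C/4$ from only $C' \log n$ samples: there are at most $n \cdot \poly(n^{r})$ triples $(u, I, S)$ to handle, and one must argue that the plug-in estimator concentrates even when some conditioning configuration $x_S$ is rare. Here $\alpha,\beta$-non-degeneracy together with bounded degree lower-bounds the relevant conditional probabilities, so each configuration of $X_S$ appears $\Omega(m/K^{|S|})$ times among $m$ samples; a Hoeffding bound per configuration, a union bound over all triples, and the Lipschitz dependence of mutual information on the empirical distribution then yield uniform accuracy once $m = \Theta(\log n)$, with the hidden constant absorbing the dependence on $r, K, D, \alpha, \beta$. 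Finally, the running time is dominated by scanning the $O(n^{r-1})$ candidate sets across $O(\log K / C)$ growth iterations for each of the $n$ nodes, which is $\poly(n^r)$.
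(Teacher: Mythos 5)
Your proposal is correct, and it uses the same greedy framework as the paper: grow $S$ by candidate sets of size at most $r-1$ that clear a threshold of order $C$, bound the number of growth iterations by $H(X_u) \le \log K$, prune with singletons via the Markov property together with Theorem~\ref{mic-boundinf} applied with $s=1$, and obtain uniform accuracy of all estimates from Hoeffding plus a union bound (this is exactly the structure of the paper's Lemmas~\ref{lemma:mbound}, \ref{lemma:mi-increase}, \ref{lemma:greedy-neighborhood} and \ref{lemma:prune-neighborhood}). The genuine difference is the test statistic. You threshold the plug-in estimate of conditional mutual information itself, so you must argue that an estimator involving logarithms of conditional probabilities concentrates; this is legitimate here only because every configuration of a bounded set of nodes has probability at least $\delta^{|S|}$ by \eqref{gammadelta}, which makes the CMI a Lipschitz function of the empirical joint distribution on the relevant domain. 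The paper never estimates an information quantity: it introduces the total-variation-style proxy $\nu_{u,I|S}$, shows $\nu_{u,I|S} \le \sqrt{\tfrac{1}{2} I(X_u;X_I|X_S)}$ (Lemma~\ref{nu-lemma}), and, crucially, its structural result (Theorem~\ref{mic-bound}) lower-bounds $\E_I[\nu_{u,I|S}] \ge C'$ directly rather than merely the CMI, so thresholding $\widehat{\nu}$ suffices and its concentration needs only triangle inequalities on empirical probabilities (Lemma~\ref{lemma:nasty}). What your route buys: per-iteration progress linear in the threshold (no Pinsker loss), hence a superset of size $O(\log K / C)$ rather than the paper's $L = O(\log K/\tau^2)$. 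What the paper's route buys: a statistic whose concentration analysis is entirely elementary. Two minor slips in your sketch, neither fatal: a configuration of $X_S$ has probability at least $\delta^{|S|}$, which is smaller than $K^{-|S|}$, so the occupancy count is $\Omega(m\,\delta^{|S|})$ rather than $\Omega(m/K^{|S|})$; and your union bound must range over all conditioning sets $S$ up to the superset size bound (call it $L$), not just sets of size $O(r)$, giving $n^{O(L+r)}$ events---still compatible with $m = \Theta(\log n)$ since $L$ is a constant, but this is where the doubly-exponential dependence on $D$ in the constant $C'$ of Theorem~\ref{theorem:main-result-formal} enters.
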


\begin{remark}
An $r-1$-sparse parity with noise is a Markov random field with order $r$ interactions. This means if we could improve the running time to $n^{o(r)}$ this would yield the first $n^{o(k)}$ algorithm for learning $k$-sparse parities with noise, which is a long-standing open question. The best known algorithm of Valiant \cite{Valiant} runs in time $n^{0.8k}$. 
\end{remark}

\noindent See Theorem~\ref{theorem:main-result-formal} for a more precise statement. The constant $C'$ depends doubly exponentially on $D$. In the special case of Ising models with no external field, Vuffray et al. \cite{RISE} gave an algorithm based on convex programming that reduces the dependence on $D$ to singly exponential. In greedy approaches based on mutual information like the one we consider here, doubly-exponential dependence on $D$ seems intrinsic.  As in Bresler's \cite{Bresler} work, we construct a superset of the neighborhood that contains roughly $1/C$ nodes where $C$ comes from Theorem~\ref{mic-boundinf}. Recall that $C$ is exponentially small in $D$. Then to accurately estimate conditional mutual information when conditioning on the states of this many nodes, we need doubly exponential in $D$ many samples. 

However, there is a distinct advantage to greedy based methods. Since we only ever need to estimate the conditional mutual information on a constant sized sets of nodes and when conditioning on a constant sized set of other nodes, we can perform structure learning with partial observations. More precisely, if for every sample from a Markov random field, we are allowed to specify a set $J$ of size at most a constant $C''$ where all we observe is $X_J$ we can still learn the structure of the Markov random field. We call such queries $C''$-bounded queries. 

\begin{theorem}\label{main-alg2inf}
Fix any $\alpha, \beta$-non-degenerate Markov random field on $n$ nodes with $r$-order interactions and bounded degree. There is an algorithm for learning $G$ with $C''$-bounded queries that succeeds with high probability given $C' \log n$ samples and runs in time polynomial in $n^r$. 
\end{theorem}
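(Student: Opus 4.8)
The plan is to observe that the greedy algorithm of Theorem~\ref{main-alginf} is entirely \emph{local}: at no point does it inspect more than a constant number of coordinates of any sample. In Bresler's framework the algorithm builds, for each node $u$, a candidate neighborhood $S$ by repeatedly estimating conditional mutual informations of the form $I(X_u; X_I \mid X_S)$ with $|I| \le r-1$, and it then prunes $S$ by estimating similar quantities. The analysis behind Theorem~\ref{main-alginf} guarantees that the conditioning set $S$ never grows beyond a constant number of nodes, of order $r/C$: each time a set is added the conditional mutual information added is at least the threshold $\approx C/2$, so $I(X_u; X_S)$ increases by a fixed amount, and this quantity is capped by $H(X_u) \le \log K$. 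Hence every conditional mutual information the algorithm ever needs is a function only of the joint distribution of $X_J$ for the window $J = \{u\} \cup I \cup S$, whose size is at most $C'' := O(r/C)$, a constant depending only on $D, r, K, \alpha, \beta$.

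First I would fix this value of $C''$ and note that each quantity $I(X_u; X_I \mid X_S)$ is determined by the empirical joint distribution of $X_J$, which can be formed directly from $C''$-bounded queries requesting exactly the coordinates in $J$. The same concentration argument used for Theorem~\ref{main-alginf} then shows that $C' \log n$ such queries suffice to estimate this single conditional mutual information to within accuracy $\pm C/4$ except with probability $n^{-\Omega(C')}$: since $X_J$ ranges over at most $K^{C''}$ states, a constant, the empirical distribution concentrates after a number of samples independent of $n$, and the $\log n$ factor is spent driving down the failure probability.

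Then I would run the algorithm of Theorem~\ref{main-alginf} verbatim, with every conditional-mutual-information estimate replaced by the corresponding batch of $C''$-bounded queries on the window $J$. The set of tuples $(u, I, S)$ that can ever arise has size $\mathrm{poly}(n^r)$ — at most $n$ choices of $u$, $O(n^{r-1})$ choices of $I$ per step, $O(r/C)$ grow steps, and a bounded conditioning set — so a union bound over all of them, taking $C'$ large relative to the exponent of this polynomial, guarantees that \emph{every} estimate is simultaneously accurate with high probability. On this event the bounded-query algorithm makes exactly the same decisions as the full-sample one and hence outputs $G$ correctly. Thus $C' \log n$ queries suffice for each individual estimate, the total query complexity is $\mathrm{poly}(n^r) \cdot \log n$, and the running time is $\mathrm{poly}(n^r)$, matching Theorem~\ref{main-alginf}.

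The step I expect to require the most care is not any single estimate but the bookkeeping that legitimizes the union bound despite the adaptivity of the greedy search: because a bounded query reveals only one window $X_J$, samples can no longer be shared across different conditional mutual informations as they were in the full-sample setting, so one must argue that it suffices to obtain fresh accurate estimates for each of the polynomially many \emph{potential} windows and then condition on the single high-probability event that all are good. The only structural input beyond Theorem~\ref{main-alginf} is the uniform bound $O(r/C)$ on the size of $S$ throughout both the growing and pruning phases, which is precisely what keeps $C''$ — and hence the query width — constant.
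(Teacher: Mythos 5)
Your proposal is correct and follows essentially the same route as the paper's proof: replace each estimate $\widehat{\nu}_{u,I\mid S}$ with a fresh batch of bounded queries on the window $\{u\}\cup I\cup S$, observe that this window has constant size because the greedy set $S$ never exceeds $L$ nodes, and union bound over the polynomially many estimates so that the bounded-query algorithm makes the same decisions as the full-sample one. The only inaccuracies are in constants, not structure: the per-step mutual-information increase is \emph{quadratic} in the threshold (Pinsker), so $|S| = O(r\log K/C^2)$ and the query width is $L+r$ rather than $O(r/C)$, and your union bound should run either over all $O(n^{L+r})$ potential windows (not $\mathrm{poly}(n^r)$ with exponent independent of the other constants) or, as the paper does, over the at most $Lrn^{r}$ queries actually made, using the freshness of each batch to handle adaptivity -- neither change affects the theorem as stated.
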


\noindent See Theorem~\ref{theorem:main-theorem-bounded-queries} for a more precise statement. This natural scenario arises when it is too expensive to obtain a sample where the states of all nodes are known. The only other results we are aware of for learning with bounded queries work only for Gaussian graphical models \cite{Dasarathy}. We also consider a model where the state of each node is erased (i.e. we observe a `?' instead of its state) independently with some fixed probability $p$. See Theorem~\ref{theorem:main-result-random-erasure} for a precise statement. {\em The fact that we can straightforwardly obtain algorithms for these alternative settings demonstrates the flexibility of greedy, information-theoretic approaches to learning. }

In concurrent and independent work and using a different approach, Klivans and Meka \cite{KlivansM} gave an algorithm for learning Markov random fields with $r$-order interactions and maximum degree $D$ with a non-degeneracy assumption corresponding to a bound on the $\ell_1$-norm of the derivatives of the clique potentials.  Under our non-degeneracy assumptions, their algorithm runs in time $n^r$ and has sample complexity $ 2^{D^r} ( nr  )^r $ and under related but stronger assumptions than we use here, their sample complexity improves to $ 2^{D^r} r^r \log n$.

\section{Preliminaries}

For reference, all fundamental parameters of the graphical model
(max degree, etc.) are defined in the next two
subsections. In terms of these fundamental parameters, we define
additional parameters $\gamma$ and $\delta$ in (\ref{gammadelta}), $C(\gamma, K, \alpha)$ and $C'(\gamma,K,\alpha)$ in Theorem \ref{mi-bound}
and Theorem \ref{mic-bound} respectively, and $\tau$
in (\ref{tau}) and $L$ in (\ref{L}). 

\subsection{Markov Random Fields and the Canonical Form}

We study the problem of structure learning for Markov random fields. Formally, a Markov random field is specified by a hypergraph $\mathcal{H} = (V, H)$ where each hyperedge $h \in H$ is a set of at most $r$ vertices and $V = [n]$. To each node $i$, we associate a random variable $X_i$ which can take on one of $k_i$ different states/spins/colors so that $X_i \in [k_i]$. Let $K$ be an upper bound on the maximum number of states of any node. To each hyperedge $h = (i_1, i_2, \cdots i_\ell)$ we associate an $\ell$-order tensor $\theta^{i_1 i_2 \cdots i_{\ell}}$ with dimensions $k_{i_1} \times \cdots k_{i_{\ell}}$ which represents the clique interaction on these nodes. When $(i_1, i_2, \cdots i_\ell)$ are not a hyperedge in $\mathcal{H}$ we define $\theta^{i_1 i_2 \cdots i_{\ell}}$ to be the zero tensor.

The joint probability of the model being in state $X = (X_1, \ldots, X_n)$ is given by
\begin{equation}\label{pmf}
\Pr(X = x) = \exp\left(\sum_{\ell = 1}^r \sum_{i_1 < i_2 < \cdots < i_{\ell}} \theta^{i_1 \cdots i_{\ell}}(x_{i_1}, \ldots, x_{i_{\ell}}) - C\right)
\end{equation}
where $C$ is a constant chosen so that the sum over all configurations $x$ of $\Pr(x)$ is one. Equivalently, $Z = \exp(C)$ is the partition function. For notational
convenience, even when $i_1, \ldots, i_{\ell}$ are not sorted in increasing order, we
define
$\theta^{i_1 \cdots i_{\ell}}(a_1, \ldots, a_{\ell}) = \theta^{i'_1
  \cdots i'_{\ell}}(a'_1, \ldots, a'_{\ell})$ where the
$i'_1, \ldots, i'_{\ell}$ are the sorted version of
$i_1, \ldots, i_{\ell}$ and the $a'_1, \ldots, a'_{\ell}$ are the
corresponding copies of $a_1, \ldots, a_{\ell}$.

The parameterization above is not unique. It will be helpful to put it in a normal form as below. A \emph{tensor fiber} is the vector given by fixing all of the indices of the tensor except for one; this generalizes the notion of
row/column in matrices. For example for any $1 \le m \le \ell$, $i_1 < \ldots < i_m < \ldots i_\ell$ and $a_1, \ldots, a_{m -1}, a_{m + 1}, \ldots a_{\ell}$ fixed, the corresponding tensor fiber is the set of elements $\theta^{i_1 \cdots i_{\ell}}(a_1, \ldots, a_m, \ldots, a_{\ell})$ where $a_m$ ranges from $1$ to $k_{i_m}$. 

\begin{definition}
We say that the weights $\theta$ are in
\emph{canonical form}\footnote{This is the same as writing the log of the probability mass function according to the \emph{Efron-Stein decomposition} with respect to the uniform measure on colors; this decomposition is known to be unique. See e.g. Chapter 8 of \cite{boolean-functions}}
  if for every tensor $\theta^{i_1 \cdots i_{\ell}}$, the sum over all of the
\emph{tensor fibers} of $\theta^{i_1 \cdots i_{\ell}}$ is zero.
\end{definition}

Moreover we say that a tensor with the property that the sum over all tensor fibers is zero is a \emph{centered tensor}.  Hence having
a Markov random field in canonical form just means that all of the tensors corresponding to its clique potentials are centered. Next we prove that every Markov random field can be put in canonical form:

\begin{claim}
Every Markov random field can be put in canonical form
\end{claim}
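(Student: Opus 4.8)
The plan is to realize the canonical form as the Efron--Stein (ANOVA) decomposition of the log-density and to exhibit it explicitly via averaging operators. Write $f(x) = \sum_{\ell=1}^r \sum_{i_1 < \cdots < i_\ell} \theta^{i_1 \cdots i_\ell}(x_{i_1}, \ldots, x_{i_\ell})$, so that $\Pr(X = x) = \exp(f(x) - C)$. The distribution depends on the potentials only through $f$, and only up to an additive constant (which is absorbed into the log-partition function $C$). Hence it suffices to re-express $f$, modulo constants, as a sum of centered tensors supported on the hyperedges.

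For each coordinate $i$, let $A_i$ denote the operator that averages a function uniformly over its $i$-th argument, i.e.\ $(A_i g)(x) = \frac{1}{k_i} \sum_{a=1}^{k_i} g(x_1, \ldots, x_{i-1}, a, x_{i+1}, \ldots, x_n)$. These operators act on disjoint coordinates, so they commute, and each is idempotent ($A_i^2 = A_i$). For $S \subseteq [n]$ I would define
\[
f^{=S} \;=\; \Bigl(\prod_{i \in S}(I - A_i)\Bigr)\Bigl(\prod_{j \notin S} A_j\Bigr) f .
\]
Then I would record three facts. First, expanding $\prod_i \bigl((I - A_i) + A_i\bigr) = I$ gives the telescoping identity $\sum_{S \subseteq [n]} f^{=S} = f$. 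Second, because $A_j$ is applied for every $j \notin S$, the function $f^{=S}$ depends only on the coordinates in $S$, so it may be viewed as an $|S|$-tensor on those coordinates. Third, for any $i \in S$ we have $A_i f^{=S} = 0$, since $A_i (I - A_i) = A_i - A_i^2 = 0$ and the operators commute; equivalently the sum of every tensor fiber of $f^{=S}$ in direction $i$ vanishes, so $f^{=S}$ is centered.

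Setting the new clique potential on a set $S$ with $|S| \ge 1$ to be the centered tensor $f^{=S}$, and folding the constant $f^{=\emptyset}$ into $C$, therefore reproduces the same distribution with all potentials centered, which is exactly canonical form. The one point that deserves care --- and which I expect to be the main obstacle --- is that this re-parameterization does not introduce interactions on non-hyperedges. To see that $f^{=S} = 0$ unless $S$ is contained in some hyperedge, I would apply the decomposition termwise: for a single potential $g = \theta^h$, which depends only on the coordinates of $h$, any $i \in S \setminus h$ satisfies $A_i g = g$ and hence $(I - A_i) g = 0$, forcing the corresponding component of $g$ to vanish. By linearity $f^{=S} = \sum_{h \supseteq S} (\theta^h)^{=S}$, so $f^{=S}$ is nonzero only when $S$ lies inside a hyperedge; in particular $f^{=S} = 0$ whenever $|S| > r$. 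Thus the canonical form is supported on (subsets of) the original cliques and defines the same Markov random field, completing the argument.
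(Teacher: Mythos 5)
Your proof is correct, but it takes a genuinely different route from the paper's. The paper recenters the given tensors \emph{in place}: whenever some tensor fiber of a potential $\theta^{i_1\cdots i_\ell}$ has nonzero sum $s$, it subtracts $s/k_{i_m}$ from each entry of that fiber and adds $s$ to the lower-order potential $\theta^{i_{\sim m}}$, iterating from the highest-order tensors down to the constant $C$; this is an elementary, greedy fix-up argument that never leaves the original parameterization. You instead construct the canonical form in one shot as the Efron--Stein/ANOVA decomposition, via the commuting idempotent averaging operators $A_i$ and the projections $f^{=S}=\bigl(\prod_{i\in S}(I-A_i)\bigr)\bigl(\prod_{j\notin S}A_j\bigr)f$, verifying the telescoping identity, the dependence of $f^{=S}$ only on coordinates in $S$, centering via $A_i(I-A_i)=0$, and --- the step that genuinely needs care, which you handle correctly --- the support property $f^{=S}=\sum_{h\supseteq S}(\theta^h)^{=S}$ by termwise linearity, so no interactions appear outside (subsets of) the original hyperedges and none of order exceeding $r$. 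In effect you have proved the assertion the paper relegates to a footnote (that canonical form \emph{is} the Efron--Stein decomposition of the log-density with respect to the uniform measure). What your approach buys is a closed-form expression for the canonical potentials, from which centering, support, and uniqueness of the decomposition all fall out of the projection formalism at once; what the paper's approach buys is a shorter, more elementary argument that requires no operator machinery and makes clear how to repair an arbitrary given parameterization by local moves. Both yield the same conclusion, and both share the (harmless) feature that the new potentials may live on proper subsets of the original hyperedges, which does not change the underlying graph $G$.
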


\begin{proof}
We will recenter the tensors one by one without changing the law in \eqref{pmf}. 
Starting with an arbitrary parameterization, observe that if 
the sum along some tensor fiber is $s \ne 0$, we can subtract $s/k_{i_m}$ from each of the entries in
the tensor fiber, so the sum over the tensor fiber is now zero,  and add $s$ to $\theta^{i_{\sim m}}(a_{\sim m})$ 
without changing the law of $X$ in \eqref{pmf}. Here $i_{\sim m}$ is our notation for $i_1, \ldots, i_{m - 1}, i_{m + 1}, \ldots i_{\ell}$.  
By iterating this process from the tensors representing
the highest-order interactions down to the tensors representing the lowest-order interactions\footnote{We treat
$C$ as the lowest order interaction, so when we are subtracting from the $1$-tensors (vectors) $\theta^i$ to recenter them,
we add the corresponding amount to $C$.}, we 
obtain the desired canonical form. 
\end{proof}

\subsection{Non-Degeneracy}

We let $G = (V, E)$ be the graph we obtain from $\mathcal{H}$ by replacing every hyperedge with a clique. Let $d_i$ denote the degree of $i$ in $G$ and let $D$ be a bound on the maximum degree. Let $\Gamma(i)$ denote the neighborhood of $i$. Then as usual $G$ encodes the independence properties of the Markov random field. Our goal is to recover the structure of $G$ with high probability. In order to accomplish this, we will need to ensure that edges and hyperedges are non-degenerate. 

\begin{definition}
We say that a hyperedge $h$ is maximal if no other hyperedge of strictly larger size contains $h$. 
\end{definition}

Informally, we will require that every edge in $G$ is contained in some non-zero hyperedge, that all maximal hyperedges have at least one parameter bounded away from zero and that no entries are too large. More formally:

\begin{definition}\label{def:degenerate}
We say that a Markov random field is $\alpha$,$\beta$-non-degenerate if 
\begin{itemize}

\item[(a)] Every edge $(i, j)$ in the graph $G$ is contained in some hyperedge $h \in H$ where the corresponding tensor is non-zero.

\item[(b)] Every maximal hyperedge $h \in H$ has at least one entry lower
bounded by $\alpha$ in absolute value.

\item[(c)] Every entry of $\theta^{i_1 i_2 \cdots i_{\ell}}$ is
upper bounded by a constant $\beta$ in absolute value.

\end{itemize}
\end{definition}

\noindent We will refer to a hyperedge $h$ with an entry lower bounded by $\alpha$ in absolute value as $\alpha$-\emph{nonvanishing}. Each of the above non-degeneracy conditions is imposed in order to make learning $G$ information-theoretically possible. If an edge $(i, j)$ were not contained in any hyperedge with a non-zero tensor then we could remove the edge and not change the law in \eqref{pmf}. If a hyperedge contains only entries that are arbitrarily close to zero, we cannot hope to learn that it is there. We require $\alpha$-nonvanishing just for maximal hyperedges so that it is still possible to learn $G$. Finally if we did not have an upper bound on the absolute value of the entries, the probabilities in \eqref{pmf} could become arbitrarily skewed and there could be nodes $i$ where $X_i$ only ever takes on a single value.

\subsection{Bounds on Conditional Probabilities}\label{sec:bounds}

First we review properties of the conditional probabilities in a Markov random field as well as introduce some convenient notation which we will use later on. Fix a node $u$ and its neighborhood $U = \Gamma(u)$. Then for any  $R \in [k_u]$ we have
\begin{equation}
 P(X_u = R | X_U) = \frac{\exp(\mathcal{E}_{u,R}^X)}{\sum_{B = 1}^{k_u} \exp(\mathcal{E}_{u,B}^X)} \label{local-probability}
\end{equation}
where we define
\[ \mathcal{E}_{u,R}^X = \sum_{\ell = 1}^r \sum_{i_2 < \cdots < i_{\ell}} \theta^{u i_2 \cdots i_{\ell}}(R, X_{i_2}, \cdots, X_{i_\ell}) \]
and $i_2, \ldots, i_{\ell}$ range over elements of the neighborhood $U$; when $\ell = 1$ the inner sum is just $\theta^u(R)$.
To see that the above is true, first condition on $X_{\sim u}$, and see that the probability for a certain $X_u$ is proportional to $\exp(\mathcal{E}_{u,R}^X)$, which gives the right hand side of \eqref{local-probability}. Then apply the tower property for conditional probabilities.

Therefore if we define (where $|T|_{max}$ denotes the maximum entry of a tensor $T$)
\begin{equation}\label{gammadelta}
  \gamma := \sup_{u} \sum_{\ell = 1}^r \sum_{i_2 < \cdots < i_{\ell}} |\theta^{u i_2 \cdots i_{\ell}}|_{max} \le \beta \sum_{\ell =1}^{r}{D \choose \ell - 1}, \qquad \delta := \frac{1}{K} \exp(-2\gamma)
\end{equation}
then for any $R$
\begin{equation}
 P(X_u = R | X_U ) \ge \frac{\exp(-\gamma)}{K \exp(\gamma)} = \frac{1}{K} \exp(-2\gamma) = \delta
\end{equation}
Observe that if we pick any node
$i$ and consider the new Markov random field given by conditioning on a fixed
value of $X_i$, then the value of $\gamma$ for the new Markov random field is non-increasing. 

\subsection{Lower Bounds for Conditional Mutual Information}

As in Bresler's work on learning Ising models \cite{Bresler}, certain information theoretic quantities will play a crucial role as a progress measure in our algorithms. Specifically, we will use the functional
\[ \nu_{u,I| S} := \E_{R,G}\Big [\E_{X_S}\Big [\Big |\Pr(X_u = R,X_I = G | X_S) - \Pr(X_u = R | X_S)\Pr(X_I = G | X_S)\Big |\Big ]\Big ] \]
where $R$ is a state drawn uniformly at random from $[k_u]$, uniformly at random and $G$ is an $|I|$-tuple
of states drawn independently uniformly at random from $[k_{i_1}] \times [k_{i_2}] \times \ldots \times [k_{i_{|I|}}]$ where $I = (i_1, i_2, \ldots i_{|I|})$. 
This will be used as a {\em proxy} for conditional mutual information which can be efficiently
estimated from samples. The following lemma is a version of Lemma $5.1$ in \cite{Bresler} that works over non-binary alphabets. 
\begin{lemma}\label{nu-lemma}
Fix a set of nodes $S$. Fix a node $u$ and a set of nodes $I$ that are not contained in $S$. Then
\[ \sqrt{\frac{1}{2}I(X_u;X_I | X_S)} \ge \nu_{u,I | S} \]
\end{lemma}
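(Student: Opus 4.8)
The plan is to derive this from the conditional version of Pinsker's inequality, combined with Jensen's inequality to move an expectation inside a square root. First I would write the conditional mutual information as an average of Kullback--Leibler divergences over the true law of $X_S$,
\[ I(X_u; X_I \mid X_S) = \E_{X_S}\Big[ D_{\mathrm{KL}}\big( \Pr(X_u, X_I \mid X_S) \,\big\|\, \Pr(X_u \mid X_S)\Pr(X_I \mid X_S) \big) \Big]. \]
For each fixed value of $X_S$, Pinsker's inequality lower bounds the divergence inside by twice the squared total variation distance between the joint conditional law of $(X_u, X_I)$ and the product of its conditional marginals.

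Writing $d(X_S)$ for that total variation distance, Pinsker gives $D_{\mathrm{KL}}(\cdots) \ge 2\, d(X_S)^2$ pointwise in $X_S$, hence $\tfrac12 I(X_u; X_I \mid X_S) \ge \E_{X_S}[d(X_S)^2]$. Taking square roots and applying Jensen's inequality (concavity of $t \mapsto \sqrt{t}$) then yields
\[ \sqrt{\tfrac12 I(X_u; X_I \mid X_S)} \;\ge\; \sqrt{\E_{X_S}\big[d(X_S)^2\big]} \;\ge\; \E_{X_S}\big[d(X_S)\big]. \]
The remaining task is to compare $\E_{X_S}[d(X_S)]$ with $\nu_{u,I\mid S}$, and this is where the normalization built into $\nu$ matters.

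Expanding the total variation distance as half the $\ell_1$ distance,
\[ d(X_S) = \tfrac12 \sum_{R, G} \big| \Pr(X_u = R, X_I = G \mid X_S) - \Pr(X_u = R \mid X_S)\Pr(X_I = G \mid X_S) \big|, \]
whereas $\nu_{u,I\mid S}$ is the \emph{uniform average} of these same absolute differences over $R \in [k_u]$ and over $G$ ranging across the $m_I := \prod_j k_{i_j}$ joint states of $X_I$, followed by $\E_{X_S}$. Hence $\nu_{u,I \mid S} = \tfrac{2}{k_u m_I}\, \E_{X_S}[d(X_S)]$. Since $k_u m_I \ge 2$ (the inequality being trivial when $k_u = 1$, in which case $\nu_{u,I\mid S} = 0$ anyway), the prefactor $\tfrac{2}{k_u m_I}$ is at most $1$, so $\nu_{u,I\mid S} \le \E_{X_S}[d(X_S)] \le \sqrt{\tfrac12 I(X_u;X_I\mid X_S)}$, as desired.

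I do not expect a genuine obstacle here: the only conceptual ingredient is the conditional form of Pinsker's inequality, and the rest is constant bookkeeping. The one point requiring care is tracking the factor $\tfrac{2}{k_u m_I}$ correctly. It is precisely the choice to define $\nu$ as a \emph{uniform average} over $(R,G)$, rather than a sum, that makes it no larger than the expected total variation distance; this same averaging is what will later make $\nu$ estimable from samples.
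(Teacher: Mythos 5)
Your proof is correct and takes essentially the same route as the paper: write the conditional mutual information as $\E_{X_S}$ of KL divergences, apply Pinsker pointwise and Jensen's inequality to move the square root inside the expectation, and then bound the uniform average over $(R,G)$ by the conditional total variation distance. The only cosmetic difference is in that last comparison --- the paper uses $\mathrm{TV} \ge \sup_{R,G}|\cdot| \ge \E_{R,G}|\cdot|$, while you track the exact normalization $\nu_{u,I|S} = \tfrac{2}{k_u m_I}\,\E_{X_S}[d(X_S)]$ with $\tfrac{2}{k_u m_I} \le 1$ (handling $k_u=1$ separately) --- and both are immediate.
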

\begin{proof}
\begin{align*}
\sqrt{\frac{1}{2}I(X_u;X_I | X_S)} &= \sqrt{\frac{1}{2}\E_{X_S = x_S}[I(X_u;X_I | X_S = x_S)]} \\
&\ge \E_{X_S = x_S}\left[\sqrt{\frac{1}{2}I(X_u;X_I | X_S = x_S)}\right] \\
&= \E_{X_S = x_S}\left[\sqrt{\frac{1}{2} D_{KL}(\Pr(X_u,X_I | X_S = x_S) || \Pr(X_u | X_S = x_S)\Pr(X_I | X_S = x_S))}\right] \\
&\ge \E_{X_S}[\sup_{R,G}[|\Pr(X_u = R, X_I = G | X_S) - \Pr(X_u = R | X_S)\Pr(X_I = G | X_S)|]] \\
&\ge \E_{X_S}[\E_{R,G}[|\Pr(X_u = R, X_I = G | X_S) - \Pr(X_u = R | X_S)\Pr(X_I = G | X_S)|]] \\
&= \nu_{u,I |S}.
\end{align*}
where the first inequality follows from Jensen's inequality, and the second inequality follows from Pinsker's inequality.
\end{proof}

\subsection{No Cancellation}

In this subsection we will show that a clique
interaction of order $s$ cannot be completely cancelled out by clique
interactions of lower order. 
\begin{lemma}\label{lemma:noncancellation-support}
  Let $T^{1 \cdots s}$ be a centered tensor of dimensions $d_1 \times \cdots \times d_s$
  and suppose there exists at least one entry of $T^{1 \cdots s}$ which is lower bounded in absolute
  value by a constant $\kappa$. For any $\ell < s$ and $i_1 < \cdots < i_{\ell}$ let $T^{i_1 \cdots i_{\ell}}$
  be an arbitrary centered tensor of dimensions $d_{i_1} \times \cdots \times d_{i_{\ell}}$. Define
  \begin{equation}\label{effective-tensor}
    T(a_1, \ldots, a_s) = \sum_{\ell = 1}^s \sum_{i_1 < \cdots < i_{\ell}} T^{i_1 \cdots i_{\ell}}(a_{i_1}, \ldots, a_{i_{\ell}})
  \end{equation}
  and suppose the entries of $T$ are bounded by a constant $\mu$.
  Then for any $\ell$ and $i_1 < \cdots < i_{\ell}$, the entries
  of $T^{i_1 \cdots i_{\ell}}(a_{i_1}, \ldots, a_{i_{\ell}})$ are bounded above
  by $\mu \ell^{\ell}$.
\end{lemma}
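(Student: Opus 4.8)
The plan is to recover each tensor $T^{i_1 \cdots i_\ell}$ from $T$ by an explicit averaging projection and then bound that projection in the max-norm. For each coordinate $i \in [s]$ let $E_i$ denote the operator that replaces the $i$-th argument of a tensor by its average over that argument, so that $(E_i f)(a) = \frac{1}{d_i}\sum_{b=1}^{d_i} f(a_1,\ldots,a_{i-1},b,a_{i+1},\ldots,a_s)$. These operators commute and are idempotent, and the two facts I will lean on are: for a tensor $U$ that depends only on the coordinates in a set $S'$ and whose fiber-sums along those coordinates all vanish, one has $E_i U = U$ when $i \notin S'$ (since $U$ is constant in that coordinate) and $E_i U = 0$ when $i \in S'$ (since the fiber-sum along $i$ is zero). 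Each component $T^{i_1 \cdots i_\ell}$, with index set $S' = \{i_1,\ldots,i_\ell\}$, is exactly such a tensor.

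Writing $\bar S = [s] \setminus S$ and $P_S = (\prod_{i \in S}(\mathrm{Id} - E_i))(\prod_{i \notin S} E_i)$, a direct check using the two facts above shows that $P_S U = U$ when $S' = S$ and $P_S U = 0$ otherwise: the factor $\prod_{i \notin S} E_i$ kills $U$ unless $S' \subseteq S$, and the factor $\prod_{i \in S}(\mathrm{Id} - E_i)$ then kills it unless $S \subseteq S'$. Applying $P_S$ to $T = \sum_{\ell'} \sum_{|S'| = \ell'} T^{\,S'}$ and using linearity, every summand is annihilated except the one indexed by $S$, giving the projection formula $T^{i_1 \cdots i_\ell} = P_S T$ (this also re-derives the Efron--Stein uniqueness cited in the footnote).

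It remains to bound $|P_S T|_{\max}$. Expanding $\prod_{i \in S}(\mathrm{Id} - E_i) = \sum_{A \subseteq S}(-1)^{|A|} E_A$, where $E_A := \prod_{i \in A} E_i$, yields $P_S T = \sum_{A \subseteq S}(-1)^{|A|} E_{A \cup \bar S}\, T$. Since each $E_i$ is an average, it is a contraction in the max-norm, so $|E_{A \cup \bar S} T|_{\max} \le |T|_{\max} \le \mu$ for every $A$. The key extra observation is that because the sum in \eqref{effective-tensor} starts at $\ell = 1$, the tensor $T$ has no order-zero (constant) component, which forces $E_{[s]} T = 0$; hence the single term with $A = S$, namely $(-1)^{|S|} E_{[s]} T$, vanishes. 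The remaining $2^{\ell} - 1$ terms each contribute at most $\mu$, so $|T^{i_1 \cdots i_\ell}|_{\max} \le (2^{\ell} - 1)\mu \le \mu\, \ell^{\ell}$, the last inequality holding for every $\ell \ge 1$ with equality at $\ell = 1$ (this is precisely where discarding the $A = S$ term is needed).

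The only genuinely delicate point is the bookkeeping in the second paragraph: verifying that $P_S$ annihilates every component except the one indexed by $S$, so that the prescribed $T^{i_1 \cdots i_\ell}$ really are the objects $P_S T$ to which the max-norm bound applies. I do not expect to use the hypothesis that some entry of $T^{1 \cdots s}$ is at least $\kappa$ in absolute value; that lower bound is relevant to the companion non-cancellation statement, whereas the upper bound claimed here holds for any centered decomposition summing (via \eqref{effective-tensor}) to a tensor of max-norm at most $\mu$.
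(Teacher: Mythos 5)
Your proof is correct, and it takes a genuinely different route from the paper's. The paper argues by induction on the order of the components: it fixes $t$ coordinates and sums over the remaining ones, so that centering kills every component not supported inside the fixed coordinates, and then peels off $T^{1\cdots t}$ by subtracting the already-bounded lower-order components via the triangle inequality; this produces the recursion $g(t) \ge \sum_{\ell=0}^{t-1}\binom{t}{\ell}g(\ell)$, which is exactly where the constant $\ell^\ell$ comes from. You instead write down the Efron--Stein projection $P_S = \prod_{i\in S}(\mathrm{Id}-E_i)\prod_{i\notin S}E_i$ in closed form, verify that it annihilates every component except $T^S$ (which also re-derives the uniqueness of the decomposition), and then expand it by inclusion-exclusion into $2^{\ell}$ averaging operators applied to $T$, each of which is a max-norm contraction. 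Both arguments exploit the same mechanism---fiber averages of centered components vanish---but yours replaces the induction and its recursive bookkeeping with a single identity, and it yields the sharper bound $(2^{\ell}-1)\mu \le \mu\,\ell^{\ell}$; your observation that $E_{[s]}T=0$ (because the decomposition has no constant term) is genuinely needed, since without discarding that one term the bound $2^{\ell}\mu$ would fail the claimed inequality at $\ell=1$. You are also right that the $\kappa$ hypothesis is never used: the paper's own proof does not use it either, as it is only consumed in Lemma~\ref{lemma:noncancellation}, where the present lemma is applied in the contrapositive.
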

\begin{proof}
The sum over all values of indices $a_1, \ldots, a_s$ on the right hand side is zero, so the same must hold for the left hand side. Assume for
contradiction that every entry of $T$ is upper bounded by $\mu$, to be optimized later. For each $m$ from $1$ to $s$, consider
summing over all of the indices except $a_m$, which is held fixed. Using that the sum over tensor fibers is zero, we 
observe that the right hand side of \eqref{effective-tensor} is just
\[ T^{i_m}(a_1) \prod_{m' \ne m} d_{m'} \]
and the left hand side is strictly bounded in norm by $\mu \prod_{m' \ne m} d_{m'} $
so $|T^{i_m}(a_m)| < \mu $
for all $a_m$. We have proven this for all $m$ from $1$ to $s$.

Now we proceed by induction, assuming that $t$ indices are fixed. We will show that the entries of the $t$-tensors are bounded above by $\mu g(t)$
for $g(t) = 2^{t(t + 1)/2}$ and have already proven this for $t =1$. Now suppose we fix $a_1, \ldots, a_t$. We rearrange \eqref{effective-tensor} to get
\begin{align*}
  &T(a_1, \ldots, a_s) - \sum_{\ell = 1}^{t - 1} \sum_{\{i_1 < \cdots < i_{\ell}\} \subset [t]} T^{i_1 \cdots i_{\ell}}(a_{i_1}, \ldots, a_{i_{\ell}}) \\
  &\quad = T^{1 \cdots t}(a_1, \ldots, a_t) + \sum_{\ell = 1}^s\sum_{\{i_1 < \cdots < i_{\ell}\}\not\subset [t]} T^{i_1 \cdots i_{\ell}}(a_{i_1}, \ldots, a_{i_{\ell}})
\end{align*}
When we fix indices  $a_1, \ldots, a_t$ and sum over the others, all but the first term on the rhs vanishes, and by applying the triangle inequality on the lhs and the induction hypothesis we get that
\[ d_{t + 1} \cdots d_{s} \left(\mu + \sum_{\ell = 1}^{t - 1} {t \choose \ell} \mu g(\ell)\right) > d_{t + 1} \cdots d_{s} T^{u i_2 \cdots i_t}(a_1, \ldots, a_t) \]
so taking $g(t)$ such that $g(0) = 1$ and
\[ g(t) \ge \sum_{\ell = 0}^{t - 1} {t \choose \ell} g(\ell) \]
and in particular $g(t) = t^t$ works, because
\[ t^t = (1 + (t - 1))^t = \sum_{\ell = 0}^{t} {t \choose \ell} (t - 1)^{\ell} \ge \sum_{\ell = 0}^{t - 1} {t \choose \ell} \ell^{\ell}. \]
Thus we get that all the entries of $T^{i_1 \cdots i_{\ell}}(a_{i_1}, \ldots, a_{i_{\ell}})$ are bounded above
  by $\mu \ell^{\ell}$, which completes the proof. 
\end{proof}

We are now ready to restate the above result in a more usable form:

\begin{lemma}\label{lemma:noncancellation}
  Let $T^{1 \cdots s}$ be a centered tensor of dimensions $d_1 \times \cdots \times d_s$
  and suppose there exists at least one entry of $T^{1 \cdots s}$ which is lower bounded in absolute
  value by a constant $\kappa$. For any $\ell < s$ and $i_1 < \cdots < i_{\ell}$ let $T^{i_1 \cdots i_{\ell}}$
  be an arbitrary centered tensor of dimensions $d_{i_1} \times \cdots \times d_{i_{\ell}}$. Let
\begin{equation}\label{effective-tensor}
T(a_1, \ldots, a_s) = \sum_{\ell = 1}^s \sum_{i_1 < \cdots < i_{\ell}} T^{i_1 \cdots i_{\ell}}(a_{i_1}, \ldots, a_{i_{\ell}})
\end{equation}
Then the sum over all the entries of $T$ is 0, and there exists an
entry of $T$ of absolute value lower bounded by $\kappa/s^s$.
\end{lemma}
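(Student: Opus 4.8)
The plan is to derive this statement directly from the preceding Lemma~\ref{lemma:noncancellation-support}, which already does all the heavy lifting; the content here is just to read off a convenient lower bound from it and to verify the (easy) claim about the total sum. So I would split the proof into two independent parts.

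For the claim that the sum over all entries of $T$ is zero, I would argue entry-by-entry on the constituent tensors. Each $T^{i_1 \cdots i_{\ell}}$ is centered, so for every direction the sum along each tensor fiber vanishes; iterating this over the directions shows that the sum of $T^{i_1 \cdots i_{\ell}}$ over all of its own indices is zero. Now when I sum $T(a_1, \ldots, a_s)$ over all of $a_1, \ldots, a_s$, each summand $T^{i_1 \cdots i_{\ell}}(a_{i_1}, \ldots, a_{i_{\ell}})$ contributes $\big(\prod_{j \notin \{i_1,\ldots,i_{\ell}\}} d_j\big)$ copies of the total sum of $T^{i_1 \cdots i_{\ell}}$, which is zero. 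Hence the total sum of $T$ is zero. (This is the same observation already used at the start of the proof of Lemma~\ref{lemma:noncancellation-support}.)

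For the existence of a large entry, I would apply Lemma~\ref{lemma:noncancellation-support} contrapositively. Let $\mu$ be the largest absolute value attained by any entry of $T$. By that lemma, with this value of $\mu$, every entry of the top-order tensor $T^{1 \cdots s}$ is bounded in absolute value by $\mu s^{s}$. On the other hand, by hypothesis some entry of $T^{1 \cdots s}$ has absolute value at least $\kappa$, so $\kappa \le \mu s^{s}$, i.e. $\mu \ge \kappa / s^{s}$. By the definition of $\mu$, this exhibits an entry of $T$ whose absolute value is at least $\kappa / s^{s}$, as desired.

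I do not expect a genuine obstacle here, since essentially all the difficulty is absorbed into Lemma~\ref{lemma:noncancellation-support}; the present lemma is a repackaging of it. The only points that require care are getting the direction of the inequality right — I use the bound on the entries of the top-order tensor $T^{1 \cdots s}$ in terms of $\mu$, and then invert it to bound $\mu$ from below — and confirming that "centered" really forces each lower-order piece to have vanishing total sum, which follows immediately from the fiber-sum condition in the definition of a centered tensor.
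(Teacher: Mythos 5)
Your proof is correct and takes essentially the same route as the paper: both deduce the lemma directly from Lemma~\ref{lemma:noncancellation-support}, the paper by assuming all entries of $T$ are below $\mu = \kappa/s^s$ and deriving a contradiction with the $\alpha$-nonvanishing entry of $T^{1\cdots s}$, you by taking $\mu$ to be the maximum entry of $T$ and inverting the bound $\kappa \le \mu s^s$ --- the same inequality read in the contrapositive. Your explicit verification that the total sum of $T$ vanishes (which the paper leaves implicit in the first line of the support lemma's proof) is a welcome but minor addition.
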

\begin{proof}
We apply the previous lemma with $\mu = \kappa/s^s$, and  get that all the entries of $T^{1 \cdots s}$ are bounded in
absolute value by $\mu s^s$, giving a contradiction.
\end{proof}

\section{The Guessing Game}

Here we introduce a game-theoretic framework for understanding mutual information in general Markov random fields. The {\sc GuessingGame} is defined as follows:

\begin{center}
\noindent\rule{15cm}{0.4pt}
\end{center}

\begin{enumerate} \itemsep 0pt
\small 
\item Alice samples $X = (X_1, \ldots, X_n)$ and $X' = (X'_1, \ldots, X'_n)$ independently from the Markov random field
\item Alice samples $R$ uniformly at random from $[k_u]$
\item Alice samples a set $I$ of size $s = \min(r-1, d_u)$ uniformly at random from the neighbors of $u$
\item Alice tells Bob $I$, $X_I$ and $R$
\item Bob wagers $w$ with $|w| \leq \gamma K {D \choose r - 1}$
\item Bob gets $\Delta = w \mathds{1}_{X_u = R} - w \mathds{1}_{X'_u = R}$
\end{enumerate} 

\begin{center}
\noindent\rule{15cm}{0.4pt}
\end{center}

Bob's goal is to guess $X_u$ given knowledge of the states of some of $u$'s neighbors. The Markov random field (including all of its parameters) are common knowledge. The intuition is that if Bob can obtain a positive expected value, then there must be some set $I$ of neighbors of $u$ which have non-zero mutual information. In this section, will show that there is a simple, explicit strategy for Bob that yields positive expected value. 

\subsection{A Good Strategy for Bob}

Here we will show an explicit strategy for Bob that has positive expected value. Our analysis will rest on the following key lemma:

\begin{lemma}\label{lemma:bobstrat}
There is a strategy for Bob that wagers at most $\gamma K {D \choose r - 1}$ in absolute value that satisfies $$\E_{I, X_I} [w | X_{\sim u}, R] = \mathcal{E}_{u, R}^X - \sum_{B \neq R} \mathcal{E}_{u, B}^X$$
\end{lemma}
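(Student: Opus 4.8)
The plan is to realize Bob's wager as a Horvitz--Thompson style inverse-probability-weighted estimator of the target quantity $\mathcal{E}_{u,R}^X - \sum_{B \neq R}\mathcal{E}_{u,B}^X$, so that unbiasedness is immediate from linearity and the only genuine work is the pointwise wager bound. First I would rewrite the local energy as a sum over subsets of the neighborhood $U = \Gamma(u)$: for $J \subseteq U$ with $|J| \le r-1$, let $\psi_J(R, X_J)$ denote the clique potential on $\{u\} \cup J$ with $X_u = R$ (so $\psi_\emptyset(R) = \theta^u(R)$), giving $\mathcal{E}_{u,R}^X = \sum_{J \subseteq U}\psi_J(R, X_J)$ with $J$ ranging over subsets of size at most $r-1$. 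The key point is that Bob sees $X_I$, hence knows $X_J$ for every $J \subseteq I$, and since the model (including the graph, so $d_u$ and $s$) is common knowledge he can evaluate $\psi_J(B, X_J)$ for every color $B$ and compute the inclusion probabilities. Writing $p_J := \Pr_I[J \subseteq I] = {d_u - |J| \choose s - |J|}/{d_u \choose s}$ (note $p_\emptyset = 1$, and $p_J > 0$ whenever $|J| \le s$, which always holds here), I would have Bob wager
\[ w = \sum_{J \subseteq I} \frac{1}{p_J}\Big[\psi_J(R, X_J) - \sum_{B \neq R}\psi_J(B, X_J)\Big]. \]

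Unbiasedness is then a one-line computation. Conditioning on $X_{\sim u}$ and $R$ fixes every $X_J$ as a deterministic function of $I$, so the only randomness is the uniform choice of $I$, and
\[ \E_I[w] = \sum_{J \subseteq U}\frac{\Pr_I[J \subseteq I]}{p_J}\Big[\psi_J(R, X_J) - \sum_{B \neq R}\psi_J(B, X_J)\Big] = \sum_{J \subseteq U}\psi_J(R, X_J) - \sum_{B \neq R}\sum_{J \subseteq U}\psi_J(B, X_J), \]
which is exactly $\mathcal{E}_{u,R}^X - \sum_{B \neq R}\mathcal{E}_{u,B}^X$. Here I use that every $J$ appearing in $\mathcal{E}_{u,R}^X$ has $|J| \le \min(r-1, d_u) = s$, so it genuinely has positive inclusion probability and is never dropped.

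The step requiring the most care is the pointwise wager bound. I would bound $|w| \le \sum_{J \subseteq I}\frac{1}{p_J}\,K\,|\theta^{u\,J}|_{max}$, using that the bracket has at most $k_u \le K$ terms each at most $|\theta^{u\,J}|_{max}$ in absolute value. The crux is the combinatorial inequality $1/p_J \le {D \choose r-1}$: via the subset-of-subset identity ${d_u \choose s}{s \choose |J|} = {d_u \choose |J|}{d_u - |J| \choose s - |J|}$ one has $1/p_J = {d_u \choose s}/{d_u - |J| \choose s - |J|} \le {d_u \choose s} \le {D \choose r-1}$, since the denominator is at least $1$ and $d_u \le D$, $s \le r-1$. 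Pulling this uniform bound out and using $\sum_{J \subseteq I}|\theta^{u\,J}|_{max} \le \sum_{J \subseteq U}|\theta^{u\,J}|_{max} \le \gamma$ yields $|w| \le \gamma K {D \choose r-1}$, meeting the wager constraint. I expect this combinatorial bound to be the only nontrivial ingredient; the estimator itself and its unbiasedness are essentially forced once one commits to reweighting each surviving clique potential by the inverse of its inclusion probability.
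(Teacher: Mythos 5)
Your proposal is correct and takes essentially the same approach as the paper: your inverse-probability weights $1/p_J$ are exactly the paper's coefficients $C_{u,|J|,s} = \binom{d_u}{s}/\binom{d_u-|J|}{s-|J|}$, your wager is precisely the paper's $\Phi(R,I,X_I) - \sum_{B\neq R}\Phi(B,I,X_I)$ written with inclusion indicators, and the unbiasedness computation and the bound $1/p_J \le \binom{D}{r-1}$ (hence $|w| \le \gamma K \binom{D}{r-1}$) coincide with the paper's. If anything, your write-up is slightly more careful in explicitly carrying the $J=\emptyset$ term $\psi_\emptyset(R)=\theta^u(R)$ with weight $p_\emptyset = 1$, which the paper's sum starting at $\ell = 1$ elides.
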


\begin{proof}
First we explicitly define Bob's strategy. Let
$$\Phi(R, I, X_I) = \sum_{\ell = 1}^{s} C_{u, \ell, s} \sum_{ i_1 < i_2 < \cdots < i_{\ell}} \mathds{1}_{\{i_1 \cdots i_\ell \} \subseteq I} \theta^{u i_1 \cdots i_{\ell}}(R, X_{i_1}, \ldots, X_{i_{\ell}})$$
where $C_{u, \ell, s} = \frac{{d_u \choose s}}{ {d_u - \ell \choose s - \ell}}$. Then Bob wagers
$$ w = \Phi(R, I, X_I) - \sum_{B \neq R} \Phi(B, I, X_I) $$
Notice that the strategy only depends on $X_I$ because all terms in the summation where $\{i_1 \cdots i_\ell\}$ are not a subset of $I$ have zero contribution. 

The intuition behind this strategy is that the weighting term satisifes $$C_{u, \ell, s} = \frac{1}{\Pr[\{i_1,\ldots i_\ell\} \subset I]}$$ Thus when we take the expectation over $I$ and $X_I$ we get
$$\E_{I, X_I} [\Phi(R, I, X_I) | X_{\sim u}, R] = \sum_{\ell = 1}^r \sum_{i_2 < \cdots < i_{\ell}} \theta^{u i_2 \cdots i_{\ell}}(R, X_{i_2}, \cdots, X_{i_\ell}) = \mathcal{E}_{u, R}^X $$
and hence $\E_{I, X_I}[w | X_{\sim u}, R] = \mathcal{E}_{u, R}^X - \sum_{B \neq R} \mathcal{E}_{u, B}^X$. To complete the proof, notice that $C_{u, \ell, s} \leq {D \choose r-1}$ which using the definition of $\gamma$ implies that $|\Phi(R, I, X_I)| \leq \gamma {D \choose r-1}$ for any state $B$, and thus Bob wagers at most the desired amount (in absolute value). 
\end{proof}

Now we are ready to analyze the strategy:

\begin{theorem}\label{thm:maingame}
There is a  strategy for Bob that wagers at most $\gamma K {D \choose r - 1}$ in absolute value which satisfies $$\E [\Delta] \geq \frac{4\alpha^2 \delta^{r-1}}{r^{2r} e^{2\gamma}}$$
\end{theorem}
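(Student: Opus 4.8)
The plan is to combine Lemma~\ref{lemma:bobstrat} with the convexity of the log-partition function and a variance lower bound coming from Lemma~\ref{lemma:noncancellation}. Write $P_\rho := P(X_u = \rho \mid X_U)$, a function of $X_U$, and note that by the canonical form $\sum_\rho \mathcal{E}_{u,\rho}^X = 0$, so the vector $(\mathcal{E}_{u,\rho}^X)_\rho$ is already centered and $P_\rho = \mathrm{softmax}(\mathcal{E}_{u,\cdot}^X)_\rho$.

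First I would compute $\E[\Delta]$ exactly. Conditioning on $(X_{\sim u},R,I)$ and using $\E[\mathbf 1_{X_u=R}\mid X_{\sim u}]=P_R$ together with Lemma~\ref{lemma:bobstrat} (which gives $\E_I[w\mid X_{\sim u},R]=\mathcal E_{u,R}^X-\sum_{B\ne R}\mathcal E_{u,B}^X$), the first term becomes $\E_{X_U,R}[(\,\cdots)P_R]$. For the second term I use that $X'$ is an independent copy, so given $R$ the wager $w$ is independent of $\mathbf 1_{X'_u=R}$ and $\E[\mathbf 1_{X'_u=R}\mid R]=\Pr(X_u=R)$ is the marginal; this is exactly what centers the first term. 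Carrying out the algebra, and cancelling the $\sum_B\mathcal E_{u,B}^X$ contribution (which pairs against $\sum_\rho P_\rho=1$ and dies), yields the clean identity
\[ \E[\Delta] \;=\; \frac{2}{k_u}\sum_{\rho}\mathrm{Cov}_{X_U}\!\big(\mathcal E_{u,\rho}^X,\; P_\rho\big). \]

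Next, both the sign and a quantitative handle come from convexity. Writing each covariance through an independent copy $X_U'$, $\sum_\rho \mathrm{Cov}(\mathcal E_{u,\rho},P_\rho)=\tfrac12\,\E[\sum_\rho(\mathcal E_\rho-\mathcal E'_\rho)(P_\rho-P'_\rho)]$, and the integrand equals $\langle \nabla\mathrm{LSE}(\mathcal E)-\nabla\mathrm{LSE}(\mathcal E'),\,\mathcal E-\mathcal E'\rangle \ge 0$, since $P=\nabla\mathrm{LSE}$ and $\mathrm{LSE}$ (log-sum-exp) is convex; this already gives $\E[\Delta]\ge 0$. To make it quantitative I use that every interpolated energy vector stays in $[-\gamma,\gamma]^{k_u}$, so the corresponding softmax coordinates are all $\ge\delta$. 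The Hessian of $\mathrm{LSE}$ is the covariance matrix of the categorical law $P^{(t)}$, whose quadratic form on a centered vector $v$ is $\ge\delta\|v\|^2$; integrating along the interpolation gives $\sum_\rho(\mathcal E_\rho-\mathcal E'_\rho)(P_\rho-P'_\rho)\ge\delta\|\mathcal E-\mathcal E'\|^2$, hence
\[ \E[\Delta] \;\ge\; \frac{2\delta}{k_u}\sum_\rho \mathrm{Var}_{X_U}\!\big(\mathcal E_{u,\rho}^X\big). \]

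It remains to lower bound the variance of the local energy, and this is the main obstacle. I would fix an $\alpha$-nonvanishing maximal hyperedge $h\ni u$ with partner set of size $t\le r-1$ and apply the law of total variance, conditioning on all neighbors outside $h$. For any fixed value of those outside spins, $\mathcal E_{u,\cdot}^X$ viewed as a tensor in $R$ and the $\le r-1$ hyperedge spins is exactly a sum of centered tensors whose top-order part is $\theta^{u j_1\cdots j_t}$, which has an entry of absolute value $\ge\alpha$; Lemma~\ref{lemma:noncancellation} then guarantees a surviving entry of size $\ge\alpha/s^{s}$, i.e. the conditional energy profile genuinely varies with the hyperedge spins. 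Combining this gap with the uniform lower bound $\delta$ on conditional marginals (so by the chain rule every configuration of the $\le r-1$ hyperedge spins carries conditional probability at least a product of $\delta$'s) lower bounds the conditional variance, and averaging over the outside neighbors preserves it; a careful accounting of the conditional-probability factors, together with $s\le r$, yields the stated $\E[\Delta]\ge 4\alpha^2\delta^{r-1}/(r^{2r}e^{2\gamma})$. The delicate point — and the reason Lemma~\ref{lemma:noncancellation} is indispensable — is exactly that one must rule out the influence of the nonvanishing hyperedge being cancelled both by the lower-order interactions it contains and by the (averaged-over) neighboring cliques; the no-cancellation lemma certifies that a gap of order $\alpha/s^s$ persists for \emph{every} fixing of the outside spins, so that only the $r-1$ hyperedge spins, rather than the full degree of $u$, enter the probability cost.
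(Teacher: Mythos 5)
Your proposal is correct in substance, and its first half takes a genuinely different route from the paper. After Lemma~\ref{lemma:bobstrat}, the paper works with the conditional expectation of $\Delta$ given $X_{\sim u}, X'_{\sim u}, R$, writes the difference of conditional probabilities over the common denominator $(\sum_B e^{\mathcal{E}^X_{u,B}})(\sum_B e^{\mathcal{E}^{X'}_{u,B}})$, removes cross terms via an antisymmetry pairing over pairs of states, symmetrizes by randomly swapping $X \leftrightarrow X'$, and lands on $\sum_R \sum_{B \neq R}(a_{R,B}-b_{R,B})(e^{a_{R,B}}-e^{b_{R,B}})$, which is termwise nonnegative and is bounded below in Lemma~\ref{lemma:quantbound} via the scalar inequality $(a-b)(e^a-e^b) \ge e^{-2\gamma}(a-b)^2$. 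You instead prove the exact identity $\E[\Delta] = \frac{2}{k_u}\sum_\rho \Cov(\mathcal{E}^X_{u,\rho}, P_\rho)$ (which the paper never states, and which is a nice crystallization of why the game works), get nonnegativity from monotonicity of the gradient of log-sum-exp, and get the quantitative bound from a Hessian (strong convexity) estimate on centered directions, using that all softmax coordinates are at least $\delta$ on the box $[-\gamma,\gamma]^{k_u}$. This is precisely the vector analogue of the paper's scalar exponential inequality, and it buys a cleaner derivation: no denominator bookkeeping and no pairing/cancellation identity, arriving directly at $\E[\Delta] \ge \frac{2\delta}{k_u}\sum_\rho \Var(\mathcal{E}^X_{u,\rho})$. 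Your second half --- law of total variance conditioning on the spins outside an $\alpha$-nonvanishing maximal hyperedge, Lemma~\ref{lemma:noncancellation} to certify a surviving gap of order $\alpha/r^r$ for \emph{every} fixing of the outside spins, and a $\delta^{r-1}$ probability cost over only the hyperedge spins --- is exactly the paper's Lemma~\ref{lemma:quantbound}, including the point you correctly emphasize about why the degree $D$ does not enter this exponent.

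One caveat on constants: your chain yields $\E[\Delta] \ge \frac{2\delta}{k_u}\cdot\frac{\alpha^2\delta^{r-1}}{2r^{2r}} = \frac{\alpha^2\delta^{r-1}}{k_u K r^{2r}e^{2\gamma}}$, which falls short of the displayed $\frac{4\alpha^2\delta^{r-1}}{r^{2r}e^{2\gamma}}$ by a factor of $4k_uK$. This is not a conceptual gap, and you are in good company: the paper's own proof invokes Lemma~\ref{lemma:quantbound} while silently dropping the normalizing prefactor $\frac{1}{2k_u D}$ (where here $D$ denotes the product of the two partition sums, not the max degree), so it also does not literally attain the displayed constant; once that prefactor is restored, your bound is in fact the sharper of the two. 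If you want your write-up to match the theorem as stated, you would need to renormalize the claimed constant, but the discrepancy is bookkeeping rather than a flaw in the argument.
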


\begin{proof}
We will use the strategy from Lemma~\ref{lemma:bobstrat}. First we fix $X_{\sim u}$, $X'_{\sim u}$ and $R$. Then we have
$$\E_{I, X_I}[\Delta  | X_{\sim u}, X'_{\sim u}, R] = \E_{I, X_I}[w | X_{\sim u}, R] \Big ( \Pr[X_u=R|X_{\sim u}, R]-\Pr[X'_u=R|X'_{\sim u}, R]\Big ) $$
which follows because $\Delta = r \mathds{1}_{X_u = R} - r \mathds{1}_{X'_u = R}$ and because $r$ and $X_u$ do not depend on $X'_{\sim u}$  and similarly $X'_u$ does not depend on $X_{\sim u}$ . Now using (\ref{local-probability}) we calculate:
\begin{align*}
\Pr[X_u=R|X_{\sim u}, R]-\Pr[X'_u=R|X'_{\sim u}, R] &=  \frac{\exp(\mathcal{E}_{u,R}^X)}{\sum_B \exp(\mathcal{E}_{u,B}^X)} -  \frac{\exp(\mathcal{E}_{u,R}^{X'})}{\sum_B \exp(\mathcal{E}_{u,B}^{X'})} \\
&= \frac{1}{D} \Big ( \sum_{B \neq R} \exp(\mathcal{E}_{u,R}^X + \mathcal{E}_{u,B}^{X'}) - \exp(\mathcal{E}_{u,B}^X + \mathcal{E}_{u,R}^{X'}) \Big )
\end{align*}
where $D = \Big (\sum_B \exp(\mathcal{E}_{u,B}^X)\Big ) \Big(\sum_B \exp(\mathcal{E}_{u,B}^{X'})\Big ) $.
Thus putting it all together we have
$$\E_{I, X_I}[\Delta | X_{\sim u}, X'_{\sim u}, R] = \frac{1}{D} \Big ( \mathcal{E}_{u, R}^X - \sum_{B \neq R} \mathcal{E}_{u, B}^X \Big ) \Big ( \sum_{B \neq R} \exp(\mathcal{E}_{u,R}^X + \mathcal{E}_{u,B}^{X'}) - \exp(\mathcal{E}_{u,B}^X + \mathcal{E}_{u,R}^{X'}) \Big )$$
Now it is easy to see that 
$$\sum_{\mbox{distinct } R, G, B} \mathcal{E}_{u,B}^X \left( \sum_{G \neq R} \exp(\mathcal{E}_{u,R}^X+ \mathcal{E}_{u,G}^{X'}) - \exp(\mathcal{E}_{u,G}^X + \mathcal{E}_{u,R}^{X'}) \right) = 0 $$
which follows because when we interchange $R$ and $G$ the entire term multiplies by a negative one and so we can pair up the terms in the summation so that they exactly cancel. Using this identity we get
$$\E_{I, X_I}[\Delta | X_{\sim u}, X'_{\sim u}] = \frac{1}{k_u D} \sum_R \sum_{B \neq R} \Big ( \mathcal{E}_{u, R}^X -  \mathcal{E}_{u, B}^X \Big ) \Big (  \exp(\mathcal{E}_{u,R}^X + \mathcal{E}_{u,B}^{X'}) - \exp(\mathcal{E}_{u,B}^X + \mathcal{E}_{u,R}^{X'}) \Big )$$
where we have also used the fact that $R$ is uniform on $k_u$. And finally using the fact that $X_{\sim u}$ and $X'_{\sim u}$ are identically distributed we can sample $Y_{\sim u}$ and $Z_{\sim u}$ and flip a coin to decide whether we set $X_{\sim u} = Y_{\sim u}$ and $X'_{\sim u} = Z_{\sim u}$ or vice-versa. Now we have
$$\E_{I, X_I}[\Delta | Y_{\sim u}, Z_{\sim u}] = \frac{1}{2k_u D} \sum_R  \sum_{B \neq R} \Big ( \mathcal{E}_{u, R}^Y -  \mathcal{E}_{u, B}^Y -  \mathcal{E}_{u, R}^Z + \mathcal{E}_{u, B}^Z \Big ) \Big (  \exp(\mathcal{E}_{u,R}^Y + \mathcal{E}_{u,B}^{Z}) - \exp(\mathcal{E}_{u,B}^Y + \mathcal{E}_{u,R}^{Z}) \Big )$$
With the appropriate notation it is easy to see that the above sum is strictly positive. Let $a_{R, B} = \mathcal{E}_{u, R}^Y +  \mathcal{E}_{u, B}^Z$ and $b_{R, B} = \mathcal{E}_{u, R}^Z + \mathcal{E}_{u, B}^Y$.  With this notation:
 $$ \E_{I, X_I}[\Delta | Y_{\sim u}, Z_{\sim u}] = \frac{1}{2 D k_u } \sum_R \sum_{B \neq R} \Big ( a_{R, B} - b_{R, B} \Big ) \Big ( \exp(a_{R, B}) -  \exp(b_{R, B}) \Big )$$
Since $\exp(x)$ is a strictly increasing function it follows that as long as $a_{R, B} \neq b_{R, B}$ for some term in the sum, the sum is positive. In Lemma~\ref{lemma:quantbound} we prove that the expectation over $Y$ and $Z$ of this sum is at least $\frac{4\alpha^2 \delta^{r-1}}{r^{2r} e^{2\gamma}}$, which completes the proof.
\end{proof}

\subsection{A Quantitative Lower Bound}

Here we prove a quantitative lower bound on the sum that arose in the proof of Theorem~\ref{thm:maingame}. More precisely we show:

\begin{lemma}\label{lemma:quantbound}
$$ \E_{Y, Z} \Big [ \sum_R \sum_{B \neq R} \Big ( \mathcal{E}_{u, R}^Y -  \mathcal{E}_{u, B}^Y -  \mathcal{E}_{u, R}^Z + \mathcal{E}_{u, B}^Z \Big ) \Big (  \exp(\mathcal{E}_{u,R}^Y + \mathcal{E}_{u,B}^{Z}) - \exp(\mathcal{E}_{u,B}^Y + \mathcal{E}_{u,R}^{Z}) \Big ) \Big ]\ge \frac{4\alpha^2 \delta^{r-1}}{r^{2r} e^{2\gamma}}$$
\end{lemma}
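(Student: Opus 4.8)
The plan is to first strip off the exponentials with a pointwise convexity estimate, then recognize the resulting quadratic form as a sum of variances, and finally lower bound one carefully chosen variance using the no-cancellation lemma. Write $f_{R,B}(X) := \mathcal{E}_{u,R}^X - \mathcal{E}_{u,B}^X$, so that each summand has the shape $(a_{R,B}-b_{R,B})(e^{a_{R,B}}-e^{b_{R,B}})$ with $a_{R,B}-b_{R,B} = f_{R,B}(Y)-f_{R,B}(Z)$ and $a_{R,B},b_{R,B}\in[-2\gamma,2\gamma]$ (since each $|\mathcal{E}_{u,\cdot}^{\cdot}|\le\gamma$). By the mean value theorem $(a-b)(e^{a}-e^{b})\ge e^{-2\gamma}(a-b)^2$, so summing over $(R,B)$, taking expectations, and using that $Y,Z$ are i.i.d.\ copies of $X$ (whence $\E[(f(Y)-f(Z))^2]=2\Var f$), the whole expression is at least
\[ 2e^{-2\gamma}\sum_R\sum_{B\ne R}\Var_X[f_{R,B}(X)]. \]
Thus it suffices, for each fixing of the ``far'' neighbors, to exhibit a single pair $(R,B)$ whose conditional variance is bounded below.

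Next I fix a maximal $\alpha$-nonvanishing hyperedge $h\ni u$ (which exists by non-degeneracy parts (a)+(b) whenever $u$ has a neighbor), set $m=|h|\le r$, $J=h\setminus\{u\}$ so $|J|=m-1\le r-1$, and $W=\Gamma(u)\setminus J$. By the law of total variance $\Var_X[f_{R,B}]\ge\E_{X_W}[\Var[f_{R,B}\mid X_W]]$, so I condition on $X_W=w$ and view $F_w(R,x_J):=\mathcal{E}_{u,R}^X$ as a tensor in $(R,x_J)$. This is a sum of centered clique potentials, and the crucial point is that \emph{maximality} of $h$ forces every potential whose support meets $W$ to omit some vertex of $J$ (otherwise its support would strictly contain $h$); hence, after fixing $X_W=w$, all such terms contribute only to orders below $m$. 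Therefore the order-$m$ Efron--Stein component of $F_w$ is exactly $\theta^{uJ}$, which is centered and has an entry of absolute value $\ge\alpha$. Applying Lemma~\ref{lemma:noncancellation} (with $\kappa=\alpha$, $s=m\le r$) to $F_w$ minus its mean yields an entry of absolute value $M\ge\alpha/m^m\ge\alpha/r^r$.

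It remains to turn this large entry into a large conditional variance of some $f_{R,B}$. Since the centered tensor $G:=F_w-\E[F_w\mid X_W=w]$ sums to zero along every fiber, hence along every $u$-slice, I pick $R_0,x_J^{\star}$ attaining $|G(R_0,x_J^{\star})|=M$, choose $B_0$ in the same $u$-fiber with $G(R_0,x_J^{\star})-G(B_0,x_J^{\star})\ge M$, and observe that $g:=G(R_0,\cdot)-G(B_0,\cdot)=f_{R_0,B_0}$ sums to zero over $x_J$, so $g$ attains a value $\le 0$ somewhere and therefore has range $\ge M$. Consequently some configuration $p$ of $J$ satisfies $|g(p)-\E[g\mid X_W=w]|\ge M/2$, and since fixing the $m-1$ coordinates of $J$ has conditional probability at least $\delta^{m-1}\ge\delta^{r-1}$ (each factor bounded below by $\delta$ via the conditional-probability bounds of Section~\ref{sec:bounds}), the one-point estimate $\Var[g\mid X_W=w]\ge \Pr[X_J=p\mid X_W=w]\,(g(p)-\E[g\mid X_W=w])^2$ gives $\Var[f_{R_0,B_0}\mid X_W=w]\ge \delta^{r-1}(M/2)^2\ge \alpha^2\delta^{r-1}/(4r^{2r})$. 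As the full double sum dominates this single term for every $w$, combining with the display above yields the claimed bound, after tracking the numerical constant.

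The main obstacle is the middle step: correctly isolating the maximal hyperedge's contribution inside the effective field so that Lemma~\ref{lemma:noncancellation} applies. Two subtleties need care. First, one must verify that it is \emph{maximality} (not merely $\alpha$-nonvanishing) that prevents the outside neighbors $W$ from injecting an order-$m$ component that could cancel the $\alpha$ entry. Second, one must convert the ``one large tensor entry'' conclusion into a ``large variance of a single fiber-difference $f_{R,B}$'' while spending only one power of $\delta$ — which is exactly why I use the asymmetric mean-deviation bound above rather than a symmetric two-point argument (the latter would cost $\delta^{2(m-1)}$ and miss the target exponent).
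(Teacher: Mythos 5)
Your opening reduction is correct and matches the paper's: the mean-value bound plus the i.i.d.\ identity $\E[(f_{R,B}(Y)-f_{R,B}(Z))^2]=2\Var[f_{R,B}]$ gives the lower bound $2e^{-2\gamma}\sum_R\sum_{B\ne R}\Var[f_{R,B}]$ (the paper instead converts this double sum into $4k_u\sum_R\Var[\mathcal{E}_{u,R}^Y]$ using $\sum_R\mathcal{E}_{u,R}^Y=0$), and your observation that maximality of $h$ prevents cliques meeting $W$ from contributing at order $m$ is correct and nicely explicit. The gap is in the centering step. The tensor $G:=F_w-\E[F_w\mid X_W=w]$ does \emph{not} sum to zero along every fiber: a potential coming from a clique that meets $W$ omits some vertex of $J$, so after fixing $X_W=w$ it is constant along the omitted $J$-coordinates and those fiber sums do not vanish; only the $u$-fibers sum to zero (because $\sum_R\mathcal{E}_{u,R}^X=0$). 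Consequently your claim that $g=G(R_0,\cdot)-G(B_0,\cdot)=f_{R_0,B_0}$ sums to zero over $x_J$ is false: cliques containing $u$ and a vertex of $W$ but no vertex of $J$ contribute a function $h(R)$ of $R$ alone, and $\sum_{x_J}g(x_J)=\big(\prod_{j\in J}k_j\big)\big(h(R_0)-h(B_0)\big)\ne 0$ in general. This is not cosmetic; your selection rule can output a worthless pair. Take $k_u=3$, $J=\{v\}$, $W=\{w'\}$ with $v,w'$ binary, $\theta^{uv}$ with rows $(a,-a),(a,-a),(-2a,2a)$, and $\theta^{uw'}$ whose restriction at $X_{w'}=w$ is $(c,-c,0)$, where $c>a\ge\alpha$ (both tensors centered, both hyperedges maximal). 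Then $F_w$ has rows $(a+c,-a+c)$, $(a-c,-a-c)$, $(-2a,2a)$; its largest entry is $a+c$ at $(R_0,x_v^\star)=(1,1)$, and $B_0=2$ satisfies your criterion since $F_w(1,1)-F_w(2,1)=2c\ge a+c$; yet $f_{1,2}\equiv 2c$ is constant in $x_v$, so $\Var[f_{1,2}\mid X_W=w]=0$, $f_{1,2}$ never attains a value $\le 0$, and no lower bound follows.

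The paper avoids exactly this trap by comparing two points inside a single $R$-slice: in $T(R_0,G)-T(R_0,G')$ every term not depending on $Y_J$ cancels automatically, and the sum-to-zero it needs (over $G$ for fixed $R_0$, after discarding the $R$-only terms) does hold because every remaining component contains a $J$-coordinate; your cross-slice differences $f_{R_0,B_0}$ retain those $R$-only terms, which is precisely what breaks the argument. Your proof can be repaired by taking the centering seriously per color: set $G(R,\cdot):=F_w(R,\cdot)-\E[F_w(R,X_J)\mid X_W=w]$. Then $\sum_R G(R,x_J)=0$ still holds (so $B_0$ exists), Lemma~\ref{lemma:noncancellation} still applies since the subtracted vector is a centered $1$-tensor, the bad $B_0$ above is excluded (rows with identical $x_J$-dependence become identical rows of $G$), and the one-point bound $\Var[f_{R_0,B_0}\mid X_W=w]=\E[g(X_J)^2\mid X_W=w]\ge\Pr[X_J=x_J^\star\mid X_W=w]\,g(x_J^\star)^2\ge\delta^{r-1}M^2$ finishes immediately --- no range argument, no second point. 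Two further problems remain even granting the repair: your tracked constant is $2e^{-2\gamma}\cdot\delta^{r-1}(M/2)^2\ge\alpha^2\delta^{r-1}/(2r^{2r}e^{2\gamma})$, a factor of $8$ short of the stated bound (using the full $M^2$ from the one-point bound and both ordered pairs $(R_0,B_0)$ and $(B_0,R_0)$ in the double sum recovers exactly $4\alpha^2\delta^{r-1}/(r^{2r}e^{2\gamma})$); and your stated reason for avoiding the symmetric two-point argument is mistaken --- the paper's claim $\Var(Z)\ge\frac{p}{2}(a-b)^2$ spends a single power of $\delta^{r-1}$, not $\delta^{2(m-1)}$.
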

\begin{proof}
Setting $a =\mathcal{E}_{u, R}^Y +  \mathcal{E}_{u, B}^Z $ and $b = \mathcal{E}_{u, B}^Y + \mathcal{E}_{u, R}^Z$, letting $D' = K^3 \exp(2 \gamma) \ge D$, and
 taking an expectation over the randomness in $Y$ and $Z$, we have
\begin{align*}
\E_{Y, Z} \Big [ \sum_R \sum_{R \ne B} (a - b)(e^a -e^b) \Big ]
& =  \E[\sum_R \sum_{R \ne B} (a - b)\int_b^a e^x dx] \\
&\geq  \E[\sum_R \sum_{R \ne B} (a - b)^2 e^{-2\gamma}] \geq \frac{1}{ e^{2\gamma}} \sum_R \sum_{R \ne B} \Var[a - b]
\end{align*}
where the inequality follows from the fact that $a, b \geq -2 \gamma$. In the following claim, we give a more convenient expression for the above quantity.

\begin{claim}
$$\sum_R \sum_{R \ne B} \Var[a - b] = 4 k_u \sum_R \Var[\mathcal{E}_{u, R}^Y]$$
\end{claim}

\begin{proof}
Using the fact that $a - b = (\mathcal{E}_{u, R}^Y - \mathcal{E}_{u, B}^Y) + (\mathcal{E}_{u, B}^Z - \mathcal{E}_{u, R}^Z )$ we have that
\begin{align*}
\sum_R \sum_{R \ne B} \Var[a - b] &= \sum_R \sum_{B \neq R} \Var[(\mathcal{E}_{u, R}^Y - \mathcal{E}_{u, B}^Y) + (\mathcal{E}_{u, B}^Z - \mathcal{E}_{u, R}^Z )]  \\
&= 2 \sum_R \sum_{B \neq R} \Var[(\mathcal{E}_{u, R}^Y - \mathcal{E}_{u, B}^Y)] \\
&= 2 \sum_R \sum_{B \neq R} \Big ( 2 \Var[\mathcal{E}_{u, R}^Y] - 2 \Cov \Big ( \mathcal{E}_{u, R}^Y,\mathcal{E}_{u, B}^Y  \Big ) \Big ) \\
&= 2 \sum_R  \Big ( 2 (k_u - 1) \Var[\mathcal{E}_{u, R}^Y] - 2 \Cov \Big ( \mathcal{E}_{u, R}^Y, \sum_{B \neq R}\mathcal{E}_{u, B}^Y  \Big ) \Big ) \\
&= 2 \sum_R  \Big ( 2 (k_u - 1) \Var[\mathcal{E}_{u, R}^Y] - 2 \Cov \Big ( \mathcal{E}_{u, R}^Y, -\mathcal{E}_{u, R}^Y  \Big ) \Big ) \\
&= 4 k_u \sum_R \Var[\mathcal{E}_{u, R}^Y]
\end{align*}
where the second to last equality follows from the fact that the tensors are centered which gives $\sum_R \mathcal{E}_{u, R}^Y = 0$ for any $Y$. This completes the proof. 
\end{proof}

Now we can complete the proof by appealing to the law of total variance. By assumption there is a maximal hyperedge $J = \{u, j_1 \ldots j_s\}$ containing $u$ with $|J| \le r$, such that $\theta^{uJ}$ is $\alpha$-nonvanishing. Then we have
$$ \sum_{R} \Var[\mathcal{E}_{u, R}^Y] \ge \sum_{R} \Var[\mathcal{E}_{u, R}^Y | Y_{\sim J}] = \sum_{R} \Var[T(R, Y_{j_1}, \ldots, Y_{j_s}) | Y_{\sim J}]$$
where the tensor $T$ is defined by treating $Y_{\sim J}$ as fixed as follows
$$T(R,Y_{j_1}, \ldots, Y_{j_s}) =\sum_{\ell = 2}^r \sum_{i_2 < \cdots < i_{\ell}} \theta^{u i_2 \cdots i_{\ell}}(R, Y_{i_2}, \cdots, Y_{i_{\ell}})$$
Now we claim there is a choice of $R$, $G$ and $G'$ so that $|T(R,G) - T(R,G')| > \alpha/r^r$. This follows because from Lemma~\ref{lemma:noncancellation} we have that  $T$ is
$\alpha/r^r$-nonvanishing. Hence there is a choice of $R$ and $G$ so that $|T(R,G)| > \alpha/r^r$. Because $T$ is centered there must be a $G'$ so that $T(R, G')$ has the opposite sign. 

Finally for this choice of $R$ we have
$$ \Var[T(R, Y_{j_1}, \ldots, Y_{j_s}) | Y_{\sim J}] \geq \frac{\alpha^2 \delta^{r-1}}{2r^{2r}}$$
which follows from the fact that $\Pr(Y_{J \setminus u} = G)$ and $\Pr(Y_{J \setminus u} = G')$ are both lower bounded by $\delta^{r - 1}$ and the following elementary lower bound on the variance:

\begin{claim}
Let $Z$ be a random variable such that $Pr(Z = a) \ge p$ and $Pr(Z = b) \ge p$, then
\[ \Var(Z) \ge \frac{p}{2}(a - b)^2 \]
\end{claim}
\begin{proof}
$$\Var(Z) \ge p(a - \E[Z])^2 + p(\E[Z] - b)^2 \ge p\Big(a - \frac{a + b}{2}\Big)^2 + p\Big(b - \frac{a + b}{2}\Big)^2 = \frac{p}{2}(a - b)^2$$
\end{proof}

Putting this all together we have
$$\E_{Y, Z} \Big [ \sum_R \sum_{R \ne B} (a - b)(e^a -e^b) \Big ] \geq \frac{4\alpha^2 \delta^{r-1}}{r^{2r} e^{2\gamma}}$$
which is the desired bound. This completes the proof.
\end{proof}

\section{Implications for Mutual Information}

In this section we show that Bob's strategy implies a lower bound on the mutual information between node $u$ and a subset $I$ of its neighbors of size at most $r-1$. We then extend the argument to work with conditional mutual information as well. 

\subsection{Mutual Information in Markov Random Fields}\label{sec:mutual}

Recall that the goal of the {\sc GuessingGame} is for Bob to use information about the states of nodes $I$ to guess the state of node $u$. Intuitively, if $X_I$ conveys no information about $X_u$ then it should contradict the fact that Bob has a strategy with positive expected value. We make this precise below. Our argument proceeds in two steps. First we upper bound the expected value of any strategy. 

\begin{lemma}\label{lemma:gametoinf1}
For any strategy, $$\E[\Delta] \leq \gamma K {D \choose r - 1} \E_{I, X_I, R}\Big[|\Pr[X_u = R | X_I] - \Pr[X_u = R]|\Big]$$
\end{lemma}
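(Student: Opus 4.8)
The plan is to condition on everything Bob observes, namely the triple $(I, X_I, R)$, and compute the conditional expectation of $\Delta$ directly, exploiting the fact that $X'$ is an independent copy of $X$. Since Bob's wager $w$ is, by the rules of the game, a deterministic (measurable) function of $(I, X_I, R)$ and of nothing else, once we condition on this triple it factors out of the expectation. We are then left to evaluate the two conditional expectations $\E[\mathds{1}_{X_u = R} \mid I, X_I, R]$ and $\E[\mathds{1}_{X'_u = R} \mid I, X_I, R]$ separately.

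For the first indicator, because $R$ is drawn independently of $X$, conditioning on the value of $R$ does not alter the conditional law of $X_u$ given the revealed coordinates; hence $\E[\mathds{1}_{X_u = R} \mid I, X_I, R] = \Pr[X_u = R \mid X_I]$, the conditional probability in the Markov random field given the observed values on $I$. For the second indicator, the key observation is that $X'$ is drawn independently of $X$ and of Alice's fresh random choices $I$ and $R$, so that $X'_u$ is independent of the entire information set $(I, X_I, R)$ on which $w$ depends. Since $X'$ and $X$ are identically distributed, this gives $\E[\mathds{1}_{X'_u = R} \mid I, X_I, R] = \Pr[X'_u = R] = \Pr[X_u = R]$, the marginal probability. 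Combining the two computations yields
$$\E[\Delta \mid I, X_I, R] = w\big(\Pr[X_u = R \mid X_I] - \Pr[X_u = R]\big).$$

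It then remains only to take the total expectation over $(I, X_I, R)$, apply the triangle inequality inside the expectation, and invoke the wager constraint $|w| \le \gamma K {D \choose r-1}$ to pull the wager bound outside the expectation. This produces exactly the claimed inequality, and it holds for every admissible strategy at once, since we never used any property of $w$ beyond its measurability with respect to $(I, X_I, R)$ and the bound on its magnitude.

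The argument is really careful bookkeeping of the independence structure rather than any substantive estimate, so I do not expect a genuine obstacle. The one place to be careful is the decoupling: I would want to state cleanly that $X'_u$ is independent of $(I, X_I, R)$ (so the second term collapses to the marginal), and that independence of $R$ from $X$ is what licenses dropping $R$ from the conditioning when reducing $\E[\mathds{1}_{X_u = R}\mid I, X_I, R]$ to $\Pr[X_u = R \mid X_I]$. Once these facts are laid out explicitly, the bound is immediate.
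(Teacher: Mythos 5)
Your proof is correct and takes essentially the same approach as the paper's: condition on the triple $(I, X_I, R)$ that Bob observes, factor out the wager $w$, identify the two conditional expectations as $\Pr[X_u = R \mid X_I]$ and $\Pr[X_u = R]$ (using that $X'$ is an independent, identically distributed copy), then finish with the triangle inequality and the bound $|w| \le \gamma K {D \choose r-1}$. The only difference is that you make the independence bookkeeping fully explicit, which the paper leaves implicit.
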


\begin{proof}
Intuitively this follows because Bob's optimal strategy given $I$, $X_I$ and $R$ is to guess $$w = \mbox{sgn}(\Pr[X_u = R | X_I] - \Pr[X_u = R]) \gamma K$$ More precisely, we have 
\begin{align*}
\E[\Delta] &= \E_{I, X_I, R}\Big [\E_{X_{\sim I}, X'} \Big[ r \mathds{1}_{X_u = R} - r \mathds{1}_{X'_u = R} \Big | I, X_I, R\Big ] \Big] \\
&= \E_{I, X_I, R}\Big [ r \Pr[X_u = R | X_I] - r \Pr[X'_u = R] \Big] \\
&= \E_{I, X_I, R}\Big [ r \Pr[X_u = R | X_I] - r \Pr[X_u = R] \Big] \\
& \leq \gamma K {D \choose r - 1} \E_{I, X_I, R}\Big [|\Pr[X_u = R | X_I] - \Pr[X_u = R]|\Big]
\end{align*}
which completes the proof. 
\end{proof}

Next we lower bound the mutual information using (essentially) the same quantity. We prove

\begin{lemma}\label{lemma:gametoinf2}
$$\sqrt{\frac{1}{2} I(X_u;X_I)} \geq \frac{1}{K^r} \E_{X_I, R}\Big[ |\Pr(X_u = R | X_I) - \Pr(X_u = R)| \Big] $$
\end{lemma}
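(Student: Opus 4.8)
The plan is to reduce the statement to the unconditional case of Lemma~\ref{nu-lemma} and then perform a change-of-measure bookkeeping argument relating the uniform average over states that appears in $\nu_{u,I}$ to the true-distribution average appearing on the right-hand side. First I would invoke Lemma~\ref{nu-lemma} with $S = \emptyset$, which gives $\sqrt{\tfrac{1}{2}I(X_u;X_I)} \ge \nu_{u,I}$, where $\nu_{u,I} = \E_{R,G}[\,|\Pr(X_u = R, X_I = G) - \Pr(X_u = R)\Pr(X_I = G)|\,]$ with $R$ uniform on $[k_u]$ and $G$ uniform over the $|I|$-tuples of states.

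The key observation is the elementary identity
$$\Pr(X_I = G)\,\big|\Pr(X_u = R \mid X_I = G) - \Pr(X_u = R)\big| = \big|\Pr(X_u = R, X_I = G) - \Pr(X_u = R)\Pr(X_I = G)\big|,$$
which follows immediately from $\Pr(X_u = R \mid X_I = G)\Pr(X_I = G) = \Pr(X_u = R, X_I = G)$. Summing this over all $G$ and averaging over uniform $R$ shows that the right-hand side quantity satisfies
$$\E_{X_I, R}\big[\,|\Pr(X_u = R \mid X_I) - \Pr(X_u = R)|\,\big] = \frac{1}{k_u}\sum_R \sum_G \big|\Pr(X_u = R, X_I = G) - \Pr(X_u = R)\Pr(X_I = G)\big|.$$

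Comparing this with $\nu_{u,I} = \frac{1}{k_u N}\sum_R \sum_G |\cdots|$, where $N = \prod_{j} k_{i_j}$ is the number of possible values of $X_I$, I obtain exactly $\E_{X_I, R}[\cdots] = N\,\nu_{u,I}$, i.e.\ $\nu_{u,I} = \E_{X_I,R}[\cdots]/N$. Since $|I| \le r-1$, we have $N \le K^{r-1} \le K^r$, so that $\nu_{u,I} \ge \E_{X_I,R}[\cdots]/K^r$. Chaining this with the inequality from Lemma~\ref{nu-lemma} yields the claim (in fact with the slightly stronger constant $1/K^{r-1}$).

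There is essentially no difficult step here: the whole content is the observation that the unconditional $\nu_{u,I}$ differs from the target distributional average only by the multiplicative factor $N$ coming from averaging $G$ uniformly rather than according to its true law. The only things to be careful about are the direction of the inequality when passing from the uniform average (which divides by $N$) to the distributional average, and verifying that the loss factor $N$ is bounded by $K^r$ using $|I| \le r-1$. The argument is thus a pure change-of-measure computation layered on top of the Pinsker and Jensen bounds already established in Lemma~\ref{nu-lemma}.
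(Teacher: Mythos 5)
Your proof is correct and is essentially identical to the paper's: both invoke Lemma~\ref{nu-lemma} with $S = \emptyset$ and then use the identity $|\Pr(X_u = R, X_I = G) - \Pr(X_u = R)\Pr(X_I = G)| = \Pr(X_I = G)\,|\Pr(X_u = R \mid X_I = G) - \Pr(X_u = R)|$ to trade the uniform average over $G$ for the true-distribution average, at the cost of the factor $\prod_{i \in I} k_i \le K^{r-1} \le K^r$. Your remark that the argument actually gives the sharper constant $1/K^{r-1}$ is also accurate; the paper simply rounds this down to $1/K^r$.
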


\begin{proof}
Applying Lemma~\ref{nu-lemma} with $S = \emptyset$ we have that
\begin{align*}
\sqrt{\frac{1}{2} I(X_u;X_I)}
& \ge \E_{R,G}\Big[|\Pr(X_u = R, X_I = G) - \Pr(X_u = R)\Pr(X_I = G)|\Big] \\
&= \E_{R,G}\Big[\Pr(X_I = G) |\Pr(X_u = R | X_I = G) - \Pr(X_u = R)|\Big] \\
&= \frac{1}{\prod_{i \in I} k_i} \sum_G \Pr(X_I = G) \E_{R}[|\Pr(X_u = R | X_I = G) - \Pr(X_u = R)|] \\
& \ge \frac{1}{K^r} \E_{R,X_I}\Big[ |\Pr(X_u = R | X_I) - \Pr(X_u = R)|\Big]
\end{align*}
where $R$ and $G$ are uniform (as in the definition of $\nu_{u,I| S}$).
\end{proof}

Now appealing to Lemma~\ref{lemma:gametoinf1}, Lemma~\ref{lemma:gametoinf2} and Theorem~\ref{thm:maingame} we conclude:

\begin{theorem}\label{mi-bound}
  Fix a non-isolated vertex $u$ contained in at least one
  $\alpha$-nonvanishing maximal hyperedge.
  Then taking $I$ uniformly at random from the subsets
  of the neighbors of $u$ of size $s = \min(r-1,deg(u))$,
\begin{align*}
\E_I\left[\sqrt{\frac{1}{2}I(X_u;X_I)}\right] 
\ge \E_I[\nu_{u,I | \emptyset}]
\ge C(\gamma,K,\alpha)
\end{align*}
where explicitly
\[ C(\gamma, K,\alpha) := \frac{4\alpha^2 \delta^{r-1}}{r^{2r}  K^{r+1} {D \choose r - 1} \gamma e^{2\gamma}} \]
\end{theorem}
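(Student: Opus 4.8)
The plan is to read off Theorem~\ref{mi-bound} as the payoff of the Guessing Game analysis, by sandwiching the value $\E[\Delta]$ of Bob's game between two bounds. Concretely, abbreviate the ``influence'' quantity $\Lambda := \E_{I,X_I,R}\big[|\Pr[X_u=R\mid X_I] - \Pr[X_u=R]|\big]$, where throughout $I$ is drawn uniformly from the size-$s$ subsets of the neighbors of $u$ with $s=\min(r-1,\deg(u))$, exactly as in the statement. On one hand, Theorem~\ref{thm:maingame} exhibits a concrete strategy for Bob (the one from Lemma~\ref{lemma:bobstrat}) whose value is at least $\frac{4\alpha^2\delta^{r-1}}{r^{2r}e^{2\gamma}}$; this is where the hypothesis that $u$ lies in an $\alpha$-nonvanishing maximal hyperedge is consumed. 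On the other hand, Lemma~\ref{lemma:gametoinf1} bounds the value of \emph{any} strategy from above by $\gamma K\binom{D}{r-1}\,\Lambda$. Combining the two inequalities immediately gives the lower bound $\Lambda \ge \frac{1}{\gamma K\binom{D}{r-1}}\cdot\frac{4\alpha^2\delta^{r-1}}{r^{2r}e^{2\gamma}}$.

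The second step converts this lower bound on $\Lambda$ into a lower bound on the information quantities in the statement. For each fixed $I$, Lemma~\ref{lemma:gametoinf2} shows $\sqrt{\tfrac12 I(X_u;X_I)} \ge \frac{1}{K^r}\E_{X_I,R}[|\Pr(X_u=R\mid X_I)-\Pr(X_u=R)|]$, and tracing its proof the same lower bound already holds with $\nu_{u,I|\emptyset}$ in place of $\sqrt{\tfrac12 I(X_u;X_I)}$, since $\nu_{u,I|\emptyset}$ is precisely the intermediate expression obtained right after invoking Lemma~\ref{nu-lemma} with $S=\emptyset$. Because all of these inequalities hold pointwise in $I$ and the distribution over $I$ is the same uniform distribution used to define $\Lambda$, I can take $\E_I$ of both sides and use linearity to get $\E_I[\nu_{u,I|\emptyset}] \ge \frac{1}{K^r}\Lambda$. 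The middle inequality $\E_I[\sqrt{\tfrac12 I(X_u;X_I)}]\ge\E_I[\nu_{u,I|\emptyset}]$ in the theorem is then just Lemma~\ref{nu-lemma} with $S=\emptyset$, averaged over $I$.

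Chaining the two steps gives $\E_I[\nu_{u,I|\emptyset}] \ge \frac{1}{K^r}\cdot\frac{1}{\gamma K\binom{D}{r-1}}\cdot\frac{4\alpha^2\delta^{r-1}}{r^{2r}e^{2\gamma}}$, which is exactly $C(\gamma,K,\alpha)=\frac{4\alpha^2\delta^{r-1}}{r^{2r}K^{r+1}\binom{D}{r-1}\gamma e^{2\gamma}}$; together with the $\nu$-versus-$I$ comparison this yields both displayed inequalities at once. The bookkeeping to watch is that the factor $K^{r+1}$ assembles as $K^r\cdot K$, where the $K^r$ comes from the crude bound $\prod_{i\in I}k_i \le K^{|I|}\le K^r$ inside Lemma~\ref{lemma:gametoinf2} and the extra $K$ comes from Bob's wager cap $\gamma K\binom{D}{r-1}$ in Lemma~\ref{lemma:gametoinf1}.

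There is no genuine obstacle left at this stage: all of the analytic difficulty lives in Theorem~\ref{thm:maingame} (the quantitative lower bound via the law of total variance and the no-cancellation Lemma~\ref{lemma:noncancellation}) and in Lemma~\ref{lemma:gametoinf1}, both of which are already proved. The only points requiring care are purely combinatorial bookkeeping: ensuring the random set $I$ is sampled identically in Theorem~\ref{thm:maingame}, Lemma~\ref{lemma:gametoinf1}, Lemma~\ref{lemma:gametoinf2}, and the statement (in particular handling the case $\deg(u)\le r-1$, where $s=\deg(u)$ and one simply takes $I$ to be the whole neighborhood), and checking that $\E_I$ commutes past each pointwise inequality. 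I expect the whole argument to be only a few lines.
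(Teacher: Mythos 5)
Your proposal is correct and takes essentially the same approach as the paper: the paper obtains Theorem~\ref{mi-bound} exactly by chaining Theorem~\ref{thm:maingame} (lower bound on Bob's value), Lemma~\ref{lemma:gametoinf1} (upper bound on any strategy's value in terms of the influence quantity), and Lemma~\ref{lemma:gametoinf2} together with Lemma~\ref{nu-lemma} at $S=\emptyset$, just as you describe. Your accounting of the constant, with $K^{r+1}=K^r\cdot K$ coming from the $K^r$ in Lemma~\ref{lemma:gametoinf2} and the wager cap $\gamma K\binom{D}{r-1}$, matches the paper's $C(\gamma,K,\alpha)$.
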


\subsection{Extensions to Conditional Mutual Information}\label{sec:condmutual}

In the previous subsection, we showed that $X_u$ and $X_I$ have positive mutual information. Here we show that the argument extends to conditional mutual information when we condition on $X_S$ for any set $S$ that does not contain all the neighbors of $u$. The main idea is to show that there is a setting of $X_S$ where the hyperedges do not completely cancel out each other in the new Markov random field we obtain by conditioning on $X_S$. 

More precisely fix a set of nodes $S$ that does not contain all the neighbors of $u$ and let $I$ be chosen uniformly at random from the subsets of neighbors of $u$ of size $s = \min(r-1, |\Gamma(u) \setminus S|)$. Then we have
\begin{align*}
\E_I[\sqrt{\frac{1}{2}I(X_u;X_I | X_S)}] &= \E_I[\sqrt{\frac{1}{2}\E_{X_S = x_S}[I(X_u;X_I | X_S = x_S)]}] \\
&\ge \E_{I, X_S = x_S}\left[\sqrt{\frac{1}{2}I(X_u;X_I | X_S = x_S)}\right]
\end{align*}
which follows from Jensen's inequality. Now conditioned on $X_S = x_S$ the resulting distribution is again a Markov random field and $\gamma$ does not increase. 

\begin{definition}
Let $E$ be the event that conditioned on $X_S = x_S$, node $u$ is contained in at least one $\alpha/r^r$-nonvanishing maximal hyperedge. 
\end{definition}

\begin{lemma}\label{mic-helper}
$ \Pr(E) \ge \delta^{d} $
\end{lemma}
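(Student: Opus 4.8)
The plan is to reduce the statement to exhibiting a \emph{single} favorable configuration of the neighbors of $u$ lying in $S$, and then to lower bound its probability by $\delta^d$ using the pointwise bound on conditional probabilities from Section~\ref{sec:bounds}. First I would observe that whether $E$ holds depends on $X_S$ only through $X_{\Gamma(u)\cap S}$: conditioning on $X_S = x_S$ merely reshapes the clique potentials incident to $u$, and every hyperedge containing $u$ is supported on $\{u\}\cup\Gamma(u)$, so the potentials around $u$ in the conditioned field, hence the maximal hyperedges through $u$ and their top tensors, are determined by the values that $x_S$ assigns to $\Gamma(u)\cap S$. Consequently $\Pr(E)=\sum_{g\,:\,E\text{ holds}}\Pr(X_{\Gamma(u)\cap S}=g)$. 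Since $\delta$ from (\ref{gammadelta}) lower bounds every single-site conditional $\Pr(X_i=a\mid X_{\sim i})$, the chain rule gives $\Pr(X_{\Gamma(u)\cap S}=g)\ge \delta^{|\Gamma(u)\cap S|}\ge\delta^{d}$ for any fixed $g$, where $d=\deg(u)$ and we use $|\Gamma(u)\cap S|\le \deg(u)$ together with $\delta\le 1$. So it suffices to produce one configuration $g$ for which $E$ occurs.

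To build $g$ I would invoke non-degeneracy. Because $S$ misses some neighbor $v$ of $u$, condition (a) of Definition~\ref{def:degenerate} places the edge $(u,v)$ inside a nonzero hyperedge, which in turn lies inside a maximal hyperedge $h$; by condition (b) this $h$ is $\alpha$-nonvanishing, and $u,v\in h\setminus S=:B$. Among all maximal $\alpha$-nonvanishing hyperedges through $u$ that meet $V\setminus S$ in at least two nodes, I would select one maximizing $|h\setminus S|$. A short argument then shows $B$ is maximal in the conditioned field: any strictly larger conditioned hyperedge lifts to an original hyperedge, whose maximal $\alpha$-nonvanishing extension would have a strictly larger non-$S$ part, contradicting the choice of $h$. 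Finally I fix $g$ on $h\cap S$ to agree with a configuration $c$ witnessing $|\theta^{h}(c)|\ge\alpha$, and choose $g$ on the remaining neighbors in $\Gamma(u)\cap S$ by the second-moment argument below.

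It remains to certify that, after conditioning on $X_{\Gamma(u)\cap S}=g$, the top (order-$|B|$) tensor of $B$ is $\alpha/r^r$-nonvanishing. The clean structural point is that fixing the $S$-coordinates of a centered tensor $\theta^{B\cup A'}$ produces a tensor on $B$ all of whose fibers still sum to zero, so it is a \emph{pure} order-$|B|$ contribution; hence the conditioned top tensor on $B$ equals $\sum_{A'}\theta^{B\cup A'}(\cdot,g_{A'})$, summed over original hyperedges $B\cup A'$ with $A'\subseteq\Gamma(u)\cap S$. The summand from $A'=h\cap S$ has an entry of magnitude $\ge\alpha$ at $c_B$, and the remaining summands are centered tensors on \emph{distinct} index sets $A'$ (none containing $h\cap S$, by maximality of $h$), hence orthogonal to it. Averaging the value at $c_B$ over uniform $g$ on the remaining $S$-neighbors, orthogonality gives a second moment at least $\|\theta^{h}(c_B,\cdot)\|_2^2\ge K^{-|h\cap S|}\alpha^2$, so some choice of $g$ makes that entry survive with a positive magnitude; combining this with the no-cancellation bound of Lemma~\ref{lemma:noncancellation} (which absorbs the genuinely lower-order tensors on subsets of $B$) yields an $\alpha/r^r$-nonvanishing maximal hyperedge, as required.

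\textbf{The cancellation-control step is the main obstacle.} Unlike the unconditioned case, several distinct hyperedges can collapse onto the same set $B$ after conditioning and could a priori cancel exactly at the witnessing entry, so I must rule this out via orthogonality of the Efron--Stein components together with Lemma~\ref{lemma:noncancellation}, while \emph{simultaneously} guaranteeing that $B$ stays maximal in the conditioned field. The delicate point is reconciling the second-moment lower bound (which naturally carries a $K^{-\deg(u)}$ factor) with the clean $\alpha/r^r$ target, and ensuring the two requirements—nonvanishing top tensor and maximality—can be met by the same configuration $g$ whose probability is then bounded by $\delta^{d}$.
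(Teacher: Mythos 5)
Your overall skeleton matches the paper's: reduce to exhibiting one configuration of $X_{\Gamma(u)\cap S}$ under which the conditioned field has an $\alpha/r^r$-nonvanishing maximal hyperedge through $u$, then lower bound that configuration's probability by $\delta^{d}$ via the pointwise conditional bounds of Section~\ref{sec:bounds}. Your selection of $h$ (maximal, $\alpha$-nonvanishing, through $u$, maximizing $|h\setminus S|$) and your argument that $B=h\setminus S$ remains maximal in the conditioned hypergraph are correct, and are a legitimate variant of the paper's choice (the paper instead picks $h^*$ with $\phi(h^*)$ maximal in $\mathcal{H}'$ and then a maximal preimage $f_1$). Likewise your observation that restrictions of centered tensors stay centered is correct and is implicitly used by the paper as well.

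The genuine gap is exactly the cancellation-control step you flag, and your proposed resolution does not work. The paper handles the collapsing preimages with Lemma~\ref{lemma:noncancellation-support} applied to $T=\sum_i \theta^{f_i}$ viewed as a single tensor on $B\cup J_1$ in which the $S$-coordinates are kept as \emph{free indices}: if every entry of $T$ were below $\alpha/r^r$, then every centered summand --- in particular the maximal, $\alpha$-nonvanishing $\theta^{f_1}$ --- would have entries below $(\alpha/r^r)\,\ell^\ell \le \alpha$, a contradiction; the witnessing entry of $T$ then supplies the configuration $a^*$ on $J_1\subseteq S$ and the $\alpha/r^r$ bound simultaneously. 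Your $L^2$ substitute proves a different statement: averaging $F(g)=\sum_{A'}\theta^{B\cup A'}(c_B,g_{A'})$ over uniform $g$ gives, by Efron--Stein orthogonality, a second moment at least $K^{-|h\cap S|}\alpha^2$, hence only an $\alpha K^{-|h\cap S|/2}$-nonvanishing entry. This constant is in general incomparable to, and for large $K$ strictly weaker than, the $\alpha/r^r$ in the definition of $E$, so you have lower bounded the probability of a different event than the one in the lemma. Your suggested patch via Lemma~\ref{lemma:noncancellation} cannot close this: after conditioning, the restricted tensors of all preimages live on the \emph{same} index set $B$ (they are not components of distinct orders), so that lemma does not apply to separate them, and tensors supported on proper subsets of $B$ contribute to other hyperedges' potentials, not to whether the potential on $B$ is nonvanishing. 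A secondary flaw: as written you first \emph{fix} $g_{h\cap S}=c_{h\cap S}$ and average only over the remaining coordinates; then every summand with $A'\subseteq h\cap S$ is a constant, and these constants can exactly cancel $\theta^{h}(c)$ (e.g.\ a preimage $B$ with $\theta^{B}(c_B)=-\theta^{h}(c)$ and no other preimages), killing the second moment. The orthogonality argument requires averaging over all of $g$, including the $h\cap S$ coordinates --- and even then it only yields the weaker constant above, whereas the paper's $L^\infty$ no-cancellation argument is what produces the clean $r^{-r}$.
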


\begin{proof}
When we fix $X_S = x_S$ we obtain a new Markov random field where the underlying hypergraph is 
$$\mathcal{H}' := ([n] \setminus S, H') \mbox{ where } H' = \{h \setminus S | h \in H)$$ For notational convenience let $\phi(h)$ be the image of a hyperedge $h$ in $\mathcal{H}$ in the new hypergraph $\mathcal{H}'$. What makes things complicated is that a hyperedge in $\mathcal{H}'$ can have numerous preimages. The crux of our argument is in how to select the right one to show is $\alpha/r^r$-nonvanishing. First we observe that $u$ is contained in at least one non-empty hyperedge in $\mathcal{H}'$. This is because by assumption $S$ does not contain all the neighbors of $u$. Hence there is some neighbor $v \notin S$. Since $v$ is a neighbor of $u$ it means that there is a hyperedge $h \in H $ that contains both $u$ and $v$. In particular $\phi(h)$ contains $u$ and is nonempty. 

Now that we know $u$ is not isolated in $\mathcal{H}'$, let $h^*$ be a hyperedge in $\mathcal{H}$ that contains $u$ and where $\phi(h^*)$ is maximal.  Now let $f_1, f_2, \ldots f_p$ be the preimages of $\phi(h^*)$ so that without loss of generality $f_1$ is maximal in $\mathcal{H}$. Now let $J = \cup_{i=1}^p f_i \setminus \{u\}$. In particular, $J$ is the set of neighbors of $u$ that are contained in at least one of $f_1, f_2, \ldots f_p$. Finally let $J_1 = J \cap S := \{i_1, i_2, \ldots i_s\}$ and let $J_2 = J \setminus S := \{i'_1, i'_2, \ldots i'_{s'}\}$. We can now define
$$T(R, a_1, \ldots, a_s, a'_1, \ldots, a'_{s'}) = \sum_{i=1}^p \theta^{f_i}$$
which is the clique potential we get on hyperedge $\phi(h^*)$ when we fix each index in $J_1 \subseteq S$ to their corresponding value. 

Suppose for the purposes of contradiction that all the entries of $T$ are strictly bounded in absolute value by $\alpha/r^r$. Then applying Lemma~\ref{lemma:noncancellation-support} in the contrapositive we see that the entries of $f_1$ are strictly bounded above in absolute value by $\alpha$, but $f_1$  is maximal and thus $\alpha$-nonvanishing, which yields a contradiction. Thus there is some setting $a_1^*,
  \ldots, a_s^*$ such that the tensor
  $$ T'(R, a'_1, \ldots, a'_{s'}) = T(R, a_1^*, \ldots, a_s^*, a'_1, \ldots, a'_{s'}) $$
  has at least one entry with absolute value at least $\alpha/r^r$. Under this setting, $\phi(h^*)$ is $\alpha/r^r$-nonvanishing and by construction maximal in $\mathcal{H}'$ and thus we would be done. All that remains is to lower bound the probability of this setting. Since $J_1$ is a subset of the neighbors of $u$ we have $|J_1| \leq d$. Thus the probability that
  $(X_{i_1}, \ldots, X_{i_s}) = (a_1^*, \ldots, a_s^*)$ is bounded
  below by $\delta^s \ge \delta^d$, which completes the proof. 
\end{proof}

Now we are ready to prove a lower bound on conditional mutual information:

\begin{theorem}\label{mic-bound}
  Fix a vertex $u$ such that all of the maximal hyperedges containing
  $u$ are $\alpha$-nonvanishing, 
  and a subset of the vertices $S$ which does not contain the entire
  neighborhood of $u$. Then taking $I$ uniformly at random from the subsets of the neighbors of $u$
not contained in $S$ of size $s = \min(r-1,|\Gamma(u) \setminus S|)$, 
\begin{align*}
\E_I\left[\sqrt{\frac{1}{2}I(X_u;X_I | X_S)}\right]  \ge E_{I}[\nu_{u,I|S}] \ge C'(\gamma,K, \alpha)
\end{align*}
where explicitly
\[ C'(\gamma, K,\alpha) := \frac{4\alpha^2 \delta^{r + d - 1}}{r^{2r}  K^{r+1} {D \choose r - 1} \gamma e^{2\gamma}} \]
\end{theorem}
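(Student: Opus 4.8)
The plan is to derive the conditional bound from the unconditional Theorem~\ref{mi-bound} by freezing $X_S$ at a typical value and observing that what remains is again a Markov random field to which Theorem~\ref{mi-bound} applies verbatim. The first inequality $\E_I[\sqrt{\tfrac12 I(X_u;X_I\mid X_S)}] \ge \E_I[\nu_{u,I\mid S}]$ is immediate: apply Lemma~\ref{nu-lemma} for each fixed $I$ and then average over $I$. So the real content is the lower bound $\E_I[\nu_{u,I\mid S}] \ge C'$, and I would work with $\nu$ rather than $\sqrt{I}$ throughout, since $\nu$ is exactly the quantity Theorem~\ref{mi-bound} controls and it is manifestly nonnegative (which makes the restriction-to-a-good-event step below clean).

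First I would unfold the definition of $\nu_{u,I\mid S}$ to expose its outer expectation over $X_S$, writing $\nu_{u,I\mid S} = \E_{X_S = x_S}[\nu_{u,I\mid X_S = x_S}]$, where $\nu_{u,I\mid X_S=x_S}$ is precisely the (unconditioned) $\nu$-functional computed inside the Markov random field obtained by conditioning on $X_S = x_S$. By Fubini,
\[ \E_I[\nu_{u,I\mid S}] = \E_{X_S=x_S}\big[\,\E_I[\nu_{u,I\mid X_S=x_S}]\,\big]. \]
This is the $\nu$-analogue of the Jensen reduction already displayed in the text for the $\sqrt{I}$ quantity.

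Next I would invoke the event $E$ of Lemma~\ref{mic-helper}. For each $x_S \in E$ the conditional law is a genuine Markov random field on $[n]\setminus S$ in which $u$ is non-isolated, its neighborhood is exactly $\Gamma(u)\setminus S$, and—by the defining property of $E$—$u$ lies in a maximal hyperedge that is $\alpha/r^r$-nonvanishing. Moreover, conditioning cannot increase $\gamma$ (as noted just after~\eqref{gammadelta}), so this conditional field has some parameter $\gamma' \le \gamma$. Applying Theorem~\ref{mi-bound} to it (its size parameter $s = \min(r-1,\deg(u))$ equals $\min(r-1,|\Gamma(u)\setminus S|)$, matching the statement) gives $\E_I[\nu_{u,I\mid X_S=x_S}] \ge C(\gamma', K, \alpha/r^r)$ for every $x_S\in E$. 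Since $C$ increases as $\gamma$ decreases—both through the explicit $1/(\gamma e^{2\gamma})$ and through $\delta = K^{-1}e^{-2\gamma}$—replacing $\gamma'$ by the larger $\gamma$ (equivalently $\delta'$ by the smaller $\delta$) only weakens the bound, yielding the uniform estimate $C(\gamma,K,\alpha/r^r)$ on all of $E$.

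Finally I would combine the pieces: using $\nu \ge 0$ to discard the contribution of $x_S\notin E$ and applying $\Pr(E)\ge\delta^{d}$ from Lemma~\ref{mic-helper},
\[ \E_I[\nu_{u,I\mid S}] \ge \E_{X_S=x_S}\big[\mathbf 1_E\,\E_I[\nu_{u,I\mid X_S=x_S}]\big] \ge \delta^{d}\,C(\gamma,K,\alpha/r^r), \]
which is a bound of the claimed form $C'(\gamma,K,\alpha)$, picking up the factor $\delta^{d}$ from $\Pr(E)$ and the factor $(1/r^r)^2$ from the squared nonvanishing constant $\alpha/r^r$. The genuinely hard part of the whole argument—that freezing $X_S = x_S$ might collapse the maximal hyperedge through $u$ by exact cancellation against the lower-order terms the conditioning creates, or leave it nonvanishing only on a negligible-probability event—is already isolated and dispatched by Lemma~\ref{mic-helper}. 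Everything remaining is bookkeeping: verifying that the conditional object is a Markov random field with neighborhood $\Gamma(u)\setminus S$, that $\gamma$ does not grow, and that the $I$-distribution and size parameter line up; so once Lemma~\ref{mic-helper} is in hand I expect the write-up to be short.
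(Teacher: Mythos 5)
Your proposal is correct and follows essentially the same route as the paper's own (very terse) proof: restrict to the event $E$ of Lemma~\ref{mic-helper}, apply Theorem~\ref{mi-bound} to the Markov random field obtained by conditioning on $X_S = x_S$ (noting $\gamma$ does not increase and the neighborhood/size parameter line up), and pay the factor $\Pr(E) \ge \delta^{d}$. Two comparative remarks are worth making. First, your choice to run the argument on $\nu$ via the decomposition $\nu_{u,I|S} = \E_{X_S = x_S}[\nu_{u,I|X_S = x_S}]$ and nonnegativity is a small but genuine improvement in completeness: the paper's proof only chains inequalities for $\sqrt{\frac{1}{2}I(X_u;X_I|X_S)}\,$, whereas the theorem also asserts the intermediate bound $\E_I[\nu_{u,I|S}] \ge C'$, which your decomposition actually establishes. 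Second, your honest bookkeeping exposes a constant-factor discrepancy: on $E$ the conditioned field is only guaranteed to contain an $\alpha/r^r$-nonvanishing maximal hyperedge, so applying Theorem~\ref{mi-bound} yields $\delta^{d}\, C(\gamma,K,\alpha/r^r)$, whose denominator carries $r^{4r}$ rather than the $r^{2r}$ appearing in the stated $C'(\gamma,K,\alpha)$; your closing claim that this is ``of the claimed form'' glosses over that lost factor of $r^{2r}$. You should not feel bad about this, because the paper's proof makes the same slip less visibly: it asserts the bound $\delta^{d} C(\gamma,K,\alpha)$, i.e., it invokes Theorem~\ref{mi-bound} with nonvanishing parameter $\alpha$ even though event $E$ only supplies $\alpha/r^r$. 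In short, what you prove is exactly what the paper's argument actually proves, namely Theorem~\ref{mic-bound} with $C'$ weakened by a factor of $r^{2r}$, and this affects nothing downstream beyond constants (e.g.\ the values of $\tau$ and $L$).
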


\begin{proof}
We have
$$\E_{I, X_S}\left[\sqrt{\frac{1}{2}I(X_u;X_I | X_S)}\right] \geq \E_{I, X_S = x_S }\left[\sqrt{\frac{1}{2}I(X_u;X_I | X_S = x_S)} \mathbbm{1}_E\right] \geq\delta^d C(\gamma,K,\alpha) $$
where the last inequality follows by invoking Lemma~\ref{mic-helper} and applying Theorem~\ref{mi-bound} to the new Markov random field we get by conditioning on $X_S = x_S$. 
\end{proof}

\section{Applications}\label{sec:app}
\subsection{Learning Markov Random Fields}
We now employ the greedy approach of Bresler \cite{Bresler} which was previously used to learn Ising models on bounded degree graphs. Let $x^{(1)}, \ldots, x^{(m)}$ denote a collection of independent samples from the underlying Markov random field. Let $\empirical\Pr$ denote the empirical distribution so that
\[ \empirical\Pr(X = x) = \frac{1}{m} \sum_{i = 1}^m \mathds{1}_{x^{(i)} = x}. \]
Let $\empirical\E$ denote the expectation under
this distribution, i.e. the sample average. 

In our algorithm, we will need estimates for $\nu_{u,i | S}$ which we obtain in the usual way
by replacing all expectations over $X$ with sample averages:
\[ \empirical\nu_{u,i | S} := \E_{R,G}\empirical{\E}_{X_S}[|\empirical{\Pr}(X_u = R,X_i = G | X_S) - \empirical{\Pr}(X_u = R | X_S)\empirical{\Pr}(X_i = G | X_S)|] \]
Also we define $\tau$ (which will be used as a thresholding constant) as
\begin{equation}\label{tau}
  \tau := C'(\gamma,k,\alpha)/2
\end{equation}
and $L$, which is an upper bound on the size of the superset of a neighborhood of $u$ that the algorithm will construct,
\begin{equation}\label{L}
  L := (8/\tau^2) \log K = (32/C'(\gamma,k,\alpha)^2) \log K.
\end{equation}
Then the algorithm {\sc MrfNbhd} at node $u$ is:
\begin{center}
\noindent\rule{15cm}{0.4pt}
\end{center}
\begin{enumerate}
\item Fix input vertex $u$. Set $S := \emptyset$.
\item While  $|S| \le L$ and there exists a set of vertices $I \subset [n] \setminus S$ of size at most $r - 1$
  such that $\empirical \nu_{u,I | S} > \tau$, set $S := S \cup I$.
\item For each $i \in S$, if $\empirical \nu_{u,i | S \setminus i} < \tau$ then remove $i$ from $S$.
\item Return set $S$ as our estimate of the neighborhood of $u$.
\end{enumerate}

\begin{center}
\noindent\rule{15cm}{0.4pt}
\end{center}

The algorithm will succeed provided that $\empirical\nu_{u,I|S}$ is sufficiently close to
the true value $\nu_{u,I|S}$. This motivates the definition of the event $A$:
\begin{defn}  
  We denote by $A(\ell,\epsilon)$ the event that for all $u$, $I$ and $S$ with $|I| \le r - 1$ and $|S| \le \ell$ simultaneously,
  \[ \left|\nu_{u,i | S} - \empirical\nu_{u,i | S}\right| < \epsilon. \]  
  We let $A$ denote the event $A(L, \tau/2)$. 
\end{defn}

The proofs of the following technical lemma is left to an appendix.
\begin{lemma}\label{lemma:nasty}
  Fix a set $S$ with $|S| \le \ell$ and suppose that for any set $T \supseteq S$ with
  $|T \setminus S| \le r$, that
  \[ |\empirical{\Pr}(X_T = x_T) - \Pr(X_T = x_T)| < \sigma. \]
  If $\sigma \le \epsilon K^{-\ell} \frac{\delta^{\ell}}{5}$ then for any $I$ with 
  $|I| \le r - 1$,
  \[ \left|\nu_{u,i | S} - \empirical\nu_{u,i | S}\right| < \epsilon. \]    
\end{lemma}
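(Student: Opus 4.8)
The plan is to reduce the whole estimate to joint probabilities over subsets of $T = \{u\} \cup I \cup S$ and then propagate the sampling error. Since $u \notin S$ and $I \cap S = \emptyset$, every event appearing in $\nu_{u,I|S}$ is a joint assignment to $S$ together with at most $r$ further coordinates, so the hypothesis $|\empirical\Pr(X_{T'} = x_{T'}) - \Pr(X_{T'} = x_{T'})| < \sigma$ applies to each of them. Writing $p_1 = \Pr(X_u = R, X_I = G, X_S = x_S)$, $p_2 = \Pr(X_u = R, X_S = x_S)$, $p_3 = \Pr(X_I = G, X_S = x_S)$, $p_4 = \Pr(X_S = x_S)$, and $\empirical p_i$ for the empirical versions, the factor $\Pr(X_S = x_S)$ coming from the outer $\E_{X_S}$ cancels against the conditioning, so that
\[ \nu_{u,I|S} = \E_{R,G}\sum_{x_S}\Big|p_1 - \frac{p_2 p_3}{p_4}\Big|, \qquad \empirical\nu_{u,I|S} = \E_{R,G}\sum_{x_S}\Big|\empirical p_1 - \frac{\empirical p_2 \empirical p_3}{\empirical p_4}\Big|. \]
Absorbing both expectations over $X_S$ in this way is what makes the comparison clean. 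By the reverse triangle inequality it then suffices to bound, uniformly over $R$, $G$ and $x_S$, the quantity $|p_1 - \empirical p_1| + \big|\tfrac{p_2 p_3}{p_4} - \tfrac{\empirical p_2 \empirical p_3}{\empirical p_4}\big|$ and to sum over the at most $K^{|S|} \le K^\ell$ values of $x_S$.

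The heart of the argument is bounding the product/quotient error $\big|\tfrac{p_2 p_3}{p_4} - \tfrac{\empirical p_2\empirical p_3}{\empirical p_4}\big|$, which I would expand by telescoping as
\[ \frac{(p_2 - \empirical p_2)p_3}{p_4} + \frac{\empirical p_2(p_3 - \empirical p_3)}{p_4} + \empirical p_2\empirical p_3\Big(\frac1{p_4} - \frac1{\empirical p_4}\Big). \]
The main obstacle — and the reason the factor $\delta^{\ell}$ appears in the hypothesis on $\sigma$ — is keeping the denominators from blowing up the error: bounding $\empirical p_2/p_4$ naively by $1/p_4 \ge \delta^{-\ell}$ would lose a factor $\delta^{-\ell}$ in each term. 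The fix is to exploit the containments $p_2, p_3 \le p_4$ and $\empirical p_2, \empirical p_3 \le \empirical p_4$ together with the lower bound $p_4 = \Pr(X_S = x_S) \ge \delta^{|S|} \ge \delta^{\ell}$, which follows from the conditional probability bounds of Section~\ref{sec:bounds} by the chain rule (each conditional factor is $\ge \delta$). Since $\sigma \le \delta^{\ell}/5 \le p_4/5$, we get $\empirical p_4 \le p_4 + \sigma \le \tfrac65 p_4$, whence $\empirical p_2/p_4 \le \empirical p_4/p_4 \le \tfrac65$ and $\empirical p_2\empirical p_3/(p_4 \empirical p_4) \le \empirical p_4/p_4 \le \tfrac65$. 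Each telescoped term is then at most $\tfrac65\sigma$ (the first even $\le \sigma$), so the quotient error is below $4\sigma$ and the full per-term bound is below $5\sigma$, a bound that is uniform in $R$, $G$ and $x_S$.

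Finally I would sum over the $\le K^{\ell}$ configurations $x_S$ and use that $\E_{R,G}$ is an average, giving $|\nu_{u,I|S} - \empirical\nu_{u,I|S}| < 5\sigma K^{\ell}$. Substituting the hypothesis $\sigma \le \epsilon K^{-\ell}\delta^{\ell}/5$ yields $|\nu_{u,I|S} - \empirical\nu_{u,I|S}| < \epsilon\,\delta^{\ell} \le \epsilon$, as desired; the factor $5$ in the hypothesis exactly absorbs the per-term constant, the $K^{-\ell}$ absorbs the sum over $x_S$, and the $\delta^{\ell}$ absorbs the denominator control. Everything other than the denominator bound of the previous paragraph is routine bookkeeping with the triangle inequality.
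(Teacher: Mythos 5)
Your proof is correct, and although it follows the same skeleton as the paper's --- fix $R$ and $G$, absorb the factor $\Pr(X_S = x_S)$ so that everything becomes a sum over $x_S$ of joint probabilities, then apply the reverse triangle inequality and propagate the error $\sigma$ through the product/quotient --- the key quantitative step is handled differently, and your version is sharper. The paper keeps one factor in conditional form: it bounds $\empirical{\Pr}(X_u = R \mid X_S = x_S) \le 1$ and $\Pr(X_I = G, X_S = x_S) \le 1$, and then controls $|\empirical{\Pr}(X_u = R \mid X_S = x_S) - \Pr(X_u = R \mid X_S = x_S)|$ by a quotient rule whose denominators are only lower bounded by $\delta^{|S|}$ (and $\delta^{|S|} - \sigma$), so its per-configuration error is of order $\sigma/\delta^{\ell}$, and the factor $\delta^{\ell}$ in the hypothesis on $\sigma$ is entirely consumed cancelling that loss. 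You instead telescope the fully joint expression $p_2 p_3/p_4$ and exploit the event inclusions $p_2, p_3 \le p_4$ and $\empirical p_2, \empirical p_3 \le \empirical p_4$ together with the ratio bound $\empirical p_4/p_4 \le 6/5$ (the only place $\delta^{\ell}$ enters, via $\sigma \le \delta^{\ell}/5 \le p_4/5$), so each configuration contributes at most $4.4\,\sigma$ and the total is at most $5\sigma K^{\ell} \le \epsilon \delta^{\ell} \le \epsilon$. Thus you actually prove a stronger conclusion than stated, and your argument shows the hypothesis could be weakened to $\sigma \le \min\left(\epsilon K^{-\ell}, \delta^{\ell}\right)/5$, which would improve the sample bound of Lemma~\ref{lemma:mbound} from order $K^{2\ell}/(\epsilon^2\delta^{2\ell})$ to order $\max\left(K^{2\ell}/\epsilon^2, \delta^{-2\ell}\right)$. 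Two points you leave implicit (the paper is equally casual about the first): $\sigma \le \delta^{\ell}/5$ requires $\epsilon \le K^{\ell}$, which holds in all of the paper's applications; and the lower bound $\empirical p_4 \ge p_4 - \sigma \ge \tfrac{4}{5} p_4 > 0$ is what makes the empirical conditional probabilities --- and the cancellation of $\empirical p_4$ in your third telescoped term --- well defined.
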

\begin{lemma}\label{lemma:mbound}
  Fix $\ell, \epsilon$ and $\omega > 0$. 
  If the number of samples satisfies
\[ m \ge \frac{15 K^{2\ell}}{\epsilon^2 \delta^{2 \ell}} \Big (\log(1/\omega) + \log(\ell + r) + (\ell + r)\log(nK) + \log 2 \Big ) \]
 then $\Pr(A(\ell,\epsilon)) \ge 1 - \omega$. 
\end{lemma}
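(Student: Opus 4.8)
The plan is to bound the probability that $A(\ell,\epsilon)$ fails via a union bound over all the relevant tuples $(u, I, S)$, reducing the failure event to a statement about empirical estimation of the joint probabilities $\Pr(X_T = x_T)$ on small sets $T$. By Lemma~\ref{lemma:nasty}, whenever every marginal on a set $T \supseteq S$ with $|T \setminus S| \le r$ is estimated to within $\sigma := \epsilon K^{-\ell} \delta^{\ell}/5$, the functional $\nu_{u,i|S}$ is estimated to within $\epsilon$. So it suffices to guarantee that \emph{all} such marginal estimates are accurate to within $\sigma$ simultaneously. Since $|S| \le \ell$ and $|T \setminus S| \le r$, each relevant set $T$ has size at most $\ell + r$, and there are at most $(nK)^{\ell + r}$ such (set, configuration) pairs to control.

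First I would fix a single set $T$ with $|T| \le \ell + r$ and a single configuration $x_T$. The quantity $\empirical{\Pr}(X_T = x_T)$ is an average of $m$ i.i.d.\ indicator random variables with mean $\Pr(X_T = x_T)$, so a standard Hoeffding (or Chernoff) bound gives
\[ \Pr\big(|\empirical{\Pr}(X_T = x_T) - \Pr(X_T = x_T)| \ge \sigma\big) \le 2 \exp(-2 m \sigma^2). \]
Then I would take a union bound over all choices of $T$ and $x_T$. The number of subsets of $[n]$ of size at most $\ell + r$ is at most $n^{\ell + r}$, and for each the number of configurations is at most $K^{\ell + r}$, so the total number of events is at most $(nK)^{\ell + r}$; including the factor from enumerating sizes contributes only a $(\ell + r)$ factor, which is why that term appears additively in the log. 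Requiring $2(nK)^{\ell+r}(\ell+r)\exp(-2m\sigma^2) \le \omega$ and solving for $m$ yields
\[ m \ge \frac{1}{2\sigma^2}\Big(\log(1/\omega) + \log(\ell + r) + (\ell + r)\log(nK) + \log 2\Big). \]

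Finally I would substitute $\sigma = \epsilon K^{-\ell}\delta^{\ell}/5$, so that $1/(2\sigma^2) = 25/(2\epsilon^2 K^{-2\ell}\delta^{2\ell}) = \tfrac{25}{2} K^{2\ell}/(\epsilon^2 \delta^{2\ell})$, which is comfortably below the stated constant $15$; the slack absorbs the step from Lemma~\ref{lemma:nasty} and any looseness in counting the tuples. On the event that every marginal is within $\sigma$, Lemma~\ref{lemma:nasty} forces $A(\ell,\epsilon)$ to hold, completing the argument. The only genuinely delicate point is the bookkeeping in the union bound: one must be careful that the Hoeffding bound is applied to marginals on sets $T$ (not to the $\nu$ functionals directly, which are not simple averages) and that the enumeration correctly accounts for both the $O(n^{\ell+r})$ choice of support and the $K^{\ell+r}$ choice of configuration, so that the exponent $(\ell+r)\log(nK)$ emerges cleanly. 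Everything else is a routine concentration-plus-union-bound calculation.
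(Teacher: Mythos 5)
Your proposal is correct and follows essentially the same route as the paper's own proof: a Hoeffding bound on each marginal $\empirical{\Pr}(X_T = x_T)$, a union bound over the at most $(\ell+r)(nK)^{\ell+r}$ pairs $(T, x_T)$ with $|T| \le \ell + r$, and then Lemma~\ref{lemma:nasty} with $\sigma = \epsilon K^{-\ell}\delta^{\ell}/5$ to convert marginal accuracy into accuracy of the $\nu$ functionals. Your constant check ($25/2 \le 15$) matches the slack in the paper's stated bound, so nothing is missing.
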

\begin{proof}[Proof of Lemma~\ref{lemma:mbound}]
  Fix $\ell, \epsilon$ and $\omega > 0$. Let $m$ denote the number
  of samples. By Hoeffding's inequality, for any set $T$,
\[ \Pr[|\empirical{\Pr}(X_T = x_T) - \Pr(X_T = x_T)| > \sigma] \le 2\exp(-2
  \sigma^2 m) \] and taking the union bound over all possibly $x_T$
for $T$ with $|T| \le \ell + r$, of which there are at most

\[ \sum_{i = 1}^{\ell + r} {n \choose i} K^{i} \le \sum_{i = 1}^{\ell + r} (nK)^i \le (\ell + r)(nK)^{\ell + r} \]
many, we find the probability that $|\empirical{\Pr}(X_T = x_T) - \Pr(X_T = x_T)| > \sigma$ for any such $T$ is at most
\[ (\ell + r)(nK)^{\ell + r} 2\exp(-2 \sigma^2 m) \] Therefore taking
\begin{equation}\label{mbound1}
m \ge \frac{\log(1/\omega) + \log(\ell + r) + (\ell + r)\log(nK) +
  \log 2}{2 \sigma^2}
\end{equation} ensures this probability is at most
$\omega$.

Now applying Lemma~\ref{lemma:nasty} and substituting $\sigma = \epsilon K^{-\ell} \frac{\delta^{\ell}}{5}$ into \eqref{mbound1}, we see that the result holds if
\[ m \ge \frac{15 K^{2\ell}}{\epsilon^2 \delta^{2 \ell}} \Big (\log(1/\omega) + \log(\ell + r) + (\ell + r)\log(nK) + \log 2 \Big ) \]
\end{proof}

\global\long\def\e{\epsilon}
\global\long\def\EE#1#2{\mathbb{E}_{#1}\left[#2\right]}

\begin{lemma}\label{lemma:mi-increase}
Assume that the event $A$ holds. Then every time a node $i$
is added to $S$ in Step 2 of the algorithm,
the mutual information $I(X_{u};X_{S})$ increases
by at least $\tau^2/8$. \end{lemma}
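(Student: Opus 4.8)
The plan is to connect the growth of the set $S$ (measured by how many nodes have been added) to the growth of the mutual information $I(X_u; X_S)$, using the chain rule for mutual information together with the proxy bound from Lemma~\ref{nu-lemma}. The key point is that every addition in Step 2 happens because we found a set $I$ with $\empirical\nu_{u,I|S} > \tau$, and under the event $A$ this forces the true quantity $\nu_{u,I|S}$ to be large, which in turn forces a genuine increase in conditional mutual information.

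First I would record the chain rule identity for mutual information, which gives
\[ I(X_u; X_{S \cup I}) = I(X_u; X_S) + I(X_u; X_I \mid X_S). \]
Thus the increase in $I(X_u; X_S)$ when we replace $S$ by $S \cup I$ is exactly $I(X_u; X_I \mid X_S)$, and it suffices to lower bound this conditional mutual information. Next I would use the event $A = A(L, \tau/2)$: since $|S| \le L$ throughout Step 2 and $|I| \le r-1$, the event guarantees $|\nu_{u,I|S} - \empirical\nu_{u,I|S}| < \tau/2$. Combined with the triggering condition $\empirical\nu_{u,I|S} > \tau$, this yields $\nu_{u,I|S} > \tau - \tau/2 = \tau/2$.

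Finally I would apply Lemma~\ref{nu-lemma} in the form
\[ \sqrt{\tfrac{1}{2} I(X_u; X_I \mid X_S)} \ge \nu_{u,I|S} > \tau/2, \]
and square to obtain $I(X_u; X_I \mid X_S) > \tau^2/4 \cdot 2 = \tau^2/2 \ge \tau^2/8$. (Here I should double-check the exact constant: Lemma~\ref{nu-lemma} gives $\nu_{u,I|S} \le \sqrt{I(X_u;X_I|X_S)/2}$, so $\nu_{u,I|S} > \tau/2$ gives $I(X_u;X_I|X_S) > \tau^2/2$, which comfortably exceeds the claimed $\tau^2/8$.) Combining this with the chain rule identity shows that each addition increases $I(X_u; X_S)$ by at least $\tau^2/8$, as desired.

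The main obstacle, such as it is, is purely bookkeeping: ensuring that when $I$ is added in Step 2 the hypotheses of both the event $A$ and Lemma~\ref{nu-lemma} are genuinely in force — in particular that $|S| \le L$ at the moment of addition (which is exactly the loop guard) and that the node(s) of $I$ lie outside $S$ (so that the conditional mutual information is being added to a strictly smaller conditioning set). No delicate estimation is needed beyond what Lemma~\ref{nu-lemma} and event $A$ already supply; the entire content is assembling the chain rule with these two inequalities.
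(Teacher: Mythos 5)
Your proposal is correct and follows essentially the same route as the paper's own proof: the chain rule reduces the claim to lower-bounding $I(X_u;X_I \mid X_S)$, event $A$ converts the triggering condition $\empirical\nu_{u,I|S} > \tau$ into $\nu_{u,I|S} > \tau/2$, and Lemma~\ref{nu-lemma} turns that into a mutual-information bound. The only difference is cosmetic: you apply Lemma~\ref{nu-lemma} at full strength and obtain $I(X_u;X_I \mid X_S) > \tau^2/2$, whereas the paper's proof inserts a lossy extra factor of $\tfrac{1}{2}$ in front of $\nu_{u,I|S}$ and therefore certifies only the stated $\tau^2/8$; both comfortably establish the lemma.
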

\begin{proof}
For a particular iteration of Step 2, let $I$ denote the newly
added set of nodes, and $S$ the set of candidate neighbors before adding
$I$. Then we must show for $Q = \tau^2/8$ that
\[
I(X_{u};X_{S\cup\{I\}})\geq I(X_{u};X_{S})+Q
\]
which by the chain rule for expectation is equivalent to
\[
I(X_{u};X_{I}|X_{S})\geq Q.
\]

Applying Lemma~\ref{nu-lemma} and the fact that event $A$ holds, we see
\begin{eqnarray*}
\sqrt{\frac{1}{2}\cdot I(X_{u};X_{I}|X_{S})}
 \ge \frac{1}{2}\nu_{u,I|S}
 \ge \frac{1}{2}\left(\empirical \nu_{u,i|S}-\tau/2\right)
\end{eqnarray*}
Thus the algorithm only adds node $i$ to $S$ if $\empirical \nu_{u,i|S} \geq \tau$,
so the chain of inequalities implies that $$I(X_{u};X_{i}|X_{S})\geq\frac{1}{2}(\tau - \tau/2)^{2} = \tau^2/8$$
\end{proof}
\begin{lemma}\label{lemma:greedy-neighborhood}
If event $A$ holds then at the end of Step 2, $S$ contains all of
the neighbors of $u$. 
\end{lemma}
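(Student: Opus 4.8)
The plan is to argue by contradiction: assume the event $A$ holds but that, when Step~2 terminates, the current set $S$ fails to contain some neighbor of $u$. The whole argument rests on pairing the population-level guarantee of Theorem~\ref{mic-bound} with the empirical control provided by $A$, exploiting the fact that $\tau$ was chosen to be exactly $C'(\gamma,K,\alpha)/2$.

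First I would check that Step~2 actually terminates and that $S$ never grows beyond the range in which $A$ is meaningful. By Lemma~\ref{lemma:mi-increase}, each pass through the while loop raises $I(X_u;X_S)$ by at least $\tau^2/8$; since $0 \le I(X_u;X_S) \le H(X_u) \le \log K$ throughout, the number of passes is at most $(8/\tau^2)\log K = L$. Hence the loop halts, and (in the single-vertex-per-step regime this already gives $|S| \le L$; in general one must track the at most $r-1$ vertices added per pass) the sets $S$ that arise stay within the size range where $A = A(L,\tau/2)$ controls the proxy. In particular, whenever a candidate $I$ is examined, the inequality $|\nu_{u,I|S} - \empirical\nu_{u,I|S}| < \tau/2$ is available.

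The crux is the contradiction step. Suppose Step~2 halts with $S \not\supseteq \Gamma(u)$. Termination must then be caused by the absence of any set $I \subset [n]\setminus S$ with $|I|\le r-1$ and $\empirical\nu_{u,I|S} > \tau$. Now I would apply Theorem~\ref{mic-bound}: the non-degeneracy hypothesis (Definition~\ref{def:degenerate}(b)) forces every maximal hyperedge containing $u$ to be $\alpha$-nonvanishing, and by assumption $S$ omits a neighbor, so the theorem yields $\E_I[\nu_{u,I|S}] \ge C'(\gamma,K,\alpha) = 2\tau$, where $I$ ranges over size-$s$ subsets of $\Gamma(u)\setminus S$ with $s = \min(r-1,|\Gamma(u)\setminus S|)$. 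Since an average of at least $2\tau$ forces at least one such $I$ to satisfy $\nu_{u,I|S} \ge 2\tau$, and this $I$ is a legitimate candidate (its size is $\le r-1$ and it is disjoint from $S$), event $A$ gives $\empirical\nu_{u,I|S} > \nu_{u,I|S} - \tau/2 \ge 2\tau - \tau/2 = 3\tau/2 > \tau$. Thus a valid move exists, contradicting termination, and therefore $S \supseteq \Gamma(u)$.

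I expect the main obstacle to be this last bridging step rather than the termination bookkeeping: one must verify that Theorem~\ref{mic-bound} is invoked with exactly the family of candidate sets the algorithm searches over, that the averaged lower bound transfers to a single witness $I$, and that the margin $\tau = C'/2$ together with the $\tau/2$ slack in $A$ leaves $\empirical\nu_{u,I|S}$ strictly above the threshold $\tau$. A secondary point to be careful about is that $S$ remains in the regime $|S|\le L$ where $A$ is in force, which is precisely why the iteration count has to be controlled first.
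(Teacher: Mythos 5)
Your proposal is correct and follows essentially the same route as the paper's own proof: first rule out termination due to $|S| > L$ using Lemma~\ref{lemma:mi-increase} together with the bound $I(X_u;X_S) \le H(X_u) \le \log K$, then derive a contradiction from Theorem~\ref{mic-bound} (average $\ge C' = 2\tau$ yields a witness $I$) combined with event $A$, so that $\widehat\nu_{u,I|S} > 2\tau - \tau/2 > \tau$ contradicts the stopping condition. The per-pass versus per-node bookkeeping subtlety you flag is glossed over in the paper as well (Lemma~\ref{lemma:mi-increase} is stated per node added but proved per set added), so your treatment matches the paper's on that point too.
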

\begin{proof}
Step 2 ended either because
  $|S| > L$ or because there was no set of nodes $I \subset [n] \setminus S$
with $\empirical\nu_{u,I | S} > \tau$. First we rule out the former
  possibility. Whenever a new element is added to $S$, the quantity
  $I(X_{u};X_{S})$ increases by at least $\tau^2/8$. But $$I(X_{u};X_{S}) \leq H(X_{u})\leq\log K$$ because
$X_u$ takes on at most $K$ states. Thus if $|S| > L$ then $$\log K \ge I(X_u;X_S) >
  L(\tau^2/8) = \log K$$ which gives a contradiction.

Thus at the end of Step 2 we must have that there is no set of nodes $I \subset [n] \setminus S$ with $\empirical\nu_{u,I | S} >
    \tau$. Suppose for the purposes of contradiction that $S$ does not contain all of the neighbors of
  $u$. Then by Theorem~\ref{mic-bound}, there exists a
  subset of the neighbors such that $\nu_{u,I |S} \ge
  C'(\gamma,k,\alpha) = 2\tau$, and because event $A$ holds we know
  $\empirical \nu_{u,I | S} > 2\tau - \tau/2 > \tau$, which gives us our contradiction and completes the proof of the lemma. 
\end{proof}
\begin{lemma}\label{lemma:prune-neighborhood}
If event $A$ holds and if at the start of Step 3 $S$ contains all neighbors of $u$, then at the end of Step 3 the remaining set of nodes are exactly the neighbors of $u$. 
\end{lemma}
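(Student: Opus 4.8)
The plan is to show two complementary facts: that Step 3 never deletes a true neighbor of $u$, and that it deletes every non-neighbor remaining in $S$. Combined with the hypothesis that $S \supseteq \Gamma(u)$ at the start of Step 3, these force the surviving set to equal $\Gamma(u)$ exactly. Both facts are established by comparing the empirical quantity $\empirical\nu_{u,i\mid S\setminus i}$ against the threshold $\tau$; since we condition on event $A = A(L,\tau/2)$, the empirical and true values differ by less than $\tau/2$, so it suffices to control the true quantity $\nu_{u,i\mid S\setminus i}$ and then invoke $A$.

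The deletion (soundness) direction is the easy one. Suppose $i \notin \Gamma(u)$ is processed while the current working set still satisfies $S \supseteq \Gamma(u)$. Then $S\setminus i$ contains the entire neighborhood of $u$, so by the Markov property $X_u$ is conditionally independent of $X_i$ given $X_{S\setminus i}$, whence $I(X_u;X_i\mid X_{S\setminus i}) = 0$. Lemma~\ref{nu-lemma} then gives $\nu_{u,i\mid S\setminus i} \le \sqrt{\tfrac12 I(X_u;X_i\mid X_{S\setminus i})} = 0$, and since event $A$ holds we conclude $\empirical\nu_{u,i\mid S\setminus i} < \tau/2 < \tau$, so $i$ is removed.

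The retention (completeness) direction is where the real content lies, and I expect it to be the main obstacle, because the structural guarantee of Theorem~\ref{mic-bound} is stated for a \emph{set} $I$ of size up to $r-1$, whereas Step 3 only tests single nodes. The key observation bridging this gap is that when $S \supseteq \Gamma(u)$ and $i \in \Gamma(u)$, the set $S\setminus i$ omits exactly one neighbor of $u$, namely $i$ itself: that is, $\Gamma(u)\setminus(S\setminus i) = \{i\}$. Consequently $s = \min(r-1,|\Gamma(u)\setminus(S\setminus i)|) = 1$, and the random set $I$ in Theorem~\ref{mic-bound} is forced to equal $\{i\}$ with probability one. Since non-degeneracy condition (b) guarantees every maximal hyperedge containing $u$ is $\alpha$-nonvanishing, Theorem~\ref{mic-bound} applies with conditioning set $S\setminus i$ and yields $\nu_{u,i\mid S\setminus i} \ge C'(\gamma,K,\alpha) = 2\tau$. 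Invoking event $A$ once more gives $\empirical\nu_{u,i\mid S\setminus i} > 2\tau - \tau/2 > \tau$, so $i$ survives the pruning.

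Finally I would stitch these together by a short induction on the order in which Step 3 processes nodes, maintaining the invariant $S \supseteq \Gamma(u)$: the retention argument shows no neighbor is ever removed, so the invariant persists, and under this invariant the soundness argument applies to each non-neighbor at the moment it is processed and removes it. At termination every non-neighbor originally in $S$ has been deleted and every neighbor retained, so $S = \Gamma(u)$ exactly. (If instead one reads $S\setminus i$ as referring to the frozen set at the start of Step 3, the invariant is immediate and no induction is needed, since each test depends only on that fixed set.)
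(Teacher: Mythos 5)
Your proof is correct and takes essentially the same route as the paper's: conditional independence plus Lemma~\ref{nu-lemma} shows every non-neighbor falls below the threshold $\tau$, while Theorem~\ref{mic-bound} shows every neighbor stays above it, with event $A$ bridging empirical and true values of $\nu_{u,i\mid S\setminus i}$ in both directions. Your two refinements---spelling out why Theorem~\ref{mic-bound} specializes to single nodes (since $\Gamma(u)\setminus(S\setminus i)=\{i\}$ forces $s=1$ and $I=\{i\}$) and handling the processing order in Step 3 via the invariant $S\supseteq\Gamma(u)$---are details the paper leaves implicit, and they strengthen rather than change the argument.
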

\begin{proof}
If $A(\ell)$ holds, then during Step 3, 
\[ \empirical \nu_{u,i|S\backslash\{i\}}< \nu_{u,i | S} + \tau/2 \le \sqrt{\frac{1}{2} I(X_u;X_i | X_S)} + \tau/2 = \tau/2 \]
for all nodes $i$ that are not neighbors of $u$. Thus all such nodes are pruned. Furthermore, by Theorem~\ref{mic-bound}, $\empirical \nu_{u,i|S\backslash\{i\}}> \nu_{u,i | S\setminus \{i\}} - \tau/2 \ge 2\tau - \tau/2 = 3\tau/2$
for all neighbors of $u$ and thus no neighbor is pruned. This completes the proof. \end{proof}

\noindent Recall that $\gamma \leq \beta r D^r$, $\delta = e^{-2\gamma}/K$, $(C'(\gamma,K,\alpha))^{-1} = O(\frac{K^{r + 1} r^{2r}}{\alpha^2 
\delta^{2D}} D^{r - 1} \gamma e^{-2\gamma})$ and $L = O(C'(\gamma,K,\alpha)^{-2})$. 

\begin{theorem}
\label{theorem:main-result-formal}
  Fix $\omega > 0$. Suppose we are given $m$ samples from an $\alpha, \beta$-non-degenerate Markov random field with $r$-order interactions where the underlying graph has maximum degree at most $D$ and each node takes on at most $K$ states. Suppose that
\[ m \ge  \frac{60 K^{2L}}{\tau^2 \delta^{2 L}} \Big (\log(1/\omega) + \log(L + r) + (L + r)\log(nK) + \log 2\Big ). \]
  Then with probability at least $1 - \omega$, 
  {\sc MrfNbhd} when run starting from each node $u$ recovers the correct neighborhood of $u$,
  and thus recovers the underlying graph $G$. Furthermore, each run of the algorithm takes $O(mLn^{r})$ time.
\end{theorem}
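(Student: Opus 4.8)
The plan is to prove the theorem in three stages: a high-probability estimation guarantee, deterministic correctness conditioned on that guarantee, and a running-time count. The substantive work — the conditional mutual information lower bound of Theorem~\ref{mic-bound} and its algorithmic consequences in Lemmas~\ref{lemma:mi-increase}, \ref{lemma:greedy-neighborhood} and~\ref{lemma:prune-neighborhood} — is already in hand, so what remains is to assemble these pieces and pin down the constants. I therefore do not expect a serious obstacle here; the genuine difficulty lives upstream in the {\sc GuessingGame} analysis. The only real care needed is in matching the sample bound to Lemma~\ref{lemma:mbound} and in confirming that every empirical quantity the algorithm ever inspects falls within the scope of a single good event.

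First I would set up the good event. Instantiating Lemma~\ref{lemma:mbound} with $\ell = L$ and $\epsilon = \tau/2$, and noting that $15/\epsilon^2 = 60/\tau^2$, the hypothesis on $m$ in the statement is \emph{precisely} the hypothesis of that lemma, so it yields $\Pr(A) = \Pr(A(L,\tau/2)) \ge 1 - \omega$. The key structural observation is that $A$ is quantified \emph{simultaneously} over all triples $(u,I,S)$ with $|I| \le r-1$ and $|S| \le L$; thus the single event $A$ controls every proxy $\empirical\nu_{u,I | S}$ that the algorithm can query, across all $n$ choices of the starting vertex $u$. Consequently no further union bound over $u$ is required, and it suffices to argue correctness deterministically on $A$.

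Next I would run the correctness argument on $A$ for a fixed $u$. By Lemma~\ref{lemma:mi-increase} each addition in Step~2 raises $I(X_u;X_S)$ by at least $\tau^2/8$, while $I(X_u;X_S)\le \log K$; since $L = (8/\tau^2)\log K$, Step~2 cannot exit through $|S| > L$, so it exits because no admissible $I$ remains, with $|S| \le L$. This is exactly the setting of Lemma~\ref{lemma:greedy-neighborhood}, which gives $S \supseteq \Gamma(u)$ at the end of Step~2; because $|S| \le L$ throughout, every query made in Step~3 is still covered by $A$, so Lemma~\ref{lemma:prune-neighborhood} applies and yields $S = \Gamma(u)$ exactly at the end of Step~3. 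Hence, on $A$, each run returns the true neighborhood; reconstructing $G$ by placing edge $(u,v)$ whenever $v$ lies in the recovered neighborhood of $u$ then recovers the whole graph with probability at least $1-\omega$.

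Finally I would bound the running time. Step~2 executes at most $L$ iterations, since each enlarges $S$ by at least one vertex subject to $|S| \le L$; each iteration scans the $O(n^{r-1})$ candidate sets $I$ of size at most $r-1$ and evaluates $\empirical\nu_{u,I | S}$, each evaluation costing $O(m)$ up to factors depending only on $K$, $r$ and $L$ (building the relevant empirical histograms from the $m$ samples). Step~3 inspects at most $L$ coordinates and is cheaper. Absorbing the $K,r,L$ factors, a single run costs $O(mLn^r)$, and running {\sc MrfNbhd} from each of the $n$ vertices keeps the total polynomial in $n^r$. The main point to verify — and the only place the bookkeeping could go wrong — is exactly what was flagged above: that the constant in the sample bound is consistent with the choice $\epsilon = \tau/2$, and that the overflow argument of Lemma~\ref{lemma:greedy-neighborhood} maintains $|S| \le L$ so that $A$ genuinely certifies every query the algorithm makes.
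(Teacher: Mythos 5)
Your proposal is correct and follows essentially the same route as the paper's own proof: instantiate Lemma~\ref{lemma:mbound} with $\ell = L$ and $\epsilon = \tau/2$ to get event $A$ with probability $1-\omega$, invoke Lemmas~\ref{lemma:greedy-neighborhood} and~\ref{lemma:prune-neighborhood} for correctness on $A$, and count $L$ iterations times $O(n^{r-1})$ candidate sets for the running time. Your additional remarks (the $15/\epsilon^2 = 60/\tau^2$ constant check, the fact that $A$ already quantifies over all $u$ so no extra union bound is needed, and that Step~3's queries stay within the scope of $A$) are all accurate and merely make explicit what the paper leaves implicit.
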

\begin{proof}
Set $\ell = L$ and $\epsilon = \tau/2$ in 
  Lemma~\ref{lemma:mbound}. Then event $A$ occurs with probability at least $1 - \omega$ for our choice of $m$.
Now by Lemma~\ref{lemma:greedy-neighborhood}
  and Lemma~\ref{lemma:prune-neighborhood} the algorithm returns the
  correct set of neighbors of $u$ for every node $u$.

  To analyze the running time, observe that when running algorithm
  \textsc{MrfNbhd} at a single node $u$, the bottleneck is Step 2, in which there are at most
  $L$ steps and in each step the algorithm must loop over
  all subsets of the vertices in $[n] \setminus S$ of size $r - 1$, of
  which there are $\sum_{\ell = 1}^{r - 1} {n \choose \ell} = O(n^{r - 1})$
  many. Running the algorithm at all nodes thus takes $O(mL n^r)$ time.
\end{proof}
\begin{remark} 
Note that when we plug in the values of $\gamma$ and $\delta$ we get that the overall sample complexity of our algorithm in terms of $D$ and $r$ is doubly exponential in $D^r$. 
\end{remark}

\subsection{Learning with Bounded Queries}

In many situations, it is too expensive to obtain full samples from a Markov random field (e.g. this could involve needing to measure every potential symptom of a patient). Here we consider a model where we are allowed only partial observations in the form of a $C$-bounded query:

\begin{defn}
 A \emph{$C$-bounded query} to a Markov random field
  is specified by a set $S$ with $|S| \le C$ and we observe $X_S$
\end{defn}

Our algorithm \textsc{MrfNbhd} can be made to work with $C$-bounded queries
instead of full observations by a simple change: instead of estimating all of the terms
$\empirical\nu_{u,I|S}$ jointly from samples, we estimate each
one individually by querying a fresh a batch of $m'$ samples of
$\{u\} \cup I \cup S$ every time the algorithm needs requires
$\empirical\nu_{u,I|S}$. First we make an elementary observation about \textsc{MrfNbhd}: 

\begin{observation}
In Step 2, \textsc{MrfNbhd} only needs $\empirical \nu_{u,I | S}$ for all $I$ with $|I| \le r - 1$.
  Similarly at Step 3, \textsc{MrfNbhd} only needs $\empirical \nu_{u, i | S \setminus i}$ for each $i \in S$.
\end{observation}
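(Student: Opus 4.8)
The plan is to prove the Observation by direct inspection of the pseudocode of \textsc{MrfNbhd}, since both assertions amount to reading off which statistics each step consults. For Step 2, I would point to the loop guard: the while-loop tests whether there exists some $I \subset [n]\setminus S$ with $|I| \le r-1$ and $\empirical\nu_{u,I|S} > \tau$, and its body performs the update $S := S \cup I$. Neither the guard nor the update references any functional of the samples other than the collection $\{\empirical\nu_{u,I|S} : I \subset [n]\setminus S,\ |I| \le r-1\}$; the running set $S$ is maintained purely through these updates and the threshold $\tau$ is a fixed constant. Hence executing Step 2 requires nothing beyond these empirical proxies, which is the first claim. For Step 3 the argument is identical: the pruning test applied to each $i \in S$ is literally the comparison $\empirical\nu_{u,i|S\setminus i} < \tau$, so the only statistics consulted are $\{\empirical\nu_{u,i|S\setminus i} : i \in S\}$, giving the second claim.

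Since the Observation is essentially a syntactic reading of the algorithm, I do not expect any genuine obstacle in its proof; the honest framing is that the real content lies not in the difficulty of the argument but in the \emph{locality} it exposes, which is the conceptual payload for the bounded-query setting. I would therefore immediately record the consequence that makes it useful: each quantity $\empirical\nu_{u,I|S}$ is, by its definition, a function only of the empirical joint law of $X_{\{u\}\cup I \cup S}$. Combined with the invariant $|S| \le L$ enforced by the explicit guard in Step 2 and the bound $|I| \le r-1$, every statistic the algorithm ever needs depends on at most $1 + (r-1) + L = L+r$ coordinates (and only $L+1$ coordinates in Step 3).

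This locality is precisely what enables the bounded-query adaptation: one can estimate each $\empirical\nu_{u,I|S}$ from a fresh batch of $(L+r)$-bounded queries to $\{u\}\cup I \cup S$ rather than from full samples, taking $C = L+r$. The only point warranting explicit verification is that the size invariant $|S|\le L$ truly holds throughout the run, but this is guaranteed verbatim by the loop guard $|S| \le L$ in Step 2, so no additional estimate is required. I would close by noting that because each estimate is now computed from its own independent sample batch, the subsequent sample-complexity bookkeeping (the analogue of Lemma~\ref{lemma:mbound}) must union-bound over the individual batches rather than over one joint event, but this is a routine modification and not an obstacle to the Observation itself.
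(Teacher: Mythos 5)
Your proof is correct and matches the paper's treatment: the paper states this as an Observation with no proof precisely because, as you note, it follows by reading off which statistics the loop guard in Step 2 and the pruning test in Step 3 consult. One caution on your supplementary remarks: $|S| \le L$ is \emph{not} an invariant maintained throughout the run --- the guard is checked before a set $I$ of size up to $r-1$ is added, so at the end of Step 2 (and hence during Step 3) $|S|$ can be as large as $L+r-1$; consequently Step 3's statistics $\empirical\nu_{u,i|S\setminus i}$ can involve up to $L+r$ coordinates, not $L+1$ as you claim, though your overall bound of $L+r$ coordinates per query (matching the query size in Theorem~\ref{theorem:main-theorem-bounded-queries}) remains valid.
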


 \noindent Thus the number of distinct terms $\empirical \nu_{u,I |S}$ which
 \textsc{MrfNbhd} needs is at most $L (r - 1) n^{r - 1}$ for Step $2$
  and $R$ for Step $3$, which in total is at most $L r n^{r - 1}$. 

\begin{lemma}\label{lemma:mbound-bounded}
  Fix a node $u$, a set $S$ with $\ell = |S|$, a set $I$ with $|I| \le r - 1$ and fix
  $\epsilon$ and $\omega > 0$. 
  If the number of samples we observe of $X_{S \cup I \cup \{u\}}$ satisfies
  \[ m' \ge \frac{15 K^{2\ell}}{\epsilon^2 \delta^{2 \ell}} \Big (\log(1/\omega) + \log(\ell + r) + (\ell + r)\log(nK) + \log 2 \Big ) \]
  then
  \[ |\nu_{u,I | S} - \empirical\nu_{u,I |S }| < \epsilon \]
  with probability at least $1 - \omega$. 
\end{lemma}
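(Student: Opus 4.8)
The plan is to adapt the proof of Lemma~\ref{lemma:mbound} to the single fixed triple $(u, S, I)$, using the fresh batch of $m'$ samples of $X_{S \cup I \cup \{u\}}$ in place of the full samples. The first observation is that the functional $\nu_{u,I|S}$, and hence its empirical version $\empirical\nu_{u,I|S}$, is determined entirely by the joint law of $X_{S \cup I \cup \{u\}}$: every probability appearing in its definition is of the form $\Pr(X_T = x_T)$ for some $T$ with $S \subseteq T \subseteq S \cup I \cup \{u\}$. Thus the bounded query delivers exactly the data needed to form $\empirical\nu_{u,I|S}$, and the estimation problem reduces to controlling the empirical joint probabilities $\empirical\Pr(X_T = x_T)$ for these sets $T$.

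First I would invoke Hoeffding's inequality exactly as in Lemma~\ref{lemma:mbound}: for a fixed set $T$ and configuration $x_T$, $\Pr[|\empirical\Pr(X_T = x_T) - \Pr(X_T = x_T)| > \sigma] \le 2\exp(-2\sigma^2 m')$. Then I would union bound over the relevant $(T, x_T)$ pairs. Since $u$, $S$ and $I$ are now fixed, the sets $T$ range only over subsets of $S \cup I \cup \{u\}$ that contain $S$, so the number of such pairs is at most the crude count $(\ell + r)(nK)^{\ell + r}$ already used in Lemma~\ref{lemma:mbound} (in fact far smaller, since no factor ${n \choose i}$ for choosing the nodes of $T$ is needed here). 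Consequently the same choice $\sigma = \epsilon K^{-\ell}\delta^{\ell}/5$, together with the stated lower bound on $m'$, ensures that with probability at least $1 - \omega$ all of these empirical joint probabilities lie within $\sigma$ of their true values.

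Finally I would apply Lemma~\ref{lemma:nasty} with this $\sigma$. On the good event its hypothesis, namely $|\empirical\Pr(X_T = x_T) - \Pr(X_T = x_T)| < \sigma$ for every $T \supseteq S$ with $|T \setminus S| \le r$, is met, and since $\sigma \le \epsilon K^{-\ell}\delta^{\ell}/5$ the lemma immediately gives $|\nu_{u,I|S} - \empirical\nu_{u,I|S}| < \epsilon$, as desired.

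The proof presents no genuinely new difficulty beyond Lemma~\ref{lemma:mbound}; the one real point to verify is that the bounded query $X_{S \cup I \cup \{u\}}$ supplies every empirical probability entering $\empirical\nu_{u,I|S}$, which is what makes fresh per-query sampling sound. The stated sample complexity is unchanged simply because the union-bound count is inherited verbatim (and is, if anything, wasteful) in this fixed-triple regime.
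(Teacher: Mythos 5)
Your proposal is correct and takes essentially the same route as the paper: the paper's own proof is a single sentence invoking ``the same Hoeffding and union bound as in proof of Lemma~\ref{lemma:mbound}'' followed by Lemma~\ref{lemma:nasty}, which is precisely the argument you spell out, with the same (admittedly wasteful) union-bound count and the same choice $\sigma = \epsilon K^{-\ell}\delta^{\ell}/5$. Your added observation---that $\nu_{u,I|S}$ depends only on the joint law of $X_{S \cup I \cup \{u\}}$, so the bounded query supplies every empirical probability that Lemma~\ref{lemma:nasty} actually uses---is the one detail the paper leaves implicit, and you verify it correctly.
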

\begin{proof}
  This follows by the same Hoeffding and union bound as in proof of
  Lemma~\ref{lemma:mbound}.
\end{proof}

\begin{theorem}\label{theorem:main-theorem-bounded-queries}
  Fix an $\alpha, \beta$-non-degenerate Markov random field with
  $r$-order interactions where the underlying graph has maximum degree
  at most $D$ and each node takes on at most $K$ states.  The bounded
  queries modification to the algorithm returns the correct
  neighborhood of every vertex $u$ using $m' L r n^{r}$-bounded
  queries of size at most $L+r$ where
\[ m' = \frac{60 K^{2L}}{\tau^2 \delta^{2 L}} \Big (\log(L r n^r/\omega) + \log(L + r) + (L + r)\log(nK) + \log 2 \Big ), \]    
  with probability at least $1-\omega$.
\end{theorem}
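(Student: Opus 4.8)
The plan is to reduce correctness to the same deterministic condition used in the full-sample analysis --- that every empirical quantity the algorithm actually inspects lies within $\tau/2$ of its true value --- and then to control the probability of this condition under the fresh-batch sampling scheme. First I would observe that the proofs of Lemmas~\ref{lemma:greedy-neighborhood} and~\ref{lemma:prune-neighborhood} never use the global event $A$ in its full strength: they only invoke accuracy of the specific terms $\empirical\nu_{u,I|S}$ checked in Step 2 (for the current $S$, all $I$ of size $\le r-1$, including the good $I$ supplied by Theorem~\ref{mic-bound}) and the terms $\empirical\nu_{u,i|S\setminus i}$ checked in Step 3. Every such term is precisely one that the bounded-query algorithm estimates from a fresh batch. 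Hence, on the event that all inspected estimates are $\tau/2$-accurate, the algorithm returns the exact neighborhood of each $u$ for the same reasons as before, and what remains is purely probabilistic.

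For the probabilistic part I would apply Lemma~\ref{lemma:mbound-bounded} with $\epsilon = \tau/2$ to each individual estimate, but with failure parameter $\omega' = \omega/(Lrn^r)$ in place of $\omega$: a single $\empirical\nu_{u,I|S}$ with $|S| \le L$ is then $\tau/2$-accurate except with probability $\omega'$. By the Observation preceding Lemma~\ref{lemma:mbound-bounded}, a single run of \textsc{MrfNbhd} inspects at most $Lrn^{r-1}$ distinct terms, so running it from all $n$ nodes inspects at most $Lrn^{r}$ terms in total. A union bound over these $\le Lrn^r$ estimates then caps the overall failure probability at $Lrn^r \cdot \omega' = \omega$.

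The one point requiring care --- and the step I expect to be the main obstacle --- is that \textsc{MrfNbhd} is \emph{adaptive}: which term it queries at a given moment depends on the outcomes of earlier estimates, so the sequence of queried terms is itself random and cannot be fixed in advance. I would resolve this by exploiting that each estimate is computed from an independent fresh batch: conditioned on the entire history (which determines the triple $(u,I,S)$ currently being estimated), that batch is independent of the past, so Lemma~\ref{lemma:mbound-bounded} bounds the \emph{conditional} failure probability by $\omega'$ regardless of the history. Summing these conditional bounds over the at most $Lrn^r$ estimates made during the whole execution yields the unconditional $\omega$ bound, so the naive union bound is in fact legitimate despite adaptivity.

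Finally I would verify the displayed $m'$ by substitution. Taking $\epsilon = \tau/2$ turns the constant $15/\epsilon^2$ of Lemma~\ref{lemma:mbound-bounded} into $60/\tau^2$; bounding $\ell = |S| \le L$ replaces $K^{2\ell}/\delta^{2\ell}$ by $K^{2L}/\delta^{2L}$ and $\ell + r$ by $L + r$ (both monotone in the right direction since $\delta < 1$); and $\log(1/\omega') = \log(Lrn^r/\omega)$ supplies the logarithmic term, matching the stated formula exactly. Each query asks only for $X_{S \cup I \cup \{u\}}$, whose size is at most $|S| + |I| + 1 \le L + (r-1) + 1 = L + r$, so every query is $(L+r)$-bounded; and since each of the $\le Lrn^r$ estimated terms consumes $m'$ such queries, the total is at most $m' Lr n^r$, completing the accounting and the proof.
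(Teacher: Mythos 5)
Your proposal is correct and follows essentially the same route as the paper: invoke Lemma~\ref{lemma:mbound-bounded} with $\epsilon = \tau/2$, $\ell = L$ and per-estimate failure probability $\omega' = \omega/(Lrn^r)$, union bound over the at most $Lrn^r$ terms the algorithm inspects across all $n$ nodes, and then reuse the correctness argument of Theorem~\ref{theorem:main-result-formal}. Your explicit handling of adaptivity --- bounding each estimate's \emph{conditional} failure probability given the history, which is valid because each term is estimated from a fresh independent batch --- is a point the paper's proof leaves implicit, and it is a worthwhile clarification rather than a different approach.
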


\begin{proof}
  Invoking Lemma~\ref{lemma:mbound-bounded} with $\omega' =
  \frac{\omega}{L r n^{r}}$, $\epsilon = \tau/2$ and $\ell = L$, we
  get that each query to $\empirical \nu_{u,I | S}$ fails (i.e. is
  wrong by at least $\tau/2$) with probability at most
  $\frac{\omega}{L r n^{r}}$. We observed that Algorithm
  \textsc{MrfNbhd} makes at most $L r n^{r-1}$ queries of the form,
  $\empirical \nu_{u,I | S}$. Therefore, by a union bound, with
  probability at least $1-\omega/n$, the bounded queries algorithm
  answers all of those queries to within tolerance $\tau/2$.
  
Now it follows as in Theorem~\ref{theorem:main-result-formal}
  that the algorithm returns the correct neighborhood of node
  $u$ with probability at least $1 - \omega/n$, and taking the union
  bound over all nodes $u$ it follows that the algorithm recovers
  the correct neighborhood of all nodes with probability at least
  $1 - \omega$. This completes the proof. 
\end{proof}

\subsection{Learning with Random Erasures}
Here we consider another variant where we do not observe full samples from a Markov random field. Instead we 
observe partial samples where the state of each node is revealed independently with probability $p$ and is otherwise replaced with a `?', and the choice of which nodes to reveal is independent
of the sample. We can apply our algorithm in this setting, as
follows.
\begin{lemma}\label{lemma:random-erasure-good}
With probability at least $1 - \epsilon$, if we take $N\frac{\ell \log n + \log \ell + \log N/\epsilon}{p^2}$ samples
then we will see each set $S$ at least $N$ times for every $|S| \le \ell$.
\end{lemma}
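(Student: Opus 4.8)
The plan is to bound the probability that some particular small set $S$ of size at most $\ell$ is observed fully (i.e., none of its nodes are erased) fewer than $N$ times among the samples, and then union bound over all such sets. First I would fix a set $S$ with $|S| \le \ell$ and fix one sample. Since each node is revealed independently with probability $p$, and the erasures are independent of the drawn configuration, the probability that all of $S$ is revealed in a single sample is exactly $p^{|S|} \ge p^{\ell}$. Call a sample \emph{good for $S$} if all of $S$ is revealed; these events are independent across samples, so the number of good samples for $S$ is stochastically bounded below by a $\mathrm{Binomial}(M, p^{\ell})$ random variable, where $M$ is the total number of samples we take.

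Next I would apply a concentration bound to control the lower tail of this binomial. With $M$ samples the expected number of good samples for $S$ is at least $M p^{\ell}$, and we want this to exceed $N$ with room to spare so that a Chernoff bound gives failure probability below the target. The cleanest route is a multiplicative Chernoff bound on the lower tail: if $M p^{\ell} \ge 2N$, say, then the probability that fewer than $N$ good samples occur for $S$ is at most $\exp(-\Omega(M p^{\ell}))$. I would then choose $M = N \cdot \frac{\ell \log n + \log \ell + \log(N/\epsilon)}{p^2}$ as in the statement; the key is that $M p^{\ell}$ is large enough (driven by the $p^{-2}$ and the logarithmic factors, with the degree of $p$ in the exponent being the mild point to track) to make the per-set failure probability at most $\epsilon$ divided by the number of sets.

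The final step is the union bound over all sets $S$ with $|S| \le \ell$. The number of such sets is at most $\sum_{i=1}^{\ell} \binom{n}{i} \le \ell \, n^{\ell}$, so $\log$ of the count is at most $\log \ell + \ell \log n$, which is precisely why those two terms appear in the sample-count formula. Combining the per-set failure bound with this count, the total failure probability is at most $\ell n^{\ell} \exp(-\Omega(M p^{\ell}))$, and I would verify that the chosen $M$ makes this at most $\epsilon$.

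The main obstacle I anticipate is not conceptual but bookkeeping: the stated sample bound has $p^2$ in the denominator rather than $p^{\ell}$, so one must be careful about how the $p^{\ell}$ success probability interacts with the Chernoff exponent to yield only a $p^{-2}$ (rather than $p^{-\ell}$) dependence. I suspect the intended reading treats $N$ as already absorbing part of the $p$-dependence, or that the relevant sets are effectively small (the algorithm only ever queries sets of size $O(1)$ via $\ell = L + r$ kept constant), so that the exponent works out; I would pin down the precise constant in the multiplicative Chernoff bound and confirm that $M p^{\ell} \ge 2N$ together with $\exp(-M p^{\ell}/8) \le \epsilon/(\ell n^{\ell})$ both hold for the stated $M$. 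This reconciliation of the $p$-exponents is where I would spend the most care.
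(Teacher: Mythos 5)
Your route is correct in substance but genuinely different from the paper's. The paper uses no concentration inequality at all: it splits the $Nt$ samples into $N$ disjoint batches of $t = \frac{\ell \log n + \log \ell + \log (N/\epsilon)}{p^2}$ samples each, argues by a union bound over the at most $\ell n^{\ell}$ sets that a single batch fails to reveal some set $S$ with probability at most $n^{\ell}(1-p^{\ell})^{t} \le \epsilon/N$, and then union bounds over the $N$ batches; if every batch reveals every set at least once, every set is seen at least $N$ times. Your argument---treating the number of full reveals of a fixed $S$ as a $\mathrm{Binomial}(M, p^{|S|})$ count, applying the multiplicative lower-tail Chernoff bound, and union bounding over the $\ell n^{\ell}$ sets---is equally valid and in fact slightly sharper: it needs only $Mp^{\ell} \gtrsim N + \ell \log n + \log \ell + \log(1/\epsilon)$ (an additive combination, via $\max\{2N, 8\log(\ell n^{\ell}/\epsilon)\}$), whereas the batching argument inherently pays the product $N \cdot (\ell \log n + \log \ell + \log(N/\epsilon))$, since it demands that each of the $N$ batches separately cover every set.

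On the point where you planned to spend the most care: the reconciliation you are hoping for does not exist, and you should not contort your Chernoff computation to produce it. The $p^{2}$ in the denominator is an error (a typo) in the statement; the correct exponent is $p^{\ell}$, the probability that an entire set of size $\ell$ survives erasure in one sample. The paper's own proof has the same flaw: its displayed inequality $n^{\ell}(1-p^{\ell})^{t} \le \epsilon/N$ requires $t p^{\ell} \ge \ell \log n + \log(N/\epsilon)$, which the stated $t$ (with $p^{2}$) delivers only when $\ell \le 2$. Indeed, for $\ell > 2$ and $p$ small the lemma as literally stated is false: with $M = N\frac{\ell \log n + \log \ell + \log(N/\epsilon)}{p^{2}}$ samples, the expected number of full observations of a fixed set of size $\ell$ is $Mp^{\ell} = Np^{\ell-2}(\ell \log n + \log \ell + \log(N/\epsilon))$, which tends to $0$ as $p \to 0$. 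So replace $p^{2}$ by $p^{\ell}$ throughout; then both your verification conditions ($Mp^{\ell} \ge 2N$ and $e^{-Mp^{\ell}/8} \le \epsilon/(\ell n^{\ell})$) hold and your proof closes. Since the lemma is invoked downstream with $\ell$ and $p$ constant, this correction changes only constants, not the $O(\log n)$ sample complexity.
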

\begin{proof}
Each sample has at least a $p^{\ell}$ chance of being observed, and there
are at most $\ell n^{\ell}$ many different sets $S$. So by a union bound,
\[ Pr[\text{exists unobserved $S$ after $t$ steps}] \le n^{\ell}(1 - p^{\ell})^t \le \epsilon/N \]
if we take
$t = \frac{\ell \log n + \log \ell + \log N/\epsilon}{p^2}$.
Repeating this $N$ times, we see that with 
\[ Nt = N\frac{\ell \log n + \log \ell + \log N/\epsilon}{p^2} \]
many samples, we see every $S$ at least $N$ times with probability at least $1 - \epsilon$.
\end{proof}
\begin{lemma}\label{lemma:random-erasure-A}
  Fix $\ell, \epsilon$ and $\omega > 0$. 
  If the number of samples satisfies
  \[ m \ge N\frac{\ell \log N + \log \ell + \log 2N/\omega}{p^2} \]
  where
\[ N = \frac{15 K^{2\ell}}{\epsilon^2 \delta^{2 \ell}} \Big (\log(2/\omega) + \log(\ell + r) + (\ell + r)\log(nK) + \log 2 \Big)  \]
 then $\Pr(A(\ell,\epsilon)) \ge 1 - \omega$. 
\end{lemma}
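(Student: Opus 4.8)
The plan is to combine the coverage guarantee of Lemma~\ref{lemma:random-erasure-good} with the same concentration argument used in the full-observation setting (Lemma~\ref{lemma:mbound}), splitting the allowed failure probability $\omega$ into two halves of $\omega/2$ each. The key structural observation is that the event $A(\ell,\epsilon)$ is entirely controlled by the accuracy of the empirical marginals $\widehat\Pr(X_T = x_T)$ for sets $T$ of size at most $\ell + r$: by Lemma~\ref{lemma:nasty}, if every such marginal is within $\sigma = \epsilon K^{-\ell}\delta^{\ell}/5$ of its true value, then $|\nu_{u,I|S} - \widehat\nu_{u,I|S}| < \epsilon$ holds for all relevant $u, I, S$ simultaneously, which is exactly $A(\ell,\epsilon)$. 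So it suffices to estimate all marginals on sets of size $\le \ell + r$ to within $\sigma$.

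First I would apply Lemma~\ref{lemma:random-erasure-good}, with the set-size parameter taken to be $\ell + r$ and the failure probability set to $\omega/2$, to guarantee that with probability at least $1 - \omega/2$ every set $T$ with $|T| \le \ell + r$ is fully observed (no erasures on $T$) in at least $N$ of the $m$ samples. Call this event $B$; the stated lower bound on $m$ is precisely what Lemma~\ref{lemma:random-erasure-good} requires to make $\Pr(B) \ge 1 - \omega/2$ while seeing each such set at least $N$ times, with $N$ chosen as in the statement.

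Next I would condition on the entire sequence of erasure masks. Because the masks are drawn independently of the underlying samples, for any fixed $T$ the values $X_T^{(i)}$ restricted to the indices $i$ whose mask fully reveals $T$ are i.i.d.\ draws from the true marginal of $X_T$. On the event $B$ there are at least $N$ such indices, and we form $\widehat\Pr(X_T = x_T)$ from $N$ of them. Hoeffding's inequality, applied conditionally on the masks, then gives $\Pr[|\widehat\Pr(X_T = x_T) - \Pr(X_T = x_T)| > \sigma] \le 2\exp(-2\sigma^2 N)$, and since this bound is uniform over all masks consistent with $B$ it survives integrating the masks out. A union bound over the at most $(\ell+r)(nK)^{\ell+r}$ pairs $(T, x_T)$, together with the choice of $N$ in the statement and the substitution $\sigma = \epsilon K^{-\ell}\delta^{\ell}/5$ from Lemma~\ref{lemma:nasty}, makes the total estimation-failure probability at most $\omega/2$. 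Combining the two halves by a union bound yields $\Pr(A(\ell,\epsilon)) \ge 1 - \omega$.

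The main obstacle is handling this conditioning correctly: a priori the number of samples that fully reveal a given $T$ is itself random and could in principle be correlated with the revealed values, so one must argue carefully — using that the erasure process is independent of the sample values — that conditioning on the masks leaves the revealed $X_T$ i.i.d.\ from the true marginal, so that Hoeffding applies with exactly $N$ effective samples. Once this independence is pinned down, the remainder is the same Hoeffding-plus-union-bound calculation as in Lemma~\ref{lemma:mbound}, now run with the per-set count $N$ playing the role that the total sample count played there.
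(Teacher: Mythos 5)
Your proposal is correct and follows essentially the same route as the paper: split the failure probability into two halves of $\omega/2$, invoke Lemma~\ref{lemma:random-erasure-good} to guarantee every relevant set is fully observed at least $N$ times, and then condition on this event and rerun the Hoeffding--Lemma~\ref{lemma:nasty}--union-bound argument of Lemma~\ref{lemma:mbound} with $N$ playing the role of the sample count. If anything, you are more careful than the paper on two points it glosses over --- requiring coverage of sets of size $\ell + r$ (as Lemma~\ref{lemma:nasty} actually demands) rather than $\ell$, and justifying that conditioning on the erasure masks leaves the revealed values i.i.d.\ from the true marginals.
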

\begin{proof}
  Observe by Lemma~\ref{lemma:random-erasure-good}, taking $\epsilon =
  \omega/2$ that with probability at least $1 - \omega/2$, for every
  set $S$ with $|S| \leq \ell$ we see at least $N$ samples revealing
  all of the members of $S$. Condition on this event; now the proof is
  exactly the same as Lemma~\ref{lemma:mbound} taking $\omega' =
  \omega/2$, applying Hoeffding, Lemma~\ref{lemma:nasty}
  and taking the union bound, we see
  that event $A$ holds with probability at least $\omega/2$. Therefore
  the total probability $A$ occurs is at least $1 - \omega/2 -
  \omega/2 = 1 - \omega$.
\end{proof}
\begin{theorem}
\label{theorem:main-result-random-erasure}
Fix $\omega > 0$. Suppose we are given $m$ samples from an $\alpha, \beta$-non-degenerate Markov random field with $r$-order interactions where the underlying graph has maximum degree at most $D$ and each node takes on at most $K$ states. Suppose that
  \[ m \ge N\frac{\ell \log n + \log L + \log 2N/\omega}{p^2} \]
  where
  \[ N = \frac{60 K^{2L}}{\tau^2 \delta^{2 L}} \Big (\log(2/\omega) + \log(L + r) + (L + r)\log(nK) + \log 2\Big ). \]  
  Then with probability at least $1 - \omega$, 
  {\sc MrfNbhd} when run starting from each node $u$ recovers the correct neighborhood of $u$,
  and thus recovers the underlying graph $G$. Furthermore, each run of the algorithm takes $O(mLn^{r})$ time.
\end{theorem}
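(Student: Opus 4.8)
The plan is to mirror the proof of Theorem~\ref{theorem:main-result-formal} almost verbatim, replacing the concentration input Lemma~\ref{lemma:mbound} with its erasure counterpart Lemma~\ref{lemma:random-erasure-A}. The only genuinely new ingredient specific to the erasure model has already been isolated in Lemma~\ref{lemma:random-erasure-good} and folded into Lemma~\ref{lemma:random-erasure-A}, so the remaining work is a routine composition of already-established facts: establish the good event $A$, then feed it into the correctness lemmas.

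First I would set $\ell = L$ and $\epsilon = \tau/2$ in Lemma~\ref{lemma:random-erasure-A}. With these substitutions the constant $N$ in that lemma becomes exactly the $N$ in the theorem statement (note $15/\epsilon^2 = 60/\tau^2$ when $\epsilon = \tau/2$, and the $\log(2/\omega)$ term records the $\omega/2$ split between the two failure modes in the erasure argument). Checking that the hypothesized lower bound on $m$ meets the requirement of Lemma~\ref{lemma:random-erasure-A}, I conclude that the event $A = A(L,\tau/2)$ holds with probability at least $1-\omega$. Concretely, Lemma~\ref{lemma:random-erasure-good} guarantees that with this many erased samples every sufficiently small set of coordinates is fully revealed in at least $N$ of the samples; restricting to those fully-revealed samples when forming each empirical estimate $\empirical\nu_{u,I|S}$ reduces the situation to the full-observation Hoeffding-plus-union-bound analysis, which is precisely what makes event $A$ hold.

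Next I would condition on $A$ and invoke the correctness machinery unchanged. Since event $A$ controls $|\nu_{u,I|S} - \empirical\nu_{u,I|S}|$ simultaneously for all $u$, all $I$ with $|I| \le r-1$, and all $S$ with $|S| \le L$, Lemma~\ref{lemma:greedy-neighborhood} shows Step 2 of {\sc MrfNbhd} terminates with a set $S$ containing the full neighborhood of $u$, and Lemma~\ref{lemma:prune-neighborhood} shows Step 3 then prunes exactly the non-neighbors, leaving $S = \Gamma(u)$. Running the algorithm from every node $u$ therefore recovers every neighborhood, and hence the graph $G$. The running time bound $O(mLn^r)$ is identical to that in Theorem~\ref{theorem:main-result-formal}: Step 2 dominates, performing at most $L$ iterations, each of which scans the $O(n^{r-1})$ candidate sets $I$, with each estimate taking $O(m)$ time.

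The part that requires the most care, and the real reason the erasure model is not completely trivial, is the reduction encapsulated in Lemma~\ref{lemma:random-erasure-good}: because we never see a full sample, every empirical quantity must be built from the sub-sample on which the relevant coordinates happen to be revealed, and one must guarantee enough such sub-samples for every small set simultaneously. This is what costs the extra $1/p^2$ factor and the additional union bound over sets in the sample-complexity bound; but once it is in place, the rest of the argument is a direct transcription of the full-observation proof.
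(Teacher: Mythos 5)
Your proposal is correct and follows essentially the same route as the paper's own proof: instantiate Lemma~\ref{lemma:random-erasure-A} with $\ell = L$ and $\epsilon = \tau/2$ to establish event $A$ with probability at least $1-\omega$, then reuse Lemma~\ref{lemma:greedy-neighborhood}, Lemma~\ref{lemma:prune-neighborhood}, and the running-time analysis exactly as in Theorem~\ref{theorem:main-result-formal}. The extra detail you give (matching the constant $N$, and the reduction to fully-revealed sub-samples via Lemma~\ref{lemma:random-erasure-good}) is consistent with, and slightly more explicit than, what the paper writes.
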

\begin{proof}
  By Lemma~\ref{lemma:random-erasure-A}, given our assumption on $m$
  the event $A$ occurs with probability at least $1 - \omega$. Conditioned
  on event $A$, the algorithm returns the correct answer by the same
  argument as Theorem~\ref{theorem:main-result-formal}.
\end{proof}

\newpage

\bibliographystyle{plain}
\bibliography{bibliography}

\begin{thebibliography}{10}

\bibitem{Ab}
Pieter Abbeel, Daphne Koller, and Andrew~Y Ng.
\newblock Learning factor graphs in polynomial time and sample complexity.
\newblock {\em Journal of Machine Learning Research}, 7(Aug):1743--1788, 2006.

\bibitem{treemixtures}
Anima Anandkumar, Daniel~J Hsu, Furong Huang, and Sham~M Kakade.
\newblock Learning mixtures of tree graphical models.
\newblock In {\em Advances in Neural Information Processing Systems}, pages
  1052--1060, 2012.

\bibitem{Anandkumar}
Animashree Anandkumar, Vincent~YF Tan, Furong Huang, and Alan~S Willsky.
\newblock High-dimensional structure estimation in ising models: Local
  separation criterion.
\newblock {\em The Annals of Statistics}, pages 1346--1375, 2012.

\bibitem{Bresler}
Guy Bresler.
\newblock Efficiently learning ising models on arbitrary graphs.
\newblock In {\em Proceedings of the Forty-Seventh Annual ACM on Symposium on
  Theory of Computing}, pages 771--782. ACM, 2015.

\bibitem{BreslerMosselSly}
Guy Bresler, Elchanan Mossel, and Allan Sly.
\newblock Reconstruction of markov random fields from samples: Some
  observations and algorithms.
\newblock In {\em Approximation, Randomization and Combinatorial Optimization.
  Algorithms and Techniques}, pages 343--356. Springer, 2008.

\bibitem{history}
Stephen~G Brush.
\newblock History of the lenz-ising model.
\newblock {\em Reviews of modern physics}, 39(4):883, 1967.

\bibitem{ChowL}
C~Chow and Cong Liu.
\newblock Approximating discrete probability distributions with dependence
  trees.
\newblock {\em IEEE transactions on Information Theory}, 14(3):462--467, 1968.

\bibitem{Csi}
Imre Csisz{\'a}r and Zsolt Talata.
\newblock Consistent estimation of the basic neighborhood of markov random
  fields.
\newblock In {\em Information Theory, 2004. ISIT 2004. Proceedings.
  International Symposium on}, page 170. IEEE, 2004.

\bibitem{Dasarathy}
Gautam Dasarathy, Aarti Singh, Maria-Florina Balcan, and Jong~Hyuk Park.
\newblock Active learning algorithms for graphical model selection.
\newblock {\em J. Mach. Learn. Res}, page 199207, 2016.

\bibitem{Dasgupta}
Sanjoy Dasgupta.
\newblock Learning polytrees.
\newblock In {\em Proceedings of the Fifteenth conference on Uncertainty in
  artificial intelligence}, pages 134--141. Morgan Kaufmann Publishers Inc.,
  1999.

\bibitem{Jal}
Ali Jalali, Pradeep Ravikumar, Vishvas Vasuki, and Sujay Sanghavi.
\newblock On learning discrete graphical models using group-sparse
  regularization.
\newblock In {\em Proceedings of the Fourteenth International Conference on
  Artificial Intelligence and Statistics}, pages 378--387, 2011.

\bibitem{KleinbergT}
Jon Kleinberg and Eva Tardos.
\newblock Approximation algorithms for classification problems with pairwise
  relationships: Metric labeling and markov random fields.
\newblock {\em Journal of the ACM (JACM)}, 49(5):616--639, 2002.

\bibitem{KlivansM}
Adam Klivans and Raghu Meka.
\newblock Learning graphical models using multiplicative weights.
\newblock In {\em Manuscript}, 2017.

\bibitem{Lee}
Su-In Lee, Varun Ganapathi, and Daphne Koller.
\newblock Efficient structure learning of markov networks using l
  1-regularization.
\newblock In {\em Proceedings of the 19th International Conference on Neural
  Information Processing Systems}, pages 817--824. MIT Press, 2006.

\bibitem{Martin}
Fabio Martinelli and Enzo Olivieri.
\newblock Approach to equilibrium of glauber dynamics in the one phase region.
\newblock {\em Communications in Mathematical Physics}, 161(3):447--486, 1994.

\bibitem{MosselWW}
Elchanan Mossel, Dror Weitz, and Nicholas Wormald.
\newblock On the hardness of sampling independent sets beyond the tree
  threshold.
\newblock {\em Probability Theory and Related Fields}, 143(3):401--439, 2009.

\bibitem{boolean-functions}
Ryan O'Donnell.
\newblock {\em Analysis of Boolean Functions}.
\newblock Cambridge University Press, New York, NY, USA, 2014.

\bibitem{RavikumarWL}
Pradeep Ravikumar, Martin~J Wainwright, John~D Lafferty, et~al.
\newblock High-dimensional ising model selection using ?1-regularized logistic
  regression.
\newblock {\em The Annals of Statistics}, 38(3):1287--1319, 2010.

\bibitem{SanthanamW}
Narayana~P Santhanam and Martin~J Wainwright.
\newblock Information-theoretic limits of selecting binary graphical models in
  high dimensions.
\newblock {\em IEEE Transactions on Information Theory}, 58(7):4117--4134,
  2012.

\bibitem{Sly}
Allan Sly.
\newblock Computational transition at the uniqueness threshold.
\newblock In {\em Foundations of Computer Science (FOCS), 2010 51st Annual IEEE
  Symposium on}, pages 287--296. IEEE, 2010.

\bibitem{SlySun}
Allan Sly and Nike Sun.
\newblock The computational hardness of counting in two-spin models on
  d-regular graphs.
\newblock In {\em Foundations of Computer Science (FOCS), 2012 IEEE 53rd Annual
  Symposium on}, pages 361--369. IEEE, 2012.

\bibitem{Srebro}
Nathan Srebro.
\newblock Maximum likelihood bounded tree-width markov networks.
\newblock In {\em Proceedings of the Seventeenth conference on Uncertainty in
  artificial intelligence}, pages 504--511. Morgan Kaufmann Publishers Inc.,
  2001.

\bibitem{Valiant}
Gregory Valiant.
\newblock Finding correlations in subquadratic time, with applications to
  learning parities and juntas.
\newblock In {\em Foundations of Computer Science (FOCS), 2012 IEEE 53rd Annual
  Symposium on}, pages 11--20. IEEE, 2012.

\bibitem{RISE}
Marc Vuffray, Sidhant Misra, Andrey Lokhov, and Michael Chertkov.
\newblock Interaction screening: Efficient and sample-optimal learning of ising
  models.
\newblock In {\em Advances in Neural Information Processing Systems}, pages
  2595--2603, 2016.

\end{thebibliography}

\newpage

\appendix

\section{Appendix: Proof of Lemma~\ref{lemma:nasty}}
\begin{proof}
  Observe the left hand side of our desired inequality is bounded by
\begin{align*}
E_{R,G}\big|&\empirical{\E}_{X_S}[|\empirical{\Pr}(X_u = R,X_I = G | X_S) - \empirical{\Pr}(X_u = R | X_S)\empirical{\Pr}(X_I = G | X_S)|] \\
&- \E_{X_S}[|\Pr(X_u = R,X_I = G | X_S) - \Pr(X_u = R | X_S)\Pr(X_I = G | X_S)|]\big|
\end{align*}
So it suffices if we can bound for every $R$ and $G$ 
\begin{align*}
\Big|&\empirical{\E}_{X_S}[|\empirical{\Pr}(X_u = R,X_I = G | X_S) - \empirical{\Pr}(X_u = R | X_S)\empirical{\Pr}(X_I = G | X_S)|] \\
&\qquad- \E_{X_S}[|\Pr(X_u = R,X_I = G | X_S) - \Pr(X_u = R | X_S)\Pr(X_I = G | X_S)|]\Big| \\
&=\Big|\sum_{x_S} |\empirical{\Pr}(X_u = R,X_I = G,X_S = x_S) - \empirical{\Pr}(X_u = R | X_S = x_S)\empirical{\Pr}(X_I = G, X_S = x_S)| \\
&\qquad- |\Pr(X_u = R,X_i = G, X_S = x_S) - \Pr(X_u = R | X_S = x_S)\Pr(X_I = G, X_S = x_S)|\Big| \\
&\le \sum_{x_S} \Big||\empirical{\Pr}(X_u = R,X_I = G,X_S = x_S) - \empirical{\Pr}(X_u = R | X_S = x_S)\empirical{\Pr}(X_I = G,X_S = x_S)| \\
&\qquad- |\Pr(X_u = R,X_I = G, X_S = x_S) - \Pr(X_u = R | X_S = x_S)\Pr(X_I = G,X_S = x_S)|\Big| \\
&\le \sum_{x_S} \Big|\empirical{\Pr}(X_u = R,X_I = G,X_S = x_S) - \empirical{\Pr}(X_u = R | X_S = x_S)\empirical{\Pr}(X_I = G,X_S = x_S) \\
&\qquad- \Pr(X_u = R,X_I = G,X_S = x_S) + \Pr(X_u = R | X_S = x_S)\Pr(X_I = G,X_S = x_S)\Big| \\
&\le \sum_{x_S} |\empirical{\Pr}(X_u = R,X_I = G,X_S = x_S) - \Pr(X_u = R,X_I = G,X_S = x_S)|  \\
&\qquad +\sum_{x_S}|\empirical{\Pr}(X_u = R | X_S = x_S)\empirical{\Pr}(X_I = G,X_S = x_S) - \Pr(X_u = R | X_S)\Pr(X_I = G,X_S = x_S)| \\
&\le K^{|S|} \sigma + \sum_{x_S} |\empirical{\Pr}(X_u = R | X_S = x_S)\empirical{\Pr}(X_I = G,X_S = x_S) - \Pr(X_u = R | X_S = x_S)\Pr(X_I = G,X_S = x_S)|
\end{align*}
To bound the second term, observe
\begin{align*}
|\empirical{\Pr}&(X_u = R | X_S = x_S)\empirical{\Pr}(X_I = G,X_S = x_S) - \Pr(X_u = R | X_S)\Pr(X_I = G,X_S = x_S)| \\
&\le \empirical{\Pr}(X_u = R | X_S = x_S)|\empirical{\Pr}(X_I = G,X_S = x_S) - \Pr(X_I = G, X_S = x_S)|  \\
&\qquad \qquad \qquad + \Pr(X_I = G,X_S = x_S)|\empirical{\Pr}(X_u = R | X_S = x_S) - \Pr(X_u = R | X_S)| \\
&\le |\empirical{\Pr}(X_I = G,X_S = x_S) - \Pr(X_I = G,X_S = x_S)| + |\empirical{\Pr}(X_u = R | X_S = x_S) - \Pr(X_u = R | X_S)| \\
&\le \sigma + |\empirical{\Pr}(X_u = R | X_S = x_S) - \Pr(X_u = R | X_S)|
\end{align*}
and furthermore
\begin{align*}
|\empirical{\Pr}&(X_u = R | X_S = x_S) - \Pr(X_u = R | X_S = x_S)| = \left|\frac{\empirical{\Pr}(X_u = R, X_S = x_S)}{\empirical{\Pr}(X_S = x_S)} - \frac{\Pr(X_u = R, X_S = x_S)}{\Pr(X_S = x_S)} \right| 
\end{align*}
which in turn we can bound as
\begin{align*}
&\le \left|\frac{\empirical{\Pr}(X_u = R, X_S = x_S)}{\empirical{\Pr}(X_S = x_S)} - \frac{\Pr(X_u = R, X_S = x_S)}{\empirical{\Pr}(X_S = x_S)}\right| \\
&\quad + \left|\frac{\Pr(X_u = R, X_S = x_S)}{\empirical{\Pr}(X_S = x_S)} - \frac{\Pr(X_u = R, X_S = x_S)}{\Pr(X_S = x_S)} \right| \\
&\le \frac{\sigma}{\delta^{|S|}} + \Pr(X_u = R, X_S = x_S)\left|\frac{\Pr(X_S = x_S) - \empirical{\Pr}(X_S = x_S)}{\empirical{\Pr}(X_S = x_S) \Pr(X_S = x_S)}\right| \le \frac{\sigma}{\delta^{|S|}} + \frac{\sigma}{\delta^{|S|} - \sigma}
\end{align*}
Finally, if $\sigma < \epsilon K^{-\ell} \frac{\delta^{\ell}}{5}$
then because $|S| \le \ell$ and $\sigma < \delta^{\ell}/5 < \delta^{\ell}/2$
\begin{align*}
K^{|S|} \sigma + \sum_{x_S} \left(\sigma + \frac{\sigma}{\delta^{|S|}} + \frac{\sigma}{\delta^{|S|} - \sigma}\right) 
&= K^{|S|} \sigma \left(2 + \frac{1}{\delta^{|S|}} + \frac{1}{\delta^{|S|} - \sigma}\right) \\
&< K^{|S|} \sigma \left(\frac{2}{\delta^{|S|}} + \frac{1}{\delta^{|S|}} + \frac{2}{\delta^{|S|}}\right)
< \epsilon
\end{align*}
This completes the proof.
\end{proof}

\end{document}